\newcommand{\E}{\mathbb{E}}
\newcommand{\R}{\mathbb{R}}
\newcommand{\cP}{\mathcal{P}}
\newcommand{\cX}{\mathcal{X}}
\newcommand{\hXi}{\widehat{\Xi}}
\newcommand{\1}{\mathbb{I}}
\newcommand{\knn}{\mathsf{knn}}
\DeclareMathOperator*{\argmax}{arg\,max}
\newtheorem{lemma}{Lemma}
\newtheorem{thm}{Theorem}
\newtheorem{cor}{Corollary}
\theoremstyle{definition}
\newcommand{\norm}[1]{\lVert#1\rVert}
\newcommand{\Lip}{\textnormal{Lip}}
\newcommand{\dual}{\textnormal{dual}}
\newcommand{\hP}{\widehat{P}}
\newcommand{\supp}{\mathrm{supp\,}}
\newcommand{\txi}{\tilde{\xi}}
\newcommand{\hxi}{\hat{\xi}}
\newcommand{\tpi}{\tilde{\pi}}
\newcommand{\cM}{\mathcal{M}}
\newcommand{\Rad}{\mathfrak{R}}
\definecolor{longhorn}{rgb}{0.8, 0.33, 0.0}
\begin{document}

\title{\Large \bf Distributionally Robust Weighted $k$-Nearest Neighbors}

\author{\normalsize Shixiang Zhu, \; Liyan Xie, \; Minghe Zhang, \; Rui Gao, \; Yao Xie}
\date{}

\maketitle

\begin{abstract}
    Learning a robust classifier from a few samples remains a key challenge in machine learning. 
    A major thrust of research has been focused on developing $k$-nearest neighbor ($k$-NN) based algorithms combined with metric learning that captures similarities between samples. 
    When the samples are limited, robustness is especially crucial to ensure the generalization capability of the classifier.
    In this paper, we study a minimax distributionally robust formulation of weighted $k$-nearest neighbors, which aims to find the optimal weighted $k$-NN classifiers that hedge against feature uncertainties.
    We develop an algorithm, \texttt{Dr.k-NN}, that efficiently solves this functional optimization problem and features in assigning minimax optimal weights to training samples when performing classification.
    These weights are class-dependent, and are determined by the similarities of sample features under the least favorable scenarios.
    When the size of the uncertainty set is properly tuned, the robust classifier has a smaller Lipschitz norm than the vanilla $k$-NN, and thus improves the generalization capability. 
    We also couple our framework with neural-network-based feature embedding. We demonstrate the competitive performance of our algorithm compared to the state-of-the-art in the few-training-sample setting with various real-data experiments. 
\end{abstract}

\section{Introduction}

Machine learning has been proven successful in data-intensive applications but is often hampered when the data set is small. For example, in breast mammography diagnosis for breast cancer screening \cite{Aresta2019}, the diagnosis of the type of breast cancer requires specialized analysis by pathologists in a highly time- and cost-consuming task and often leads to non-consensual results. As a result, labeled data in digital pathology are generally very scarce; so do many other applications. 

\begin{figure}
\centering
\includegraphics[width=.6\linewidth]{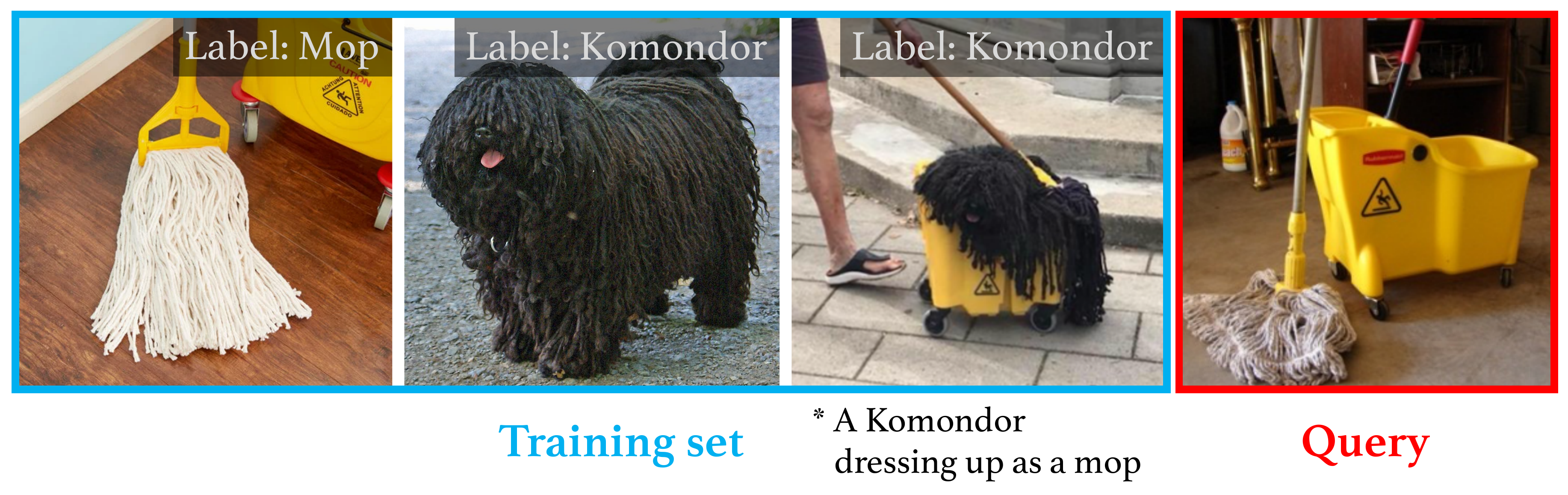}
\caption{
{Motivating example: a small training set of three image-label pairs: the first image is a mop; the second image is a dog; the third image looks like a mop but is, in fact, a dog (dressing up as a mop). 
The query (the last image) is ``closer'' to the third one and more likely to be misclassified as a dog if we use distance-weighted $k$-NN.}
}
\label{fig:intro-example}
\end{figure}

In this paper, we aim to tackle the general multi-class classification problem when only very few training samples are available for each class \cite{Wang2019}. Evidently, $k$-Nearest Neighbor ($k$-NN) algorithm \cite{Altman1992} is a natural idea to tackle this problem and shows promising empirical performances.
Notable contributions, including seminal work \cite{Goldberger2005} and the follow-up non-linear version \cite{Salakhutdinov2007}, go beyond the vanilla $k$-NN and propose neighborhood component analysis (NCA). 
NCA learns a distance metric that minimizes the expected leave-one-out classification error on the training data using a stochastic neighbor selection rule. 
Some recent studies in few-shot learning utilize the limited training data using a similar idea, such as matching network \cite{Vinyals2016} and prototypical network \cite{Snell2017}. 
They are primarily based on distance-weighted $k$-NN, which classifies an unseen sample (aka.~\emph{query}) by a weighted vote of its neighbors and uses the distance between two data points in the embedding space as their weights.

The classification performance of weighted $k$-NN critically depends on the choice of weighting scheme.
%In distance-weighted $k$-NN,
The distance measuring the similarity between samples is typically chosen by metric learning, where a task-specific distance metric is automatically constructed from supervised data \cite{Koch2015, Plotz2018, Vinyals2016}. However, it has been recognized that they may be not robust to the \emph{few-training-samples} scenario, where an ``outlier'' may greatly deteriorate the performance. An example to illustrate this issue is shown in Figure~\ref{fig:intro-example}. The training set with only three labeled samples includes two categories we want to classify: mop and Komondor. As we can see, the query image is visually closer to the third sample and thus more likely to be misclassified as a Komondor. Here the third sample in the training set is an ``outlier'', since it is a Komondor dressing up as a mop, and it misleads the metric learning model to capture irrelevant details (e.g., the bucket and the mop handle) for the Komondor category. Such a problem can become even severe when the sample size is small.

% \begin{wrapfigure}{r}{0.5\textwidth}
\begin{figure}[!t]
\centering
\begin{subfigure}[h]{.3\linewidth}
\includegraphics[width=\linewidth]{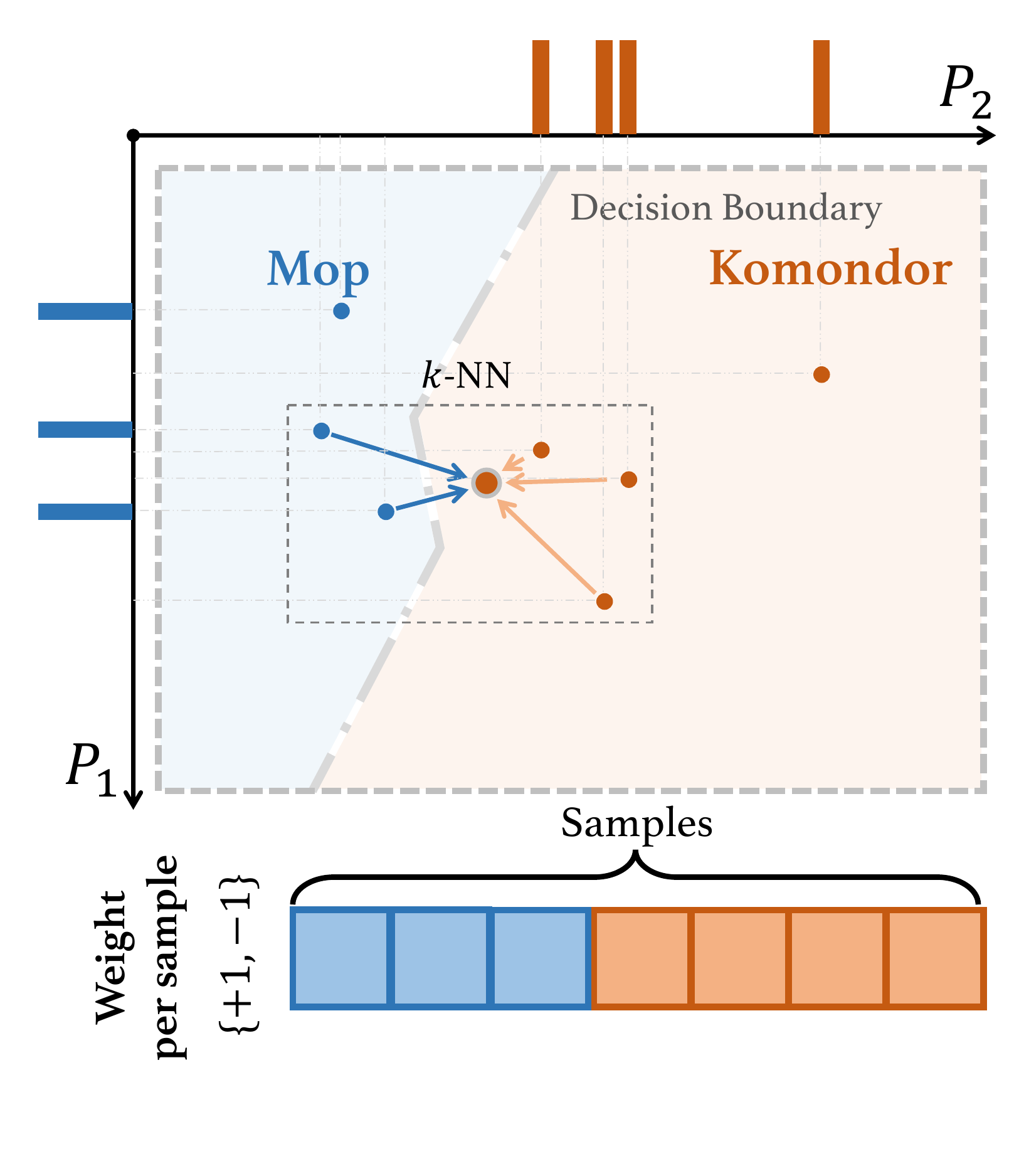}
\caption{Vanilla $k$-NN}
\end{subfigure}
\begin{subfigure}[h]{.3\linewidth}
\includegraphics[width=\linewidth]{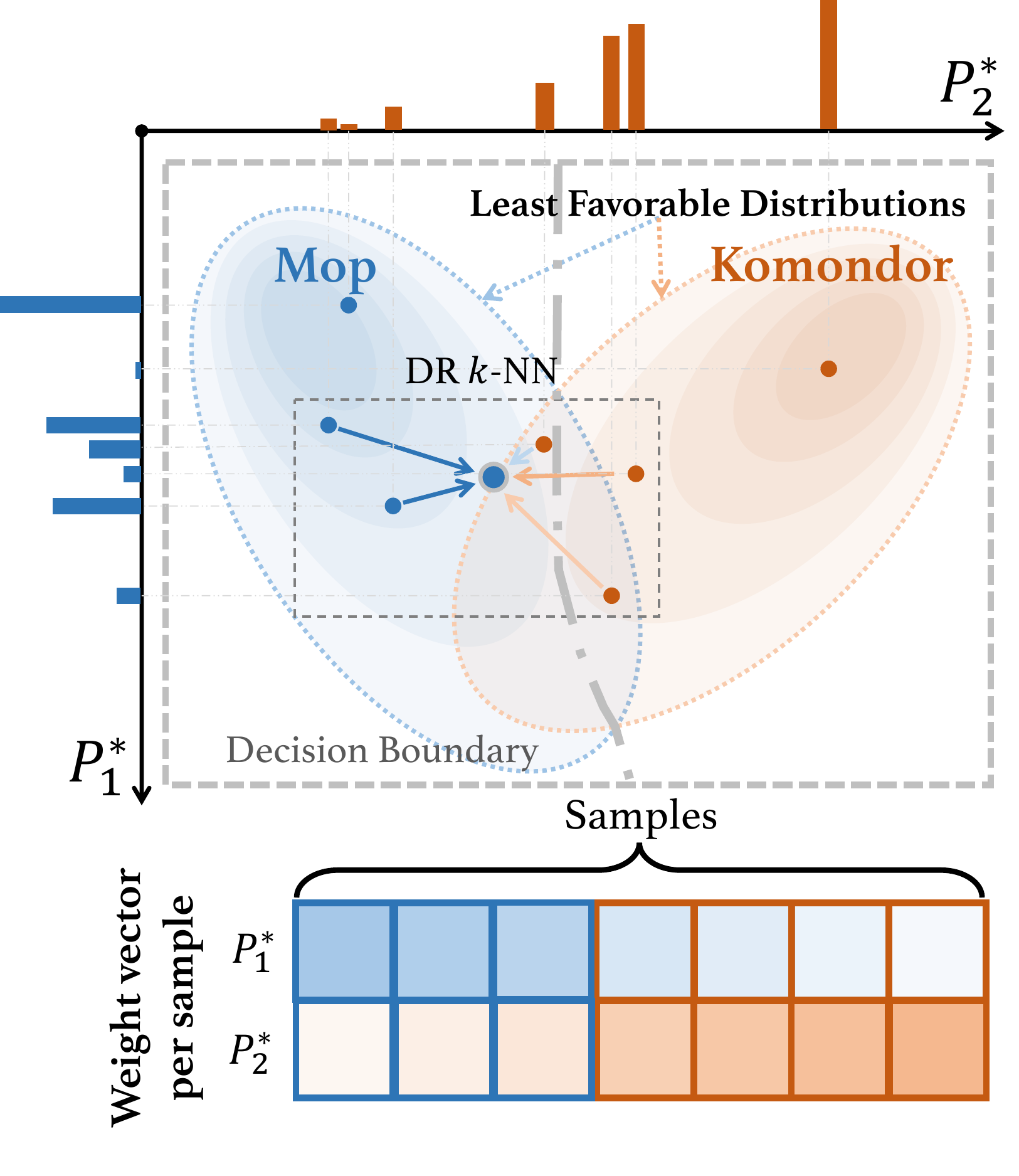}
\caption{\texttt{Dr.k-NN}}
\end{subfigure}
\caption{
An illustrative comparison of \texttt{Dr.k-NN} and vanilla $k$-NN. Each colored dot is a training sample, where the color indicates its class-membership. The horizontal/vertical bar represents the probability mass of one training sample under the distribution $P_1$, $P_2$, respectively.
}
\label{fig:illustration}
\end{figure}
% \end{wrapfigure}

The discussion above highlights the importance of choosing a good weighting scheme in weighted $k$-NN.
To develop algorithms that are more robust in the few-training-samples settings, we propose a new formulation of distributionally robust weighted $k$-nearest neighbors.
More specifically, for a given set of features of training samples, we solve a Wasserstein distributionally robust optimization problem that finds the minimax optimal weight functions for the $k$-nearest neighbors.
This infinite-dimensional functional optimization over weight functions presents a unique challenge for which existing literature on distributionally robust optimization do not consider.
To tackle this challenge, we first consider a relaxed problem that optimizes over all randomized classifiers, which turns out to admit a finite-dimensional convex programming reformulation in spite of being infinite-dimensional (Theorem \ref{thm:duality}).
Next, we show that there is a weighted $k$-NN classifier achieving the same risk and
shares the same least favorable distributions (LFDs) as the robust classifier (Theorem \ref{thm:robust-knn}).
Thereby we prove the optimality of such weighted $k$-NN classifier for the original distributionally robust weighted $k$-nearest neighbors problem.
Furthermore, we derive the generalization bound of the robust weighted $k$-NN classifier by relating it to a Lipschitz regularization problem (Theorem \ref{thm:lip}), and showing that its Lipschitz norm can be smaller than the vanilla $k$-NN classifier and thus has a better control on the generalization gap (Corollary \ref{cor}). 

Based on these theoretical results, we proposed a novel algorithm called \texttt{Dr.k-NN}.
Unlike the traditional distance-weighted $k$-NN that uses the same weight for all label classes, our algorithm introduces a vector of weights, one for each class, for each sample
in $k$-NN and performs a weighted majority vote.
These weights are determined from the LFDs and reveal the significance of each sample in the worst case, thereby contributing effectively to final decision making.
% Intuitively, the weight vectors are learned from training samples to represent their significance in different categories. 
An example is illustrated in Figure~\ref{fig:illustration}. 
Further, using differentiable optimization \cite{Amos2017,Agrawal2019}, we incorporate a neural network into the minimax classifier that jointly learns the feature embedding and the minimax optimal classifier. 
Numerical experiments show that our algorithm can effectively improve the multi-class classification performance with few training samples on various data sets. 

% We show that this simple intuition of introducing weights in $k$-NN has a deeper connection with the {\it distributional robust minimax classification}: the optimal weight vector correspond to the least favorable distributions (LFDs) for an optimal classifier.

\paragraph{Related work}
% Recently, there has been much interest in the few-shot problem, see \cite{Wang2019} for a recent survey. Many related machine learning approaches have been proposed, such as meta-learning \cite{Finn2017, Koch2015, Ravi2017, Santoro2016}, embedding learning \cite{Bertinetto2016, Snell2017, Sung2018, Tang2010, Vinyals2016}, generative modeling \cite{Li2006, Salakhutdinov2012}, and adversarial learning \cite{zhang2018metagan}. The main idea of our work is mostly related to the metric learning \cite{Goldberger2005, Salakhutdinov2007, Snell2017, Vinyals2016, Koch2015, Plotz2018}, which essentially translates the hidden information carried by the limited data into a distance metric. For example, seminal work \cite{Goldberger2005} and the follow-up non-linear version \cite{Salakhutdinov2007} propose neighborhood component analysis (NCA) that learns a distance metric which optimizes the expected leave-one-out classification error on the training data when used with a stochastic neighbor selection rule.

Recently, there has been much interest in multi-class classification with few training samples, see \cite{Wang2019} for a survey. 
The main idea of our work is related to metric learning \cite{Plotz2018, Goldberger2005, Salakhutdinov2007}, which essentially translates the hidden information carried by the limited data into a distance metric, and has been widely adopted in few-shot learning and meta learning \cite{Finn2017, Koch2015, Snell2017, Vinyals2016, Li2006}.
% \cite{Ravi2017, Finn2017, Koch2015, Santoro2016, Snell2017, Sung2018, Tang2010, Vinyals2016, Bertinetto2016, Li2006, Salakhutdinov2012, zhang2018metagan}
However, unlike few-shot and meta learning, where the goal is to acquire meta knowledge from a large number of observed classes and then predict examples from unobserved classes, we focus on attacking a specific general classification problem where the number of categories is fixed but labeled data are scarce. In this paper, we take a different probabilistic approach to exploit information from the data: 
%differing from the metric learning,
we construct an uncertainty set for distributions of each class based on the Wasserstein distance.  
% In this way, we can find the least favorable distributions (LFDs) supported on the training set by solving a minimax problem.
% These LFDs reveal the significance of each sample in the worst case, thereby contributing effectively to final decision making.

% Recently, landmark works \cite{Snell2017, Vinyals2016} proposes the so-called matching network, which is analogous to the attention mechanism, where the score can be viewed as a sort of distance between two points. Compared to NCA that only measures the pairwise distance, the matching network maximizes the utilization of the whole data set, which is more amenable to the one/few-shot learning. 

Wasserstein distributionally robust optimization \cite{esfahani2018data,abadeh2015distributionally,blanchet2019quantifying,Gao2016,sinha2017certifying,blanchet2019robust,shafieezadeh2019regularization,gao2017wasserstein} is an emerging paradigm for statistical learning; see \cite{kuhn2019wasserstein} for a recent survey. Our work is mostly related to \cite{Gao2018}, a framework for Wasserstein robust hypothesis testing, but is different in three important ways.
First, we focus on multi-class classification, while \cite{Gao2018} only studied two hypotheses.
Second, we focus on directly minimizing the mis-classification error while \cite{Gao2018} used a convex relaxation for the 0-1 loss.
Third, we analyze the generalization bound while \cite{Gao2018} does not.
Fourth, while \cite{Gao2018} requires sample features as an input, we develop a scalable algorithmic framework to simultaneously learn the optimal feature extractor parameterized by neural networks and robust classifier to achieve the best performance. 
A recent work \cite{Chen2019} studies distributionally robust $k$-NN regression. Note that regression and classification are fundamentally different as different performance metrics are used. In \cite{Chen2019} the objective is to minimize the mean square error, whereas in our work we minimize  classification errors.
Another well-known work on optimal weighted nearest neighbor binary classifier \cite{samworth2012optimal} assigns one weight to each sample; the optimal weights minimize asymptotic expansion for the excess risk (regret). 
% This is a very different approach from ours, which considers binary classifier to minimize the risk and each training sample is associated with one weight computed using density function; 
In contrast, we consider minimax robust multi-class classification, each training sample is associated with different weights for different classes.

\section{Distributionally Robust $k$-NN}
\label{sec:drlf}

In this section, we present our model. We first define weighted $k$-NN classifier in Section \ref{sec:multi-classification}, then present the proposed framework of distributionally robust $k$-NN problem in Section \ref{sec:rob kNN}. 
%The theoretical analysis is detailed in Section \ref{sec:LFD}. The main idea is use the general distributionally robust classification problem as an intermediate step to solve the distributionally robust $k$-NN problem. 
%We show that the distributionally robust classification is  equivalent to \eqref{eq:LFD-problem} in  Section \ref{sec:LFD}.
%Thereby the least favorable distributions (LFDs) suggest a weighted majority vote for $k$-NN.

%$\phi(\cdot) \mathcal X \rightarrow \xi$: is a universal approximator (we will discuss the algorithmic implementation of $\phi$ in Section...)
%
%Below, the solution procedure can be outlined as three steps: (1) Interchange $\min$ and $\max$ in \eqref{eq:minimax-problem} by strong duality; (2) Solve $\max_P \min_\pi$ to get LFDs $\{P_1^*,\ldots,P_M^*\}$; (3) Find the optimal classifier under distributions $\{P_1^*,\ldots,P_M^*\}$.

%There are two key ingredients: 
%(1) find the weights $P_m^*, m = 1,\dots,M$ on the training set $\widehat\xi$; and 
%(2) search for the optimal decision $\pi$ by solving minimization in \eqref{eq:minimax-problem}. In this section, we will tackle these two challenges by presenting a distributionally robust learning framework at length. 

\subsection{Weighted $k$-NN classifier}\label{sec:multi-classification}

Let $\{(x^1,y^1),\ldots,(x^n,y^n)\}$ be a set of training samples, where $x^i$ denotes the $i$-th data sample in the observation space $\cX$, and $y^i\in\mathcal Y:=\{1,\ldots,M\}$ denotes the class (label) of the $i$-th data sample.  
% Suppose there are $M$ classes, each of which has $n_m$ training samples $x^1_m,\dots,x^{n_m}_m$. 
Let $\phi:\cX\to\Xi$ be a feature extractor that embeds samples to the feature space $\Xi$ (in Section\,\ref{sec:learning} we will train a neural network to learn $\phi$).
Denote the sample feature vectors and the empirical support as:
% \begin{align*}
% & \widehat{\xi}_m^i:=\phi(x_m^i), \quad i=1,\ldots,n_m,\ m=1,\ldots,M.
% % & \widehat\Xi_m := \{\widehat{\xi}_m^1,\ldots,\widehat{\xi}_m^{n_m}\}, & 
% \end{align*}
% To ease the notation, we also label all training samples from 1 to $n:=\sum_{m=1}^M n_m$ as $\{x^1,\ldots,x^n\}$ and define 
\[
\xi^i:=\phi(x^i),\ i=1,\ldots,n, \quad \hXi:=%\cup_{1\leq m\leq M}\widehat{\xi}_m.
\{\xi^1,\ldots,\xi^n\}.
\]
% Let $\cD$ be the distribution over $\Xi \times \mathcal Y$ that $S$ (now represented as (\xi^1,y^1),\ldots,(\xi^n,y^n))$ are sampled from. Let $\cD_\Xi$ be the marginal distribution of feature vectors $\xi$ and let $\eta:\Xi \rightarrow \Delta_M$ be the conditional probability over the labels, i.e., $\eta_m(\xi) = \mathbb P[y = m | \xi]$. Note that in such probabilistic setting, the Bayes test, which accepts the class that enjoys the largest conditional probability, is optimal in terms of minimizing the 0-1 risk.
Let 
$$S=\{(\xi^1,y^1),\ldots,(\xi^n,y^n)\}.$$
Define empirical distributions: 
% \[
% \widehat{P}_m :=\frac{1}{n_m}\sum_{i=1}^{n_m} \delta_{\widehat\xi_m^i},\quad  m=1,\dots,M,\] 
\[
\widehat{P}_m :=\frac{1}{|\{i:y^i=m\}|}\sum_{i=1}^{n} \delta_{\xi^i}\1\{y^i=m\},\,  m=1,\dots,M,\] 
where $\delta$ denotes the Dirac point mass, $|\cdot|$ denotes the cardinality of a set, and $\1$ denotes the indicator function.

Let $\pi:\Xi\to\Delta_M$ be a {\it randomized} classifier that assigns class $m\in\{1,\ldots,M\}$ with probability $\pi_m(\xi)$ to a query feature vector $\xi\in\Xi$, where $\Delta_M$ is 
%(motivated by the ``randomized'' strategy in \cite{Juditsky2020}).
the probabilistic simplex 
$
\Delta_M=\{ \pi\in\R^M_{+}:\; \sum_{m=1}^M \pi_m = 1\}$.
It is worth mentioning that the randomized test is more general than the commonly seen deterministic test. In particular, the random classifier $\pi$ reduces to the deterministic test if for any $\xi$, there exists a $m$ such that $\pi_m(\xi)=1$. 
Suppose the features in each class $m$ follows a distribution $P_m$. We define the \emph{risk} of a classifier $\pi$ as the total error probabilities\footnote{To ease the exposition we consider only equal weights over the error probabilities, but our results can be easily generalized to any weighted average of error probabilities.} 
\begin{equation}
\label{eq:risk}
\Psi(\pi; P_1, \dots, P_M) \coloneqq \sum_{m=1}^M \mathbb{E}_{\xi \sim P_m} [ 1 - \pi_m(\xi) ].
\end{equation}

Recall that the vanilla $k$-NN is performed as follows. Let $c:\Xi\times\Xi\to\R_{+}$ be a metric on $\Xi$ that measures distance between features. For any given query point $\xi$, let $\tau_1^S(\xi),\ldots,\tau_n^S(\xi)$ be a reordering of $\{1,\ldots,n\}$ according to their distance to $\xi$, i.e., $c(\xi,\xi^{\tau_i^S(\xi)}) \leq c(\xi,\xi^{\tau_{i+1}^S(\xi)})$ for all $i<n$, where the tie is broken arbitrarily. Here the superscript $S$ indicates the dependence on the sample $S$. In vanilla $k$-NN, we compute the votes as
\begin{equation}
{p}_m(\xi)\coloneqq \sum_{i=1}^k \frac{1}{k}\1\{y^{\tau_i^S(\xi)}=m\}, \, m=1, \dots, M.
\label{eq:vanilla-knn}
\end{equation}
The vanilla $k$-NN decides the class for $\xi$ %either randomly according to $p_m(\xi)$ or 
by the majority vote, i.e., accept the class  $\argmax_{1\leq m\leq M} p_m(\xi)$.  

To define a weighted $k$-NN,
let us replace the
equal weights in \eqref{eq:vanilla-knn} by an arbitrary weight function $w_m:\Xi\times\Xi\to\mathbb R_{+}$ for each class $m=1,\ldots,M$:
% \begin{equation}\label{eq:weighted-knn}
% {p}_m(\xi)\coloneqq \sum_{i=1}^k w_m(\xi,\xi^{i(\xi)}).
% \end{equation}
\begin{equation}\label{eq:weighted-knn}
{p}_m(\xi)\coloneqq \sum_{i=1}^k w_m(\xi,\xi^{\tau_i^S(\xi)}),
\end{equation}
and use a shorthand notation $w:=(w_1,\ldots,w_M)$.
In the sequel, we define a general tie-breaking rule as follows. For any $\xi$, denote $\mathcal M_0(\xi):=\argmax_{1\leq m\leq M}\ p_m(\xi)$. When $|\mathcal M_0(\xi)|>1$, there is a tie at $\xi$. We denote $\pi_m(\xi)$ as the probability of accepting class $m$ for $m\in \mathcal M_0(\xi)$ and we have $\sum_{m\in \mathcal M_0(\xi)} \pi_m(\xi) = 1$.
% \begin{equation}\label{eq:tie-break}
% m_0(\xi) := \min\Big\{m':\;m'\in\argmax_{1\leq m\leq M}\ p_m(\xi)\Big\},
% \end{equation}

We define a \emph{weighted $k$-NN classifier} $\pi^{\knn}(\xi;k,w):\Xi\to\Delta_M$ as:
\begin{equation}\label{eq:tie-break}
 \pi^{\knn}_{m}(\xi;k,w) =\left\{
    \begin{array}{ll}
        \pi_m(\xi), & m \in \mathcal M_0(\xi), \\
        0, & \hbox{otherwise.}
    \end{array}\right.   
\end{equation}
A weighted $k$-NN classifier involves two parameters: number of nearest neighbors $k$ and weighting scheme $w$. Particularly, $w_m(\xi,\xi^i)=\1\{y^i = m \}$ recovers the vanilla $k$-NN, and $w_m(\xi,\xi^i)=\1\{y^i = m\}/c(\xi,\xi^i)$ recovers the distance-based weighted $k$-NN. 
Note that our definition \eqref{eq:weighted-knn} allows different weighting schemes for different classes, which is more general than the standard weighted $k$-NN.

The goal is to find the optimal  weighted $k$-NN classifier $\pi^{\knn}(\cdot;k,w)$ such that the risk $\Psi$ as defined in \eqref{eq:risk} is minimized. 
Since the underlying true distributions are unknown, the commonly used loss function is the empirical loss, i.e., substitute the empirical distributions $\widehat P_m$ into the risk function \eqref{eq:risk}. This leads to the following optimization problem:
\begin{equation}\label{problem:weighted knn}
	\min_{\substack{ 1\leq k\leq n \\ w_m:\Xi\times\Xi\to\R_{+},\,1\leq m\leq M} }~ \Psi(\pi^{\knn}(\cdot;k,w); \widehat{P}_1, \dots, \widehat{P}_M).
\end{equation}
It is worth mentioning that this minimization problem is an {\it infinite-dimensional} functional optimization, since the weighting schemes $w$ is a function on $\Xi\times\Xi$.

\subsection{Distributionally robust $k$-NN}\label{sec:rob kNN}

For few-training-sample setting, the empirical distributions might not be good estimates for the true distribution since the sample size is small. To hedge against distributional uncertainty, we propose a  distributionally robust counterpart of the weighted $k$-NN problem defined in the previous subsection. 
Specifically, suppose each class $m$ is associated with a distributional uncertainty set $\cP_m$, which will be specified shortly.
Given $\cP_1,\ldots,\cP_M$,
define the {\it worst-case risk} of a classifier $\pi$ as the worst-case total error probabilities 
\[
\label{eq:worst-risk}
\max_{P_m\in\cP_m,1\leq m\leq M} \Psi(\pi; P_1, \dots, P_M),
\]
where $\Psi$ is defined in \eqref{eq:risk}.
% }

We consider the following distributionally robust $k$-NN problem that finds the optimal weighted $k$-NN classifier minimizing the worst-case risk:
\begin{equation}\label{eq:robust-knn}
   \min_{\substack{ 1\leq k\leq n \\ w_m:\Xi\times\Xi\to\R_{+}\\ 1\leq m\leq M } } \max_{\substack{P_m\in\cP_m\\1\leq m\leq M}}\;
   \Psi(\pi^{\knn}(\cdot;k,w); P_1, \dots, P_M). 
\end{equation}
Here the optimal solution $P_1^*,\ldots,P_M^*$ to the inner maximization problem is also called {\it least favorable distributions (LFD)} in statistics literature \cite{huber1965robust}. We summarize the architecture of the proposed distributionally robust $k$-NN framework in Figure~\ref{fig:architecture}, more details are provided in Section~\ref{sec:knn}.

Now we describe the uncertainty set $\cP_m$.
% First, since we are going to use the worst-case distributions to re-weight the limited samples, we will restrict the support of every distribution in $\cP_m$ to $\hXi$, the set of empirical points.
% to be revised
First, since we are going to re-weight the training samples to build the classifier, we restrict the support of every distribution in $\cP_m$ to $\hXi$, the set of empirical points.
Second, the uncertainty set is data-driven, containing the empirical distribution $\widehat{P}_m$ and distributions surrounding its neighborhood.
Third, to measure the closeness between distributions, we choose the Wasserstein metric of order 1 \cite{villani2008optimal}, defined as
$$
 \mathcal{W}(P, P') \coloneqq \min_\gamma\;  \mathbb{E}_{(\xi, \xi') \sim \gamma} \left[ c(\xi, \xi') \right]$$ for any two distributions $P$ and $P'$ on $\Xi$,
where the minimization of $\gamma$ is taken over the set of all probability distributions on $\Xi \times \Xi$ with marginals $P$ and $P'$. 
The main advantage of using Wasserstein metric is that it takes account of the {\it geometry} of the feature space by incorporating the metric $c(\cdot,\cdot)$ in its definition. 
%It turns out the LFDs induced by problem \eqref{eq:LFD-problem} tend to put more weights on samples points that are resembling to each other in the feature space.
Given the empirical distribution $\widehat{P}_m$ for $m=1,\ldots M$, we define
% \[
%   \mathcal{P}_m \coloneqq \big\{P \in \mathscr{P}(\xi): \mathcal{W}(P, Q_m) \le \vartheta_m \big\},~m=1,\dots,M,
% \]
\begin{equation}
\begin{aligned}
\label{eq:was_set}
  \mathcal{P}_m \coloneqq \big\{P_m \in \mathscr{P}(\hXi): \,  \mathcal{W}(P_m, \widehat{P}_m) \le \vartheta_m \big\},
\end{aligned}
\end{equation}
where $\mathscr{P}(\hXi)$ denotes the set of all probability distributions on $\hXi$; $\vartheta_m \geq 0$ specifies the size of the uncertainty set for the $m$-th class that specifies the amount of deviation we would like to control.

%can be explained as follows: for any weight $Q$ and an integer $k$, we can construct the $k$-NN classifier $\pi^{\knn}(\cdot;k,Q)$ based on \eqref{eq:knn} (replacing the $P_m^*$ there by $Q_m$); the worst-case risk of  $\pi^{\knn}(\cdot;k,Q)$ is defined the same as \eqref{eq:worst-risk} as the maximum risk over the uncertainty sets; thus, the optimal solution to \eqref{eq:robust-knn} leads to the distributionally robust $k$-NN classifier. 
% Here $\hat\mu_m$ denotes the empirical distribution on data space $\mathcal X$; $\mathfrak{M}^{\phi}_m(\hat\mu_m;\vartheta_m)$ is the uncertainty set defined through Wasserstein set based on the cost function $c^{\phi}(x,x'):=c(\phi(x),\phi(x'))$ in data space, centering at $\hat\mu_m$ and with radius $\vartheta_m$; $\phi_{\#}$ denote the pushforward measure and the resulted $\phi_{\#}\hat\mu_m$ is a measure on the feature space $\widehat{\xi}$. 

\begin{figure*}[!t]
\centering
\includegraphics[width=\linewidth]{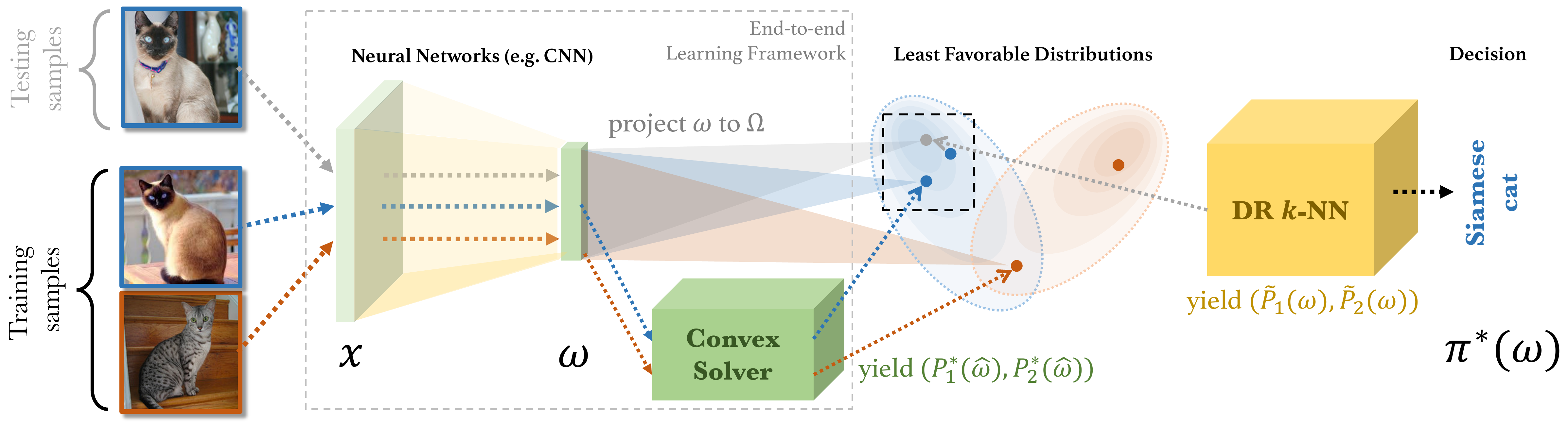}
\caption{
An overview of the end-to-end learning framework, which consists of two cohesive components: (1) an architecture that is able to produce feature embedding $\xi$ and least favorable distributions $P_m^*$ for training set; (2) an \texttt{Dr.k-NN} makes decisions for any unseen sample $\xi$ based on the estimated weight vector $\widetilde p_m(\xi)$ (probability mass on least favorable distributions).}
\label{fig:architecture}
\end{figure*}

\section{Theoretical Properties}\label{sec:LFD}

In this section, we analyze the computational tractability and statistical properties of the proposed distributionally robust weighted $k$-NN classifier found in \eqref{eq:robust-knn}. All proofs are delegated to Appendix~\ref{append:proof-strong-duality}.

\subsection{Robust Classification}\label{sec:robust classification}

Observe that similar to \eqref{problem:weighted knn}, the formulation \eqref{eq:robust-knn} is also an infinite-dimensional functional optimization.
Let us first relate it to a %(infinite-dimensional) 
relaxed robust classification problem, which turns out to be more tractable.

Consider the following minimax robust classification problem over all randomized classifiers $\pi$ (recalling $\Delta_M$ is the probability simplex in $\R_{+}^M$):
% }
\begin{equation}
    \underset{\pi:\Xi\to \Delta_M}{\min}~\max_{P_m\in\cP_m,1\leq m\leq M}~\Psi(\pi; P_1, \dots, P_M).
    \label{eq:minimax-problem}
\end{equation}

Yet still, \eqref{eq:minimax-problem} is an infinite-dimensional functional optimization, since we are optimizing over the set of all randomized classifiers.
We establish the following theorem stating a finite-dimensional convex programming reformulation for the problem \eqref{eq:minimax-problem}.
%that the re-weighting vectors $P_1^*,\ldots,P_M^*$ solved in \eqref{eq:LFD-problem} are exactly the LFDs to problem \eqref{eq:minimax-problem} with uncertainty set \eqref{eq:was_set}.
\begin{thm}\label{thm:duality}
For the uncertainty sets defined in \eqref{eq:was_set}, the least favorable distribution of problem \eqref{eq:minimax-problem} can be obtained by solving the following problem:
\begin{equation}
\begin{aligned}
    \underset{\begin{subarray}{c}
          p_1, \dots, p_M \in \mathbb{R}^n_+\\
          \gamma_1, \dots, \gamma_M \in \mathbb{R}^{n\times n}_+ 
    \end{subarray}}{\min} 
    &~ \sum_{i=1}^n \max_{1\leq m\leq M} p_m^i\\
    \text{subject to\quad} 
    &~\sum_{i=1}^n \sum_{j=1}^n \gamma_m^{i,j} c(\xi^i, \xi^j) \le \vartheta_m,\\
    %&~m=1,\dots,M,\\
    &~\sum_{i=1}^n \gamma_m^{i,j} = \widehat{P}_m(\xi^j), \quad  \sum_{j=1}^n \gamma_m^{i,j} = p_m^i,\\
    &~\forall 1\leq i, j\leq N,\ 1\leq m\leq M.
    \label{eq:LFD-problem}
\end{aligned}
\end{equation}
%, and the optimal value of \eqref{eq:minimax-problem} equals the negative of the optimal value of \eqref{eq:LFD-problem} plus $M$. 
\end{thm}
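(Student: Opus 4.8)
The plan is to turn the infinite-dimensional saddle-point problem \eqref{eq:minimax-problem} into a finite bilinear game and then read off the stated program from its value. First I would exploit the fact that every feasible $P_m\in\cP_m$ is supported on the finite set $\hXi$, so the risk $\Psi(\pi;P_1,\dots,P_M)$ depends on $\pi$ only through its values at the $n$ sample points; restricting $\pi$ to $\hXi$ is therefore without loss of generality. Writing $p_m^i:=P_m(\xi^i)$ and $\pi_m^i:=\pi_m(\xi^i)$, each $P_m$ is encoded by a vector $p_m\in\R^n_+$ and each response by $\pi^i:=(\pi_1^i,\dots,\pi_M^i)\in\Delta_M$. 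Substituting into \eqref{eq:risk} and using $\sum_i p_m^i=1$ gives the bilinear expression
$$
\Psi=\sum_{m=1}^M\sum_{i=1}^n p_m^i\,(1-\pi_m^i)=M-\sum_{i=1}^n\sum_{m=1}^M p_m^i\,\pi_m^i,
$$
which is linear in $(p_m)$ for fixed $\pi$ and linear in $(\pi^i)$ for fixed $P$.

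Second, I would interchange the min and the max. The outer feasible set $\prod_i\Delta_M$ is convex and compact; each $\cP_m$ is a closed subset of the probability simplex on $\hXi$ (cut out by the Wasserstein constraint) and hence also convex and compact, so $\prod_m\cP_m$ is convex and compact. Since $\Psi$ is bilinear, von Neumann's (or Sion's) minimax theorem applies and yields a saddle point together with $\min_\pi\max_P\Psi=\max_P\min_\pi\Psi$. The maximizing $P^\ast=(P_1^\ast,\dots,P_M^\ast)$ is exactly the LFD the theorem asks for.

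Third, I would evaluate the inner minimization and encode the Wasserstein constraint. For fixed $P$, minimizing $\Psi$ is equivalent to maximizing $\sum_i\langle p^i,\pi^i\rangle$ over $\pi^i\in\Delta_M$, which is solved coordinatewise by placing all mass on an index in $\argmax_{m}p_m^i$; hence $\min_\pi\Psi=M-\sum_i\max_m p_m^i$ and $\max_P\min_\pi\Psi=M-\min_P\sum_i\max_m p_m^i$. Using the Kantorovich (coupling) definition of $\mathcal{W}$ on the finite support $\hXi$, the constraint $\mathcal{W}(P_m,\widehat{P}_m)\le\vartheta_m$ holds iff there exists $\gamma_m\in\R^{n\times n}_+$ with marginals $\sum_j\gamma_m^{i,j}=p_m^i$ and $\sum_i\gamma_m^{i,j}=\widehat{P}_m(\xi^j)$ and transport cost $\sum_{i,j}\gamma_m^{i,j}c(\xi^i,\xi^j)\le\vartheta_m$. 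Introducing these couplings as extra decision variables and minimizing jointly over $(p_m,\gamma_m)$ turns $\min_P\sum_i\max_m p_m^i$ into exactly program \eqref{eq:LFD-problem}; the optimal $p_m^\ast$ then recovers the LFD $P_m^\ast$ (the additive constant $M$ and the sign do not affect the minimizer).

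The step I expect to be the main obstacle is justifying the min-max interchange cleanly, in particular verifying that each $\cP_m$ is compact and that the value is attained, so that a genuine saddle point, and hence a well-defined LFD, exists. Once the interchange is in place the inner minimization is immediate and the Kantorovich reformulation of the Wasserstein ball is routine; convexity of $\sum_i\max_m p_m^i$ then confirms that \eqref{eq:LFD-problem} is indeed a finite-dimensional convex program.
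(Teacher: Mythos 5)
Your proposal is correct and follows essentially the same route as the paper's proof: restrict to the finite support $\hXi$, evaluate the inner minimization pointwise to obtain $M-\sum_{i=1}^n\max_{1\le m\le M}p_m^i$ (the paper's Lemma~\ref{lemma:simple test}), encode the Wasserstein ball via Kantorovich couplings $\gamma_m$ to get the linear constraints (the paper's Lemma~\ref{lemma:supinf}), and justify the min--max interchange for the resulting finite-dimensional bilinear game. The only cosmetic difference is the tool used for the interchange --- you invoke von Neumann/Sion with compactness and convexity of both feasible sets, whereas the paper applies convex programming duality under Slater's condition --- and both arguments are valid here.
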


%Before outlining the proof of Theorem \ref{thm:duality}, we provide some interpretation of .
The decision variable $\gamma_m\in \mathbb{R}^{n\times n}_{+}$ can be viewed as a joint distribution on $n$ empirical points with marginal distributions $\widehat{P}_m$ and $P_m$, represented by a vector $p_m\in\R_+^n$.
The inequality constraint controls the Wasserstein distance between $P_m$ and $\widehat{P}_m$.

% \begin{remark}[Objective in Step 1]
Below we give an intuitive explanation for the objective function in \eqref{eq:LFD-problem}. Note that $\max_{1\leq m'\leq M} p_{m'}^i - p_m^i$ measures the margin between the maximum likelihood of $\xi^i$ among all classes and the likelihood of the $m$-th class. Thus, the objective in \eqref{eq:LFD-problem} can be equivalently rewritten as minimization of total margin: $$\sum\nolimits_{i=1}^n \sum\nolimits_{m=1}^M \left( \max_{1\leq m'\leq M} p_{m'}^i - p_m^i\right).$$
When $M=2$, the total margin reduces to the total variation distance.
%tentatively deleted
Also, let $y_m^i \in \{0, 1\}$ be the class indicator variable of sample $\xi^i$, observe that
\begin{align*}
& \sum_{i=1}^{n}\max_{1\leq m\leq M} p_m^i = \lim\limits_{t\rightarrow \infty} \bigg(\sum_{i=1}^n\sum_{m=1}^m y_m^i p_m^i -\frac{1}{t} \sum_{i=1}^n\sum_{m=1}^M y_m^i \log \frac{\exp(tp_m^i)}{\sum_{m=1}^M \exp(tp_m^i)}\bigg),
\end{align*}
where the second term on the right side represents the cross-entropy (or negative log-likelihood).

Therefore, problem \eqref{eq:LFD-problem} perturbs $(\widehat{P}_1,\ldots,\widehat{P}_M)$ to LFDs $(P^*_1,\ldots,P^*_M)$ so as to minimize the total margin as well as an upper bound on cross-entropy of LFDs;
the smaller the margin (or cross-entropy) is, the more similar between classes and thus the harder to distinguish among them.

% Therefore, problem \eqref{eq:LFD-problem} perturbs $(\widehat{P}_1,\ldots,\widehat{P}_M)$ to LFDs $(P^*_1,\ldots,P^*_M)$ so as to minimize the total margin of LFDs; the smaller the margin is, the more similar between classes and thus the harder to distinguish among them.
%When $M=2$, particularly, the margin is the total variation distance and the objective $\sum_{i=1}^n \max(p_1^i,p_2^i)$ is proportional to the total variation distance up to a constant. 

\subsection{Expressiveness of Weighted $k$-NN}

In this subsection we study the expressive power of the class of weighted $k$-NN classifiers 
\[
\{\pi^{\knn}(\cdot;k,w)\!\!: 1\leq k\leq n,\!w_m\!\!:\Xi\times\Xi\to\R_+\!,1\leq m\leq M\}
\]
defined in Section~\ref{sec:multi-classification}.

The following theorem establishes the equivalence between the original problem \eqref{eq:robust-knn} and the relaxed robust classification problem \eqref{eq:minimax-problem} studied in Section \ref{sec:robust classification}.

\begin{thm}\label{thm:robust-knn}
% Assume that the parameter of the feature extractor $\phi_\theta$ lies in the space $Theta$ satisfying, for any $\pi: \widehat{\xi}\to\Delta_M$, there exists $\theta\in\Theta$ and weight vectors $q_1,\ldots,q_M \in \R^n$ such that $\pi(\phi(\theta)) = \pi^{\knn}(x), w\in\widehat{\xi}$.
For the uncertainty set defined in \eqref{eq:was_set}, formulations \eqref{eq:robust-knn} and \eqref{eq:minimax-problem} have identical optimal values. In addition, there exists optimal solutions of \eqref{eq:robust-knn} and \eqref{eq:minimax-problem} that share common LFDs that are optimal to \eqref{eq:LFD-problem}.
%; and . 
%solved from \eqref{eq:minimax-problem} is identical to the optimal solution $(w_1^*,\ldots, w_M^*)$ to \eqref{eq:robust-knn} when $k=1$.
\end{thm}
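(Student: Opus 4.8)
\textbf{Proof proposal for Theorem~\ref{thm:robust-knn}.}
The plan is to show two inequalities between the optimal values of \eqref{eq:robust-knn} and \eqref{eq:minimax-problem}, and then to exhibit a weighted $k$-NN classifier attaining the value of the relaxed problem while sharing its LFDs. Since every weighted $k$-NN classifier $\pi^{\knn}(\cdot;k,w)$ is in particular a randomized classifier $\pi:\Xi\to\Delta_M$, the class of $k$-NN classifiers is a subset of the class over which \eqref{eq:minimax-problem} optimizes. Hence the minimax value of \eqref{eq:robust-knn} is at least that of \eqref{eq:minimax-problem}; this direction is immediate. The entire content of the theorem is therefore the reverse inequality: for the optimal randomized classifier $\pi^*$ of \eqref{eq:minimax-problem}, I must produce a choice of $k$ and weight functions $w=(w_1,\ldots,w_M)$ so that $\pi^{\knn}(\cdot;k,w)$ achieves worst-case risk no larger than $\pi^*$.

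First I would invoke Theorem~\ref{thm:duality} to extract the LFDs $(P_1^*,\ldots,P_M^*)$ from the finite-dimensional program \eqref{eq:LFD-problem}, represented by the probability vectors $p_m^*\in\R_+^n$ supported on $\hXi$. The key structural observation is that the optimal randomized classifier against fixed LFDs is the (possibly randomized) maximum-a-posteriori rule: on each empirical point $\xi^i$, $\pi^*$ puts mass only on the maximizers $\mathcal M_0(\xi^i)=\argmax_{m} p_m^{*,i}$, which is exactly why the objective of \eqref{eq:LFD-problem} reads $\sum_i \max_m p_m^i$. Because all LFDs are supported on the finite set $\hXi$, the worst-case risk depends on a classifier only through its values on the $n$ empirical points; so it suffices to build a $k$-NN classifier that, \emph{evaluated at each $\xi^i$}, reproduces these optimal acceptance probabilities. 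The natural construction is to take $k=n$ (so all samples are neighbors of every query) and to define the weight functions so that the induced votes satisfy $p_m(\xi^i)=p_m^{*,i}$ at each empirical point; concretely one sets $w_m(\xi,\xi^j)$ to depend on $\xi$ through its nearest-neighbor ordering in a way that recovers the LFD mass, and then the tie-breaking rule \eqref{eq:tie-break} is chosen to match the randomization of $\pi^*$ on the argmax set $\mathcal M_0$.

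The main obstacle is purely the \textbf{realizability} of the desired vote profile by some admissible weight function of the form \eqref{eq:weighted-knn}. I need to check that the map $w$ lies in the allowed function class $w_m:\Xi\times\Xi\to\R_+$ and that, with the chosen $k$ and tie-breaking, the resulting classifier genuinely evaluates to $p_m^{*,i}$ at every $\xi^i$ rather than to some reordered or aggregated quantity. The cleanest route is to let $k=n$ and define $w_m(\xi,\xi^j)$ to be a function that, when summed over the $k$ nearest neighbors of a query $\xi$ lying exactly at $\xi^i$, returns $p_m^{*,i}$; since the nearest-neighbor reordering at $\xi=\xi^i$ places $\xi^i$ first, a weight that reads off the LFD value at the query point itself does the job, and nonnegativity holds because $p_m^{*,i}\ge 0$. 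With this $\pi^{\knn}$ agreeing with $\pi^*$ on all of $\hXi$, and with the worst-case risk being a functional that only sees values on $\hXi$, the two classifiers incur the same worst-case risk against $(P_1^*,\ldots,P_M^*)$.

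Finally, to conclude I would assemble the pieces: the constructed $\pi^{\knn}$ is feasible for \eqref{eq:robust-knn} and attains worst-case risk equal to the optimal value of \eqref{eq:minimax-problem}, giving the reverse inequality and hence equality of optimal values; and since it is optimal against $(P_1^*,\ldots,P_M^*)$, these same LFDs — which are optimal to \eqref{eq:LFD-problem} by Theorem~\ref{thm:duality} — are least favorable for both \eqref{eq:robust-knn} and \eqref{eq:minimax-problem}, establishing the shared-LFD claim. A remaining technical point to verify carefully is the interchange of $\min$ and $\max$ for the relaxed problem (so that "optimal against the LFDs" coincides with "optimal for the full minimax problem"); this follows from a Sion-type minimax argument since $\Psi$ is affine (hence concave) in $(P_1,\ldots,P_M)$ over the convex compact sets $\cP_m$ and convex in $\pi$ over the convex set of randomized classifiers, which I expect to be the one place where the measure-theoretic care is nontrivial.
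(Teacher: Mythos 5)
Your proposal is correct and follows essentially the same route as the paper's proof: the immediate inclusion direction, then an explicit weighted $k$-NN construction whose votes at each empirical point reproduce the LFD masses $P_m^*(\xi^i)$ (the paper takes $k^*=1$ with $w_m^*=P_m^*$, a slightly cleaner realization than your $k=n$ variant, but the idea is identical), combined with the observation that the worst-case risk only depends on classifier values on $\hXi$ and with the minimax interchange (the paper's Theorem~\ref{thm:duality} duality, your Sion-type argument) to identify the best response to the LFDs with a minimax-optimal classifier. The points you flag as delicate---realizability of the vote profile, tie-breaking matched to $\pi^*$, and the saddle-point property---are exactly the ones the paper's proof handles.
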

% \rgao{Fixing $(P_1^*,\ldots,P_M^*)$, solve for $\min_{\substack{ 1\leq k\leq n \\ w_m:\Xi\times\Xi\to\R_{+} \\1\leq m\leq M} } \;
% 	 \Psi(\pi^{\knn}(\cdot;k,w); P_1^\ast, \dots, P_M^\ast)$ by showing that the vanilla $1$-NN based on the worst-case distribution is optimal }
	 
Theorem~\ref{thm:robust-knn} implies that the set of weighted $k$-NN classifiers is {\it exhaustive}, in the sense that it achieves the same optimal robust risk as optimizing over the set of all randomized classifiers.
In our proof, we show that the {\it weighted 1-NN classifier}, with weights equal to the LFDs of \eqref{eq:LFD-problem}, is an optimal solution to \eqref{eq:robust-knn}.
Therefore, instead of solving \eqref{eq:robust-knn} directly, by Theorem~\ref{thm:duality}, we can solve the convex program \eqref{eq:LFD-problem} for the LFDs, based on which we construct a robust $k$-NN classifier. 
This justifies the \texttt{Dr.k-NN} algorithm to be described in Section \ref{sec:knn}.

\subsection{Lipschitz Regularization and Generalization Bound}

Next, we discuss the generalization bound -- measured by the population risk under the true distribution -- of the proposed distributional robust $k$-NN framework, by relating \eqref{eq:robust-knn} and \eqref{eq:minimax-problem} to Lipschitz regularization. 

Using duality for Wasserstein DRO \cite{Gao2016}, problem \eqref{eq:minimax-problem} is equivalent to
\begin{equation}\label{eq:v_dual}
\begin{multlined}
\min_{\substack{\pi:\,\hXi\to\Delta_M\\\lambda_m\ge0,m=1,\ldots M}} \left\{ \sum_{m=1}^M \lambda_m \vartheta_m + \E_{\hxi\sim\hP_m}\left[ \max_{\xi\in\hXi} \left\{ 1-\pi_m(\xi) - \lambda_m c(\xi,\hxi) \right\} \right] \right\}.
\end{multlined}
\end{equation}
Then by \cite{gao2017wasserstein}, this problem is upper bounded by the following Lipschitz regularized classification problem
\begin{equation}\label{problem:lip}
\min_{\pi:\,\hXi\to\Delta_M} \sum_{m=1}^M \left\{ \E_{\xi\sim\hP_m}\left[  1-\pi_m(\xi)  \right] + \vartheta_m \norm{\pi_m}_{\Lip} \right\},
\end{equation}
where $\norm{\pi_m}_{\Lip} := \max_{\xi,\txi\in\hXi,\,\xi\ne\txi} \frac{|\pi_m(\txi)-\pi_m(\xi)|}{c(\txi,\xi)}$
is the Lipschitz norm of the function $\pi_m$ for each $m=1,\ldots,M$.
Perhaps surprisingly,
the next result shows that \eqref{eq:v_dual} and \eqref{problem:lip} are actually equivalent (thus by Theorem \ref{thm:robust-knn}, are both equivalent to \eqref{eq:robust-knn}), despite that the loss function does not satisfy existing criteria ensuring the equivalence \cite{esfahani2018data,shafieezadeh2019regularization,gao2017wasserstein}.

\begin{thm}\label{thm:lip}
Formulations \eqref{eq:v_dual} and \eqref{problem:lip} are equivalent, and there exists an optimizer $\pi^\ast$ of \eqref{problem:lip} satisfying $\norm{\pi_m^\ast}_\Lip = \lambda_m^\ast$, $m=1,\ldots,M$, where $(\lambda_m^\ast)_m$ is the optimizer of \eqref{eq:v_dual} when $\pi=\pi^\ast$. 
% where $\norm{\cdot}_{\Lip}$ is the Lipschitz norm of a function.
\end{thm}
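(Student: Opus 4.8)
The plan is to prove the two inequalities between \eqref{eq:v_dual} and \eqref{problem:lip} separately, the first of which is essentially already recorded in the text. For the easy direction $\eqref{eq:v_dual}\le\eqref{problem:lip}$, I would fix any classifier $\pi$ and set $\lambda_m=\norm{\pi_m}_\Lip$. Since $c(\hxi,\hxi)=0$ and, for every $\xi$, $\pi_m(\hxi)-\pi_m(\xi)\le\norm{\pi_m}_\Lip\, c(\xi,\hxi)$, the inner maximization $\max_{\xi\in\hXi}\{1-\pi_m(\xi)-\lambda_m c(\xi,\hxi)\}$ is attained at $\xi=\hxi$ and equals $1-\pi_m(\hxi)$. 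Substituting back collapses the $m$-th summand of \eqref{eq:v_dual} to $\norm{\pi_m}_\Lip\vartheta_m+\E_{\hxi\sim\hP_m}[1-\pi_m(\hxi)]$, which is exactly the $m$-th summand of \eqref{problem:lip}; minimizing over $\pi$ gives the inequality.

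The substance is the reverse inequality, and the main device is the Lipschitz regularization (inf-convolution). For fixed $(\pi,\lambda)$ define $\tpi_m(\cdot):=\min_{\xi\in\hXi}\{\pi_m(\xi)+\lambda_m c(\xi,\cdot)\}$. Using the triangle inequality for $c$ one checks that $\tpi_m$ is $\lambda_m$-Lipschitz, that $0\le\tpi_m\le\pi_m$, that the inner max in \eqref{eq:v_dual} equals $1-\tpi_m(\hxi)$, and that the operation is idempotent and monotone on $\lambda_m$-Lipschitz functions. These facts let me reparametrize: minimizing the objective of \eqref{eq:v_dual} over $\pi:\hXi\to\Delta_M$ is, for fixed $\lambda$, the same as maximizing $\sum_m\E_{\hP_m}[\tpi_m]$, and this equals the maximum of $\sum_m\E_{\hP_m}[q_m]$ over nonnegative $\lambda_m$-Lipschitz functions $q_m$ with $\sum_m q_m\le1$ — the ``$\le$'' appearing because $\tpi_m\le\pi_m$ forces $\sum_m\tpi_m\le1$, while conversely any such $q$ can be lifted into $\Delta_M$ by distributing the slack $1-\sum_m q_m$ and invoking monotonicity and idempotence. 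Finally, for fixed $q$ the optimal multiplier is $\lambda_m=\norm{q_m}_\Lip$. Putting these together shows that \eqref{eq:v_dual} equals the relaxed Lipschitz problem $\min\{\sum_m[\vartheta_m\norm{q_m}_\Lip+\E_{\hP_m}(1-q_m)]:\ q_m\ge0,\ \sum_m q_m\le1\}$, which differs from \eqref{problem:lip} only in that the simplex identity $\sum_m q_m=1$ is relaxed to an inequality.

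It then remains to prove that this relaxation is tight, i.e.~that the $\le$-constrained and the $\Delta_M$-constrained Lipschitz problems share the same optimal value; this is the step I expect to be the main obstacle. Since $\hXi$ is finite, both are linear programs (the Lipschitz norm being a maximum of finitely many affine functionals), and I would select, among all optimizers of the relaxed program, one $q^\ast$ of maximal total mass $\sum_{m,i}q_m(\xi^i)$. I would then argue $\sum_m q^\ast_m\equiv1$: if the defect $1-\sum_m q^\ast_m(\xi_0)$ were positive at some $\xi_0$, then letting $m_0$ be the class with $\xi_0\in\supp\hP_{m_0}$, a small increase of $q^\ast_{m_0}(\xi_0)$ weakly decreases the error term $\E_{\hP_{m_0}}[1-q_{m_0}]$ and, provided $\xi_0$ is not the high endpoint of a gap attaining $\norm{q^\ast_{m_0}}_\Lip$, leaves the Lipschitz penalty unchanged, yielding an optimizer of strictly larger mass and contradicting maximality. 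The delicate point is the degenerate configuration in which $\xi_0$ is simultaneously Lipschitz-active for every class that could absorb the slack; here I would either exploit optimality of $q^\ast$ against a global raising of the offending components, or argue directly that the two linear programs admit a common dual.

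Once tightness is established, $q^\ast$ is feasible for \eqref{problem:lip} and attains its value, so $\eqref{problem:lip}\le\eqref{eq:v_dual}$, completing the equivalence (and, through Theorem~\ref{thm:robust-knn}, the link back to \eqref{eq:robust-knn}). The norm identity follows from the reparametrization: at the joint optimum the multiplier was chosen as $\lambda_m^\ast=\norm{q^\ast_m}_\Lip$, so setting $\pi^\ast=q^\ast$ gives $\norm{\pi^\ast_m}_\Lip=\lambda_m^\ast$ for every $m$.
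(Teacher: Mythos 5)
Your reduction of \eqref{eq:v_dual} to a relaxed Lipschitz problem is correct and genuinely different from the paper's route: the paper works directly with a dual minimizer $(\pi^\ast,\lambda^\ast)$ and a double $c$-transform, whereas you use the inf-convolution $\tpi_m(\cdot)=\min_{\xi\in\hXi}\{\pi_m(\xi)+\lambda_m c(\xi,\cdot)\}$, together with its monotonicity and idempotence on Lipschitz functions, to eliminate $\pi$ and $\lambda$ entirely and arrive at
\[
\eqref{eq:v_dual}\;=\;\min\Big\{\sum_{m=1}^M\big[\vartheta_m\norm{q_m}_\Lip+\E_{\hP_m}(1-q_m)\big]\;:\;q_m\ge0,\ \sum_{m=1}^M q_m\le1\Big\}.
\]
The easy direction and all steps up to this identity are sound, and this packaging is arguably cleaner than the paper's.

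The gap is the tightness claim --- that the relaxed constraint $\sum_m q_m\le 1$ can be upgraded to the simplex identity $\sum_m q_m=1$ at no cost in optimal value --- and this is not a technicality: it is exactly the crux of the theorem, the analogue of the paper's inequality $\sum_m\tpi_m\ge 1$ (the hardest step of the paper's proof). Your maximal-mass perturbation argument does not close the case you yourself flag as delicate: at a point $\xi_0$ with slack, raising $q_m(\xi_0)$ without increasing $\norm{q_m}_\Lip$ forces you to raise $q_m$ on the entire downward closure of $\xi_0$ through tight Lipschitz constraints, and that closure may contain zero-slack points for \emph{every} class $m$, so no local or component-wise raising preserves feasibility; the two escapes you mention (``global raising'', ``common dual'') are left unworked. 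The paper resolves precisely this point by \emph{not} discarding the saddle-point structure: it keeps the dual optimizer $\pi^\ast\in\Delta_M$ and the LFDs from Theorem~\ref{thm:duality}, gets $\tpi_m\le\pi_m^\ast$ (hence $\sum_m\tpi_m\le1$), and then uses the complementary-slackness fact that for each $\txi$ and each class $m$ with $\pi_m^\ast(\txi)>0$ the worst-case distribution transports mass from $\supp\hP_m$ to $\txi$, which yields $\tpi_m(\txi)\ge\pi_m^\ast(\txi)$ and therefore $\sum_m\tpi_m(\txi)\ge1$. Your reparametrization throws away the classifier and the LFDs, which is what makes your final step hard; to complete the argument you would need to reintroduce that dual information (e.g., complementary slackness for your LP, along the lines of your ``common dual'' remark), and until then the proof is incomplete at its decisive step.
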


The theorem is proved by using $c$-transform \cite{villani2008optimal} to show that any optimizer $\pi_m^\ast$ can be modified into a $\lambda_m^\ast$-Lipschitz classifier while maintaining the optimality.
An immediate consequence of Theorem \ref{thm:lip} based on is the following generalization bound of \eqref{eq:robust-knn}.

\begin{cor}\label{cor}
  There exists an optimal robust 1-NN classifier $\pi^{\knn}(\cdot;1,w^*)$ with generalization gap controlled by 
%   $\max_{1\le m\le M} \lambda_m^\ast \cdot\Rad_n(\Lip([0,1]^d))$, which is further upper bounded by
    $ \frac{1}{ \min_{1\le m\le M}\vartheta_m} \cdot  \Rad_n(\Lip(\Xi))$, where $\Rad_n(\Lip(\Xi))$ is the Rademacher complexity of the 1-Lipschitz functions on $\Xi$.
\end{cor}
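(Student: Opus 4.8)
The plan is to combine the three preceding results into a standard uniform-convergence argument, with the crucial extra ingredient being an \emph{a priori} deterministic bound on the Lipschitz norm of the robust classifier. First I would fix the meaning of the generalization gap as the deviation between the population risk $\Psi(\pi^\ast;P_1,\dots,P_M)$ and the empirical risk $\Psi(\pi^\ast;\hP_1,\dots,\hP_M)$. Since $\Psi$ is a sum over classes of $\E[1-\pi_m]$, the gap splits as $\sum_{m=1}^M \big|\E_{\xi\sim P_m}[\pi_m^\ast(\xi)]-\E_{\xi\sim\hP_m}[\pi_m^\ast(\xi)]\big|$, so it suffices to control each class term separately.

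Next I would identify the classifier. By Theorem \ref{thm:robust-knn} there is an optimal robust $1$-NN classifier $\pi^{\knn}(\cdot;1,w^\ast)$ that coincides with an optimizer $\pi^\ast$ of \eqref{eq:minimax-problem}, and by Theorem \ref{thm:lip} this $\pi^\ast$ is also an optimizer of the Lipschitz-regularized problem \eqref{problem:lip} with $\norm{\pi_m^\ast}_\Lip=\lambda_m^\ast$. The key reduction is then a deterministic bound on $\lambda_m^\ast$. Plugging the constant classifier $\pi_m\equiv 1/M$ (which has zero Lipschitz norm) into \eqref{problem:lip} shows the optimal value is at most $M-1$; since every summand of \eqref{problem:lip} is nonnegative, this forces $\vartheta_m\norm{\pi_m^\ast}_\Lip\le M-1$, hence $\lambda_m^\ast=\norm{\pi_m^\ast}_\Lip\le (M-1)/\vartheta_m\le (M-1)/\min_{1\le m'\le M}\vartheta_{m'}$ for every $m$. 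This bound is exactly what lets me treat the data-dependent $\pi_m^\ast$ as an element of a fixed function class.

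With this in hand I would set the radius $R:=(M-1)/\min_{1\le m\le M}\vartheta_m$ and let $\mathcal F_R$ be the (data-independent) class of $R$-Lipschitz functions on $\Xi$ valued in $[0,1]$, noting that $\pi_m^\ast\in\mathcal F_R$ for all $m$ by the previous step. A standard symmetrization/Rademacher bound for uniformly bounded classes gives, with probability at least $1-\delta$ and for each $m$, $\sup_{f\in\mathcal F_R}\big|\E_{\xi\sim P_m}[f(\xi)]-\E_{\xi\sim\hP_m}[f(\xi)]\big|\le 2\,\Rad_{n_m}(\mathcal F_R)+O\!\big(\sqrt{\log(1/\delta)/n_m}\big)$, where $n_m=|\{i:y^i=m\}|$. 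By positive homogeneity of Rademacher complexity, $\Rad_{n_m}(\mathcal F_R)=R\,\Rad_{n_m}(\Lip(\Xi))$. Summing over the $M$ classes and absorbing the factors $M$, $M-1$, $2$ and the concentration term into the implied constant collapses the bound into the claimed form $\frac{1}{\min_{1\le m\le M}\vartheta_m}\cdot\Rad_n(\Lip(\Xi))$.

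The main obstacle I anticipate is handling the data dependence of $\pi^\ast$ correctly: because $\pi^\ast$ is constructed from the least favorable distributions, which are computed from the very samples over which I want to bound the empirical averages, I cannot apply a pointwise concentration inequality to a single fixed function. The deterministic Lipschitz bound $\lambda_m^\ast\le (M-1)/\vartheta_m$ is precisely the device that circumvents this, since it places each $\pi_m^\ast$ inside the a priori fixed ball $\mathcal F_R$ so that a uniform-convergence bound over $\mathcal F_R$ applies. A secondary technical point is verifying the scaling $\Rad_n(R\cdot\Lip(\Xi))=R\,\Rad_n(\Lip(\Xi))$ and checking that truncating to $[0,1]$-valued functions does not inflate the Rademacher complexity, which is needed so that $\Rad_n(\Lip(\Xi))$ (the complexity of the $1$-Lipschitz class) is finite and the stated bound is meaningful.
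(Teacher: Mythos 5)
Your overall skeleton matches the paper's proof: use Theorem~\ref{thm:robust-knn} and Theorem~\ref{thm:lip} to produce an optimal robust weighted $1$-NN classifier whose components lie in a Lipschitz ball of radius $\lambda_m^\ast$, then invoke a Rademacher-complexity bound for Lipschitz classes (the paper cites von Luxburg--Bousquet for exactly the uniform-convergence step you spell out, including the point that the deterministic Lipschitz bound is what neutralizes the data-dependence of $\pi^\ast$). The genuine difference is the device used to bound $\lambda_m^\ast$. You compare the optimal value of the Lipschitz-regularized primal \eqref{problem:lip} with the constant classifier $\pi_m\equiv 1/M$, which gives $\vartheta_m\norm{\pi_m^\ast}_\Lip\le M-1$, i.e., $\lambda_m^\ast\le (M-1)/\vartheta_m$. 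The paper instead argues term-by-term in the dual \eqref{eq:v_dual}: for each fixed $m$, setting $\lambda_m=0$ makes the $m$-th dual summand at most $1$, so optimality forces $\lambda_m^\ast\vartheta_m\le 1$, i.e., $\lambda_m^\ast\le 1/\vartheta_m$. The dual route is what buys the sharper constant: the $\lambda_m$'s decouple across classes, whereas in your primal argument the simplex constraint $\sum_m\pi_m=1$ couples the classes, so you cannot perturb $\pi_m$ alone and are forced to compare whole solutions, losing a factor of $M-1$. Consequently your proof establishes the gap $\frac{M-1}{\min_m\vartheta_m}\cdot\Rad_n(\Lip(\Xi))$, and absorbing $M-1$ into ``the implied constant'' is not innocent --- $M$ is a problem parameter, not a universal constant --- so your bound matches the stated one only for $M=2$. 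This is easily repaired by replacing your comparison step with the paper's dual argument.

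One smaller point of ordering: Theorem~\ref{thm:lip} asserts that \emph{some} optimizer of \eqref{problem:lip} has $\norm{\pi_m^\ast}_\Lip=\lambda_m^\ast$, not that the particular optimizer of \eqref{eq:minimax-problem} realized by the $1$-NN construction of Theorem~\ref{thm:robust-knn} does. The clean order (which the paper follows) is: take the Lipschitz optimizer $\tilde\pi$ produced in the proof of Theorem~\ref{thm:lip}, note it is an optimal robust classifier for \eqref{eq:minimax-problem}, and then use the exhaustiveness argument from the proof of Theorem~\ref{thm:robust-knn} to realize \emph{that} classifier as $\pi^{\knn}(\cdot;1,w^\ast)$; your phrasing runs these two steps in the opposite direction, but the fix is purely presentational.
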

% The generalization gap of the resulting classifier, also the distributionally robust $1$-NN classifier $\pi^{\knn}(\cdot;1,w^*)$, is controlled by $\max_{1\le m\le M} \lambda_m^\ast \cdot\Rad_n(\Lip([0,1]^d))$. Here $\lambda_m^\ast$ are the optimal dual minimizers of problem \eqref{eq:v_dual}.

In comparison, the margin-based generalization gap of the vanilla 1-NN classifier for binary classification is controlled by $\frac{2}{d(\hXi_1,\hXi_{2})} \cdot \Rad_n(\Lip(\Xi))$, where $\hXi^{m}:=\{\xi^i: y^i = m\}$ is the set of samples in the $m$-th class \cite{von2004distance}. 
Therefore, the generalization gap of the distributionally robust $1$-NN classifier can be smaller than that of the vanilla $1$-NN classifier by tuning the radii $(\vartheta_m)_m$ properly.

\section{Proposed Algorithm \texttt{Dr.k-NN}}
\label{sec:knn}

In this section, we present the Distributional robust $k$-Nearest Neighbor (\texttt{Dr.k-NN}) algorithm, which is a direct consequence of the theoretical justifications in Section~\ref{sec:LFD}.

\subsection{\texttt{Dr.k-NN} algorithm}

Based on Theorem~\ref{thm:robust-knn}, our algorithm contains two steps.
% \noindent
% \rgao{
% {\bf Step 0.} [\textit{Feature mapping}] 
% Given a representation $\phi$. TBA.
% Remark that a concrete implementation will be in section 4.
% }

{\bf Step 1.} [\textit{Sample re-weighting}] 
For each class $m$, re-weight $n$ samples using a distribution $P_m^*$, where $(P_1^*, \ldots, P_M^*)$ is the $p$-component of the minimizer of \eqref{eq:LFD-problem}.
% \begin{equation}
% \begin{aligned}
%     \underset{\begin{subarray}{c}
%           p_1, \dots, p_M \in \mathbb{R}^n_+\\
%           \gamma_1, \dots, \gamma_M \in \mathbb{R}^{n\times n}_+ 
%     \end{subarray}}{\min} 
%     &~ \sum_{i=1}^n \max_{1\leq m\leq M} p_m^i\\
%     \text{subject to\quad} 
%     &~\sum_{i=1}^n \sum_{j=1}^n \gamma_m^{i,j} c(\widehat\xi^i, \widehat\xi^j) \le \vartheta_m,\\
%     %&~m=1,\dots,M,\\
%     &~\sum_{i=1}^n \gamma_m^{i,j} = \widehat{P}_m(\hXi^j), \quad  \sum_{j=1}^n \gamma_m^{i,j} = p_m^i,\\
%     &~\forall 1\leq i, j\leq N,\ 1\leq m\leq M.
%     \label{eq:LFD-problem}
% \end{aligned}
% \end{equation}
%This problem is derived from the distributionally robust problem \eqref{eq:minimax-problem}, as shown later in Theorem~\ref{thm:duality}. 

%Introduce weights on the training samples. %Define a vector of probabilistic weights on each training points for each of the class as $\{P_1^*(\widehat\xi^{i}), \ldots, P_M^*(\widehat\xi^{i})\}$, $i = 1, \ldots n$, such that $\sum_{i=1}^n P_m^*(\widehat\xi^{i}) = 1$. 
%The weight vector for each category $m$ can be viewed as a discrete probability density function across all samples. The choice of $Mn$ weights are critical, which we will discuss in detail in Section \ref{sec:drlf} by formally setting up a {\it distributionally robust minimax classification} framework.

{\bf Step 2.} [\textit{k-NN}] Given a query point $\xi$, ordering the training samples according to their distance to $\xi$:
$c(\xi, \xi^{\tau_1^S(\xi)}) \le c(\xi, \xi^{\tau_2^S(\xi)}) \le \dots \le c(\xi, \xi^{\tau_n^S(\xi)})$.
Compute the weighted $k$-NN votes, define
\begin{equation}
\widetilde{p}_m(\xi)\coloneqq \frac{1}{k}\sum_{i=1}^k  P_m^*(\xi^{\tau_i^S(\xi)}), \, m=1, \dots, M.
	\label{eq:knn}
\end{equation}
%{\bf Step 4.} 
%Finally, the distributionally robust $k$-NN (\texttt{Dr.k-NN}) 
Decide the class for a query feature point $\xi$ as
$\argmax_{1\leq m\leq M} \widetilde{p}_m(\xi)$, where
the tie is broken according to the rule \eqref{eq:tie-break}.

% \begin{remark}[Objective in Step 1]
% Note that the objective function in \eqref{eq:LFD-problem} is related to the { cross-entropy} loss in the limiting case, i.e., \[\sum_{i=1}^n \max_{1\leq m\leq M} p_m^i = M-\frac{1}{t} \lim\limits_{t\rightarrow \infty}
%  \sum_{i=1}^n\sum_{m=1}^M y_m^i  \log \frac{\exp(tp_m^i)}{\sum_{m=1}^M \exp(tp_m^i)},\] where $y_m^i$ is the label that equals to 1 if the data sample $x^i$ is drawn from the $i$-th class. 
%  %
% The smaller this value is, the more similar between classes and the harder distinguish them.
% Particularly when $M=2$, the objective $\sum_{i=1}^n \max(p_1^i,p_2^i)$ is proportional to the Total Variation distance up to a constant. 
% Therefore \eqref{eq:LFD-problem} can be viewed as finding the least favorable distributions with minmial distance or maximal similarity, and $P_m^*$ can be viewed as a perturbation of the empirical distribution $Q_m$.
% \end{remark}

\begin{figure}
\centering
\begin{subfigure}[t]{0.242\linewidth}
\includegraphics[width=\linewidth]{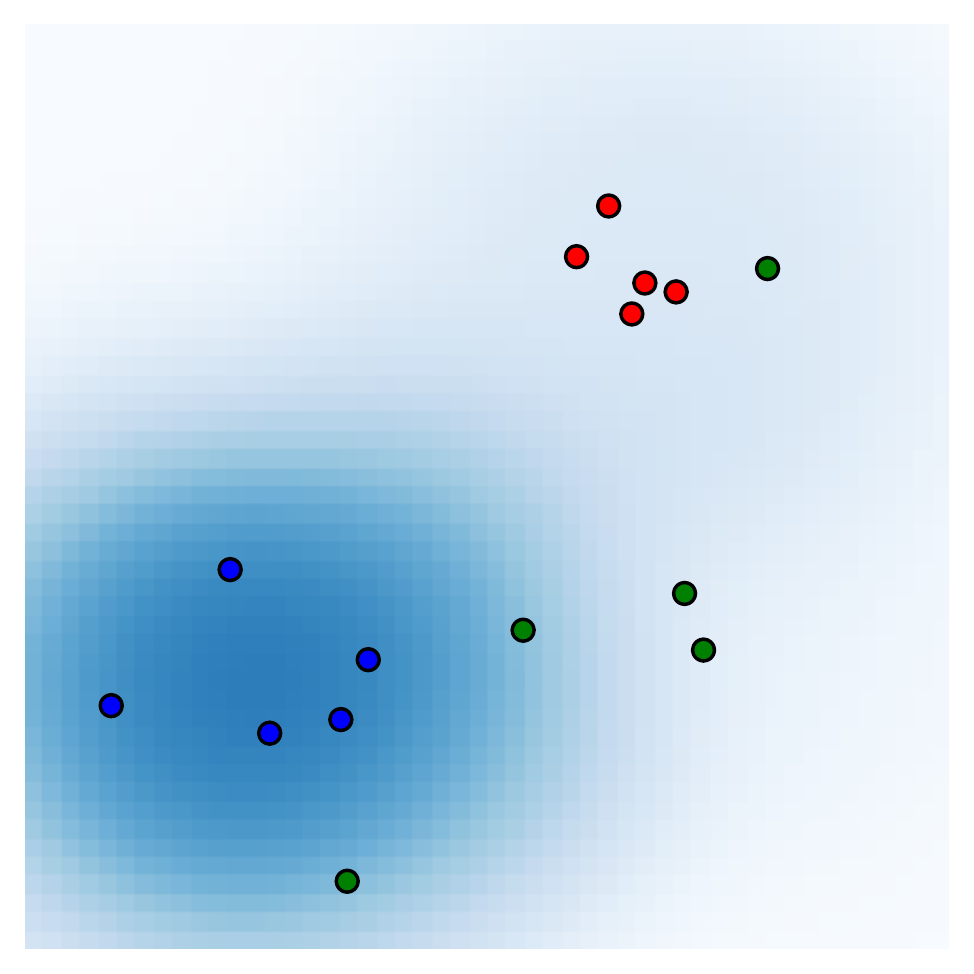}
\caption{$P_1^*$}
\end{subfigure}
\begin{subfigure}[t]{0.242\linewidth}
\includegraphics[width=\linewidth]{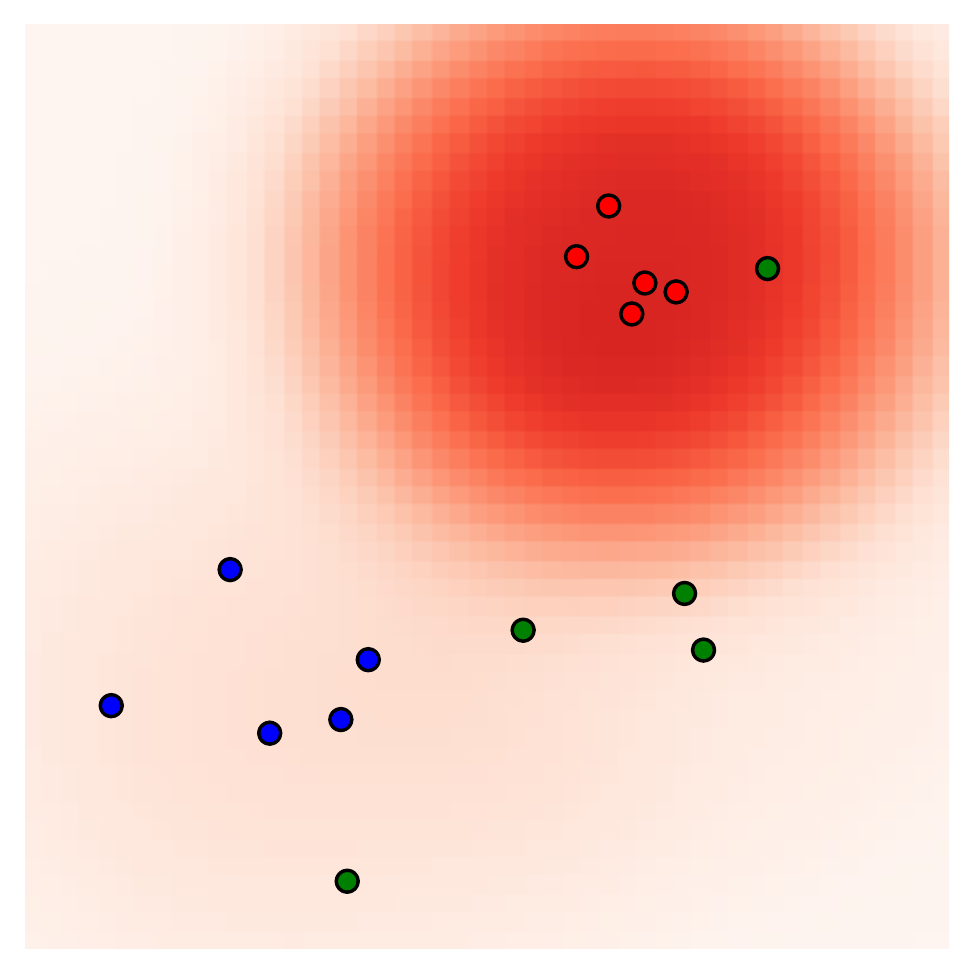}
\caption{$P_2^*$}
\end{subfigure}
\begin{subfigure}[t]{0.242\linewidth}
\includegraphics[width=\linewidth]{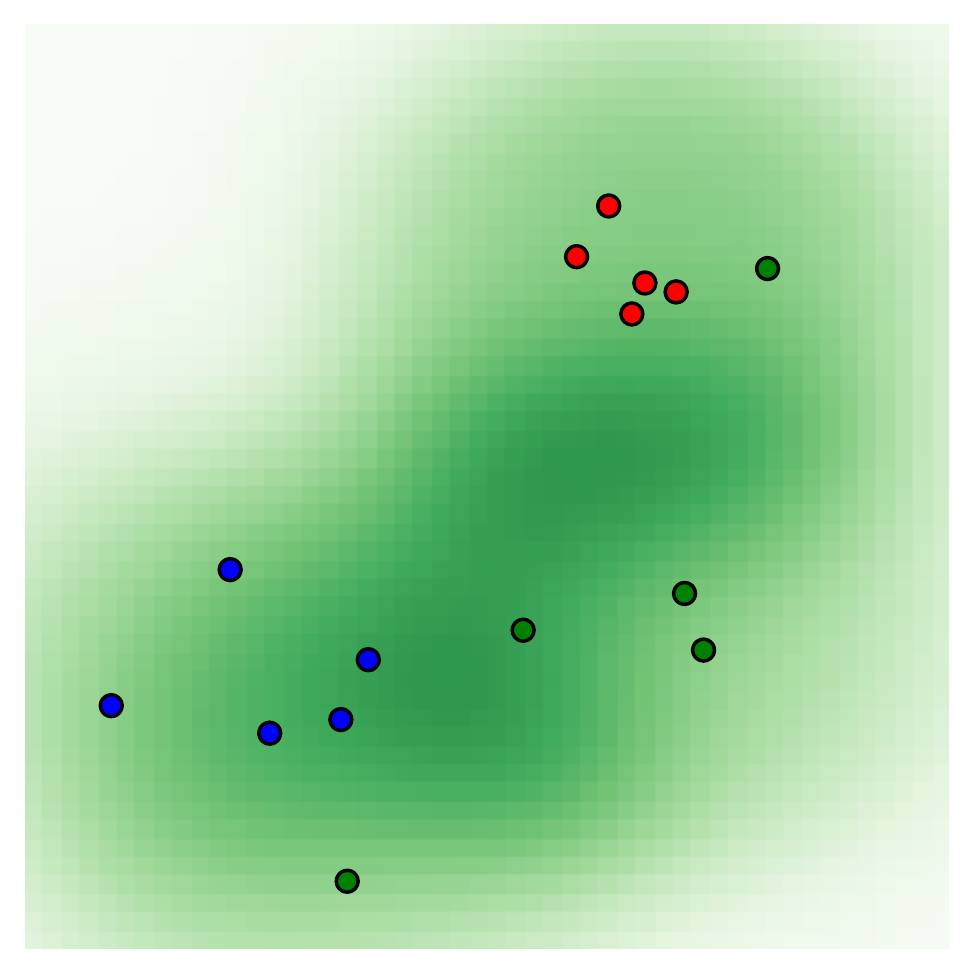}
\caption{$P_3^*$}
\end{subfigure}
\begin{subfigure}[t]{0.242\linewidth}
\includegraphics[width=\linewidth]{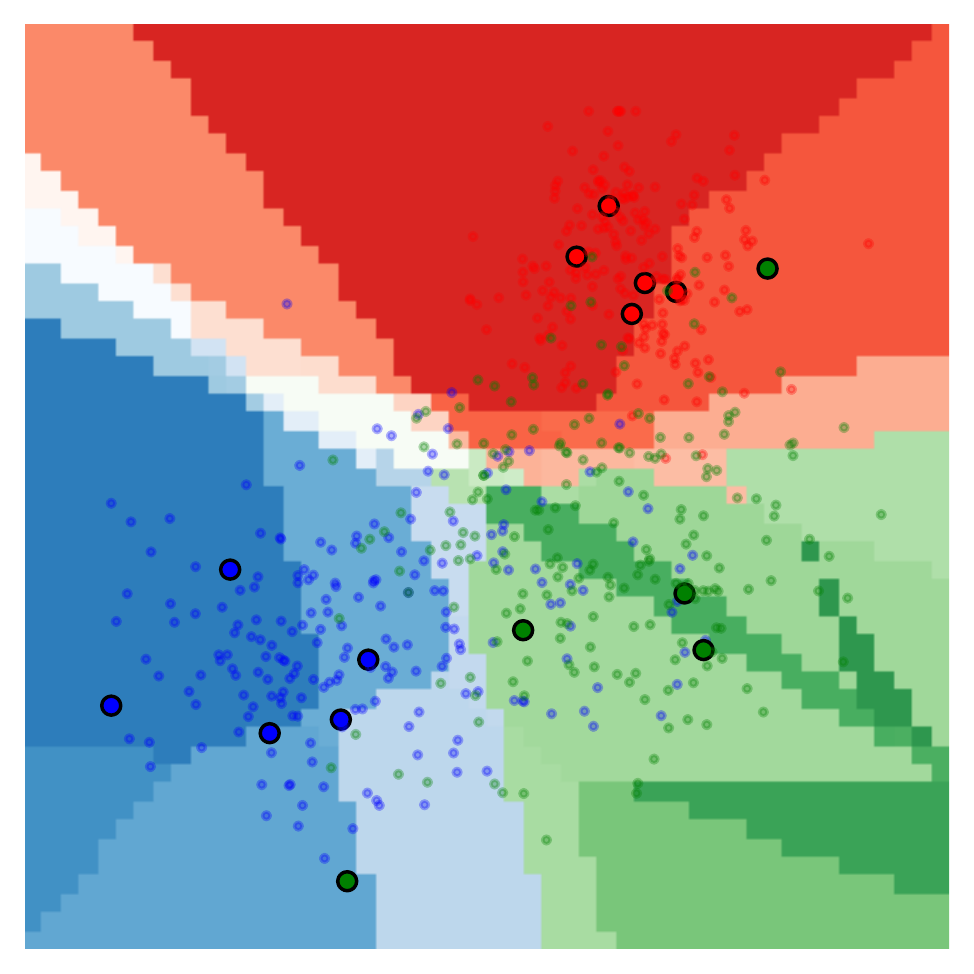}
\caption{$\pi^*$}
\end{subfigure}
\caption{
An example of the weights $P_1^*, P_2^*, P_3^*$ yielding from \eqref{eq:LFD-problem} and the corresponding results of \texttt{Dr.k-NN} using a small subset of MNIST (digit 4 (red), 6 (blue), 9 (green) and $k=5$). Raw samples are projected on a 2D feature space ($d=2$), with the color indicating their true class-membership. 
In (a)(b)(c), shaded areas indicate the kernel smoothing of $P_1^*, P_2^*, P_3^*$ defined in \eqref{eq:kernel-smoothing}.
In (d), big dots represent the training points and small dots represent the query points, and their color depth suggests how likely the sample is being classified into the true category. }
\label{fig:LFDs-and-kNN}
\end{figure}

Figure~\ref{fig:LFDs-and-kNN} gives an illustration showing the probabilistic weights $(P_1^*,P_2^*,P_3^*)$ for three classes and its corresponding decision boundary yielding from the weighted $k$-NN. 

For the sake of completeness, we also extend our algorithm to non-few-training-sample setting, which is referred to as truncated \texttt{Dr.k-NN}. The key idea is to keep the training samples that are important in deciding the decision boundary based on the maximum entropy principle \cite{Cover2006}. This can be particularly useful for the general classification problem with an arbitrary size of training set. An illustration (Figure~\ref{fig:truncated-dr-knn}) and more details can be found in Appendix~\ref{sec:extend}.

%, which we call the truncated \texttt{\texttt{Dr.k-NN}}. 
%Details can be found in Appendix~\ref{sec:extend}.

% Note that vanilla $k$-NN corresponds to setting all the weights equal: $P_m^*(\widehat \xi_i)\equiv 1/n$ for all $i$ when computing $\widetilde{p}_m(\xi)$, which amounts to count the frequency of occurrence for each class $m$, as shown in Figure~\ref{fig:illustration} (a). 
%We also show in the experiments that the truncated \texttt{Dr.k-NN} can achieve competitive results using very small portion of the data set.

% \subsection{Jointly learning feature mapping and robust classifier}

\begin{algorithm}[!t]
\SetAlgoLined
	{\bfseries Input:} 
	$S_m \coloneqq \{(x^i, y^i): y^i = m, \forall i\} \subset S,~m=1,\dots,M$;\\
	% The $\mathcal{D}_m$ denotes the subset of $\mathcal{D}$ containing all tuples $(x^i, y^i)$ such that $y^i = m$\;
	{\bfseries Output:} The feature mapping $\phi(\cdot;\theta)$ and the LFD $P_1^*, \dots, P_M^*$ supported on training samples;\\
	%Initialize model parameter $\theta$\;
	{\bfseries Initialization:} $\theta_0$ is randomly initialized; $n' < n$ is the size of ``mini-set''; $t = 0$;\\
	\While{$t$ < $T$}{
	    \For{number of mini-sets}{
    		Randomly generate $M$ integers $n_1, \dots, n_M$ such that $\sum_{m = 1}^M n_m = n', n_m > 0, \forall m$;\\
    		Initialize two ordered sets $\hXi = \emptyset, \widehat{P} = \emptyset$;\\
    		\For{$m \in \{1,\dots, M\}$}{
    		  %  $\mathcal{X}_m \leftarrow$ Randomly sample $n_m$ data samples $\{x^i\}$ from $\mathcal{D}_m$;\\
    		    $\mathcal{X}_m \leftarrow$ Randomly sample $n_m$ points from $S_m$;\\
    			$\hXi_{m} \leftarrow \{\xi \coloneqq \phi(x;\theta_t): x \in \mathcal{X}_m\}$;\\
    			$\widehat{P}_m \leftarrow \frac{1}{n_m} \sum_{i=1}^{n_m} \delta_{\xi^i_m},~\xi^i_m \in \hXi_m$;\\
    			$\hXi \leftarrow \hXi \cup \hXi_{m}$; $\widehat{P} \leftarrow \widehat{P} \cup \widehat{P}_m$;
    		}
    		Update the probability mass of LFDs $P_1^*, \dots, P_M^*$ on $\hXi$ by solving \eqref{eq:LFD-problem} given $\hXi, \widehat{P}$;
    	}
		$\theta_{t+1} \leftarrow \theta_t - \alpha \nabla J(\theta_t; P_1^*, \dots, P_M^*)$, where $\alpha$ is the learning rate; \\
		$t \leftarrow t + 1$;
	}
\caption{Learning algorithm for \texttt{Dr.k-NN}}
\label{algo:learning-knn}
\end{algorithm}

\subsection{Joint learning framework}
\label{sec:learning}

In this section, we propose a framework that jointly learns the feature mapping and the robust classifier. Let the feature mapping $\phi(;\theta)$ be a neural network parameterized by $\theta$ whose input is a batch of training samples (Figure~\ref{fig:architecture}), and then compose it with an optimization layer that packs the convex problem \eqref{eq:LFD-problem} as an output layer that
outputs the LFDs of \eqref{eq:minimax-problem}.
The optimization layer is adopted from differentiable optimization \cite{Amos2017,Agrawal2019}, in which the optimization problem is integrated
as an individual layer in an end-to-end trainable deep networks and the solution of the problem can be backpropagated through neural networks.

To apply the mini-batch stochastic gradient descent, we need to ensure that each batch comprises of multiple ``mini-sets'', one for each class, containing at least one training sample from each class fed into the convex optimization layer. 
In light of \eqref{eq:LFD-problem}, the objective of our joint learning framework is $\min_\theta J(\theta; P_1^*, \dots, P_M^*)$, where
%We will find a feature extractor $\phi(\cdot;\theta)$ to minimize the minimax risk
\[
J(\theta; P_1^*, \dots, P_M^*) \coloneqq ~\sum_{i=1}^{n} \underset{1\leq m\leq M}{\max}P^*_m(\phi(x^i;\theta)) ,
\]
and $\{P^*_m(\phi(\cdot;\theta))\}_{1 \le m \le M}$ are the LFDs generated by the convex solver defined in \eqref{eq:LFD-problem} given input variables $\{\xi^i=\phi(x^i;\theta)\}_{1 \le i \le n}$. 
The algorithm is summarized in Algorithm~\ref{algo:learning-knn}.
% Appendix~\ref{append:learning}.

\begin{table*}[!t]
\centering
\caption{Comparison of classification accuracy in the few-training-sample setting}
\label{tab:acc-results}
\resizebox{1\textwidth}{!}{%
\begin{tabular}{ccccccccccccccccccccc}
\toprule[1pt]\midrule[0.3pt]
\multirow{4}{*}{Methods} & \multicolumn{4}{c}{MNIST} & \multicolumn{4}{c}{ {\emph{mini} ImageNet}} & \multicolumn{4}{c}{CIFAR-10} & \multicolumn{4}{c}{ {Omniglot}} & \multicolumn{2}{c}{Lung Cancer} & \multicolumn{2}{c}{ {COVID-19 CT}} \\
 & \multicolumn{2}{c}{$M=2$} & \multicolumn{2}{c}{$M=5$} & \multicolumn{2}{c}{ {$M=2$}} & \multicolumn{2}{c}{ {$M=5$}} & \multicolumn{2}{c}{$M=2$} & \multicolumn{2}{c}{$M=5$} & \multicolumn{2}{c}{ {$M=2$}} & \multicolumn{2}{c}{$M=5$} & \multicolumn{2}{c}{$M=3$} & \multicolumn{2}{c}{ {$M=2$}} \\
 & $K=5$ & $K=10$ & $K=5$ & $K=10$ & $K=5$ & $K=10$ & $K=5$ & $K=10$ & $K=5$ & $K=10$ & $K=5$ & $K=10$ & $K=5$ & $K=10$ & $K=5$ & $K=10$ & $K=5$ & $K=8$ & $K=5$ & $K=10$ \\
\midrule
PCA+$k$-NN & 0.801 & 0.872 & 0.614 & 0.678 &  {0.578} &  {0.667} &  {0.268} &  {0.277} & 0.687 & 0.711 &0.262  & 0.270 &  {0.597} &  {0.638} &  {0.309} &  {0.358} & 0.617 & 0.647 &  {0.658} &  {0.719} \\
SVD+$k$-NN & 0.749 & 0.790 & 0.524 & 0.567 &  {0.587} &  {0.675} &  {0.268} &  {0.283} & 0.680 & 0.701 &0.259 & 0.266 &  {0.591} &  {0.618} &  {0.305} &  {0.413} & 0.624 & 0.648 &  {0.646} &  {0.715} \\
NCA+$k$-NN & 0.602 & 0.640 & 0.340 & 0.355 &  {0.547} &  {0.578} &  {0.245} &  {0.258} & 0.597 & 0.616 & 0.232 & 0.236 &  {0.549} &  {0.574} &  {0.267} &  {0.346} & 0.575 & 0.582 &  {0.612} &  {0.624} \\
Matching Net & 0.732 & 0.830 & 0.625 & 0.732 &  {0.687} &  {0.703} &  {0.286} &  {\bf 0.360} & 0.632 & 0.641 & 0.241 & 0.247 &  {0.735} &  {0.769} &  {0.412} &  {0.433} & 0.621 & 0.635 &  {0.715} &  {0.732} \\
Prototypical Net & 0.742 & 0.842 & 0.671 & 0.759 &  {0.710} &  {0.725} &  {0.296} &  {0.348} & 0.651 & 0.664 & 0.254 & 0.259 &  {0.769} &  {\bf 0.836} &  {\bf 0.448} &  {0.532} & 0.632 & 0.644 &  {\bf 0.729} &  {\bf 0.744} \\
%  {Siamese Net} &  {0} &  {0} &  {0} &  {0} &  {0} &  {0} &  {0} &  {0} &  {0} &  {0} &  {0} &  {0} &  {0} &  {0} &  {0} &  {0} \\
 {MetaOptNet} & {0.725}  & {0.843} & {0.658}& {0.790} &  {0.732} &  {0.741} &  {0.255} &  {0.363} &  {0.702} &  {0.713} &  {0.257} &  {0.298} &  {0.742} &  {0.755} &  {0.412} &  {0.453} &  {0.638} &  {0.642} &  {0.713} &  {0.739} \\
 {Feature embedding + $k$-NN} & {0.792}  & {0.798} & {0.546}& {0.551} &  {0.738} &  {0.742} &  {\bf 0.490} &  {\bf 0.486} &  {0.689} &  {0.691} &  {\bf 0.492} &  {\bf 0.494} &  {0.725} &  {0.751} &  {0.445} &  {0.495} &  {\bf 0.664} &  {0.691} &  {0.701} &  {0.710} \\
Kernel Smoothing & 0.777 & 0.873 & 0.559 & 0.579 &  {0.593} &  {0.601} &  {0.272} &  {0.278} & 0.642 & 0.661 & 0.272 & 0.282 &  {0.520} &  {0.565} &  {0.240} &  {0.285} & 0.367 & 0.370 &  {0.582} &  {0.604} \\
\bf{Truncated Dr.k-NN} & \textbf{0.815} & \textbf{0.926} & \textbf{0.742} & \textbf{0.825} &  {\bf 0.746} &  {\bf 0.753} &  {0.295} &  {0.340} & \textbf{0.703} & \textbf{0.719} &0.297  & 0.305 &  {\bf 0.755} &  {0.825} &  {0.425} &  {\bf 0.542} & 0.652 & \textbf{0.693} &  {0.722} &  {0.741} \\ 
\bf{Dr.k-NN} & \textbf{0.838} & \textbf{0.959} & \textbf{0.746} & \textbf{0.831} &   {\bf 0.752} &  {\bf 0.786} &  {\bf 0.306} &  {0.358} & \textbf{0.707} & \textbf{0.728} &\textbf{0.309}  & \textbf{0.311} &  {\bf 0.765} &  {\bf 0.850} &  {\bf 0.465} &  {\bf 0.580} & \textbf{0.667} & \textbf{0.704} &  {\bf 0.734} &  {\bf 0.752} \\  
\midrule[0.3pt]\bottomrule[1pt]
\end{tabular}%
}
\end{table*}

\section{Experiments}
\label{sec:exp}

In this section, we evaluate our method and eight alternative approaches on four commonly-used image data sets: MNIST \cite{Yann2010}, CIFAR-10 \cite{Krizhevsky}, Omniglot \cite{lake2015human}, and present a set of comprehensive numerical examples.

We also test our method on two medical diagnosis data sets: Lung Cancer \cite{Dua2019}, and COVID-19 CT \cite{Yang2020}, where very few data samples are available for study, due to privacy concerns and high costs associated with harvesting data. 
Specifically, Lung Cancer data record 56 attributes for only 32 patients who have been diagnosed with three types of pathological lung cancers; COVID-19 Computed Tomography (CT) data contain 349 COVID-19 CT images from 216 patients and 463 non-COVID-19 CTs. Here, we present a few examples of COVID-19 CT images and non-COVID-19 CT images in Appendix~\ref{sec:covid19-ct}.

% \begin{figure}[ht!]
% \centering
% \begin{subfigure}[h]{0.4\linewidth}
% \includegraphics[width=\linewidth]{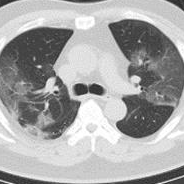}
% \caption{COVID-19 CT}
% \end{subfigure}
% \hspace{.1in}
% \begin{subfigure}[h]{0.4\linewidth}
% \includegraphics[width=\linewidth]{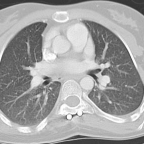}
% \caption{Non COVID-19 CT}
% \end{subfigure}
% \vspace{-.1in}
% \caption{An example of the comparison between a COVID-19 CT image and a non-COVID-19 CT image.}
% \label{fig:covid19-ct-exp}
% \vspace{-.2in}
% \end{figure}

\begin{figure*}[!t]
\centering
\begin{subfigure}[h]{0.242\linewidth}
\includegraphics[width=\linewidth]{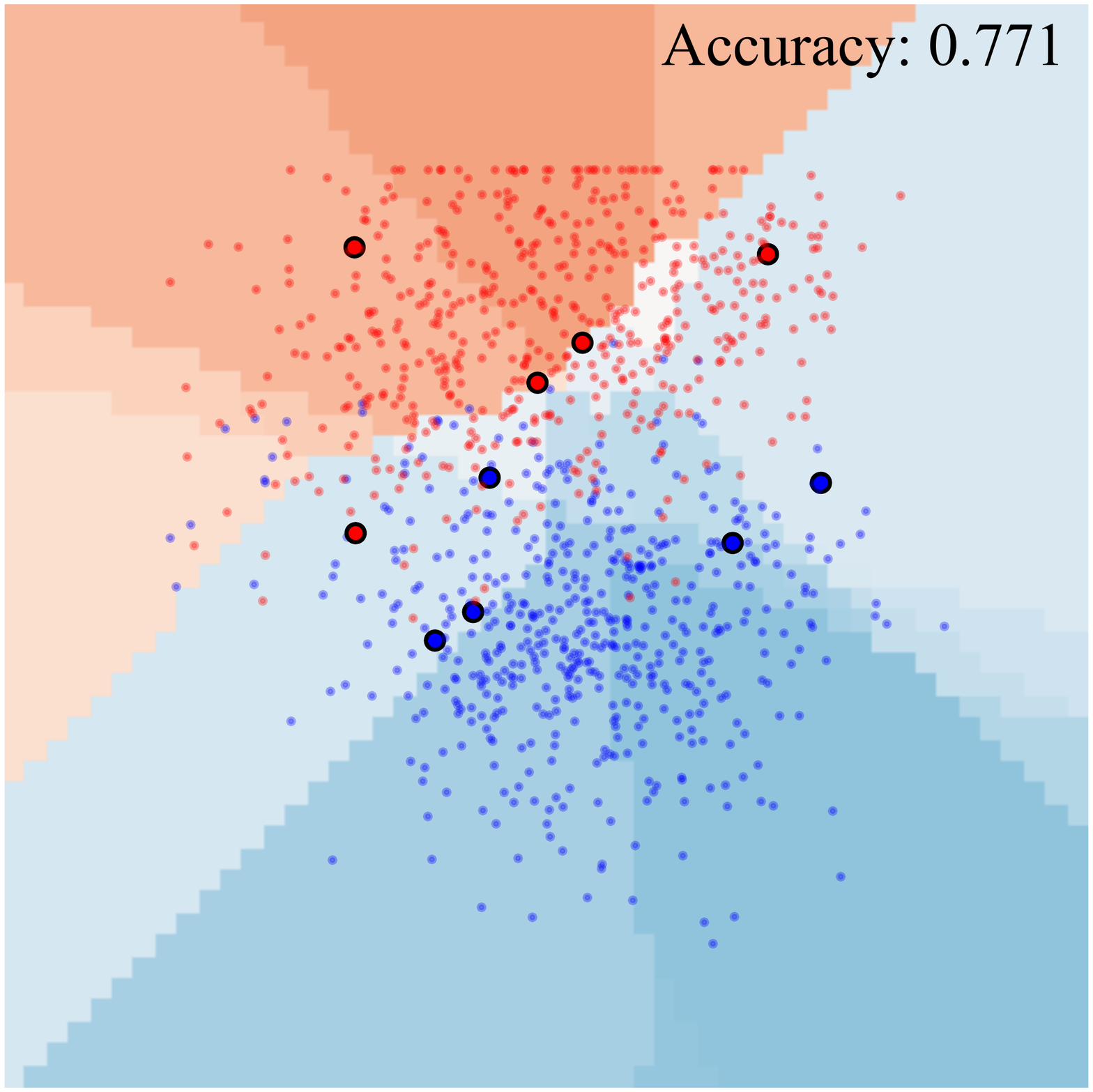}
\caption{\texttt{Dr.k-NN}}
\end{subfigure}
\begin{subfigure}[h]{0.242\linewidth}
\includegraphics[width=\linewidth]{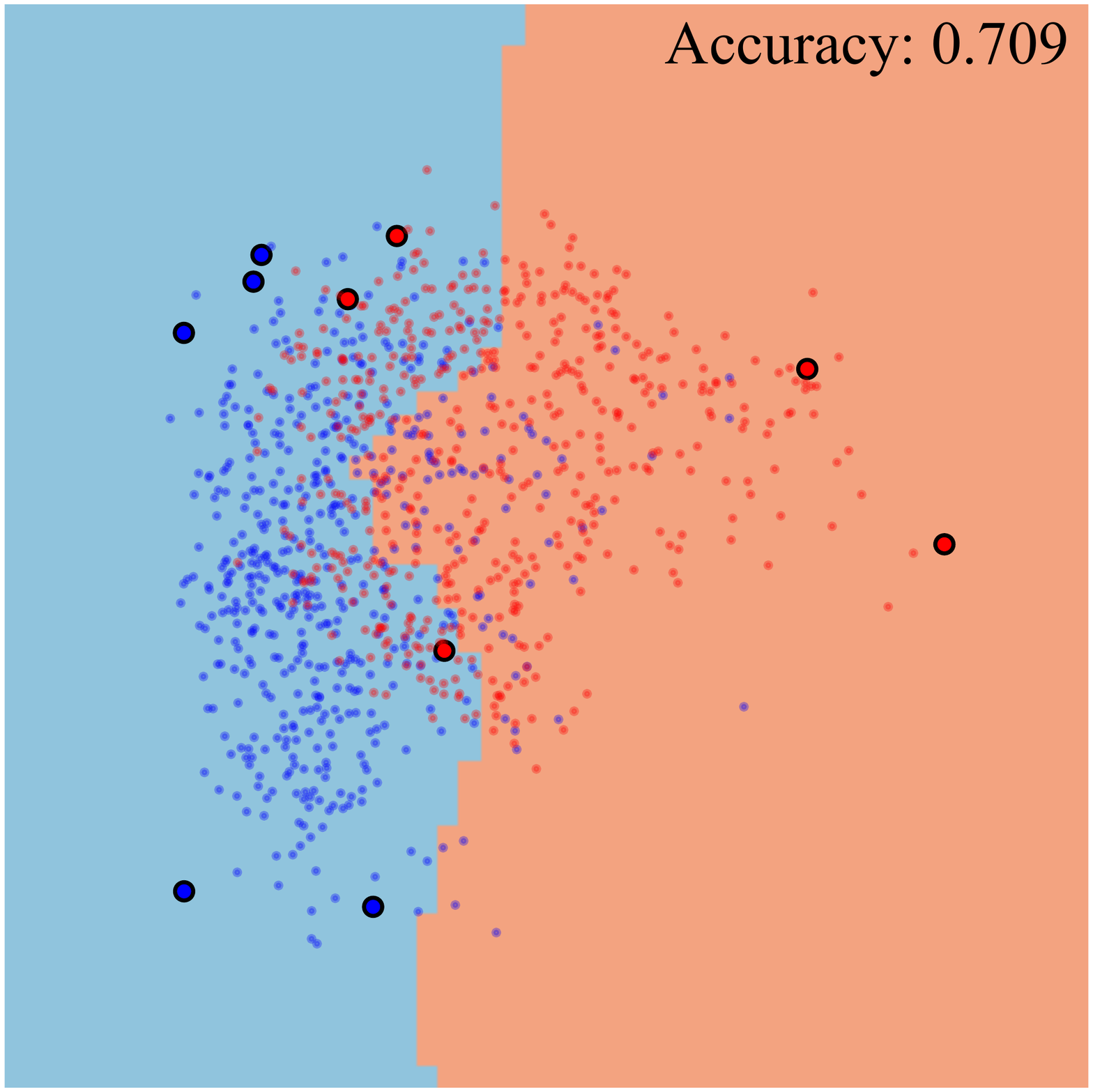}
\caption{PCA + $k$-NN}
\end{subfigure}
\begin{subfigure}[h]{0.242\linewidth}
\includegraphics[width=\linewidth]{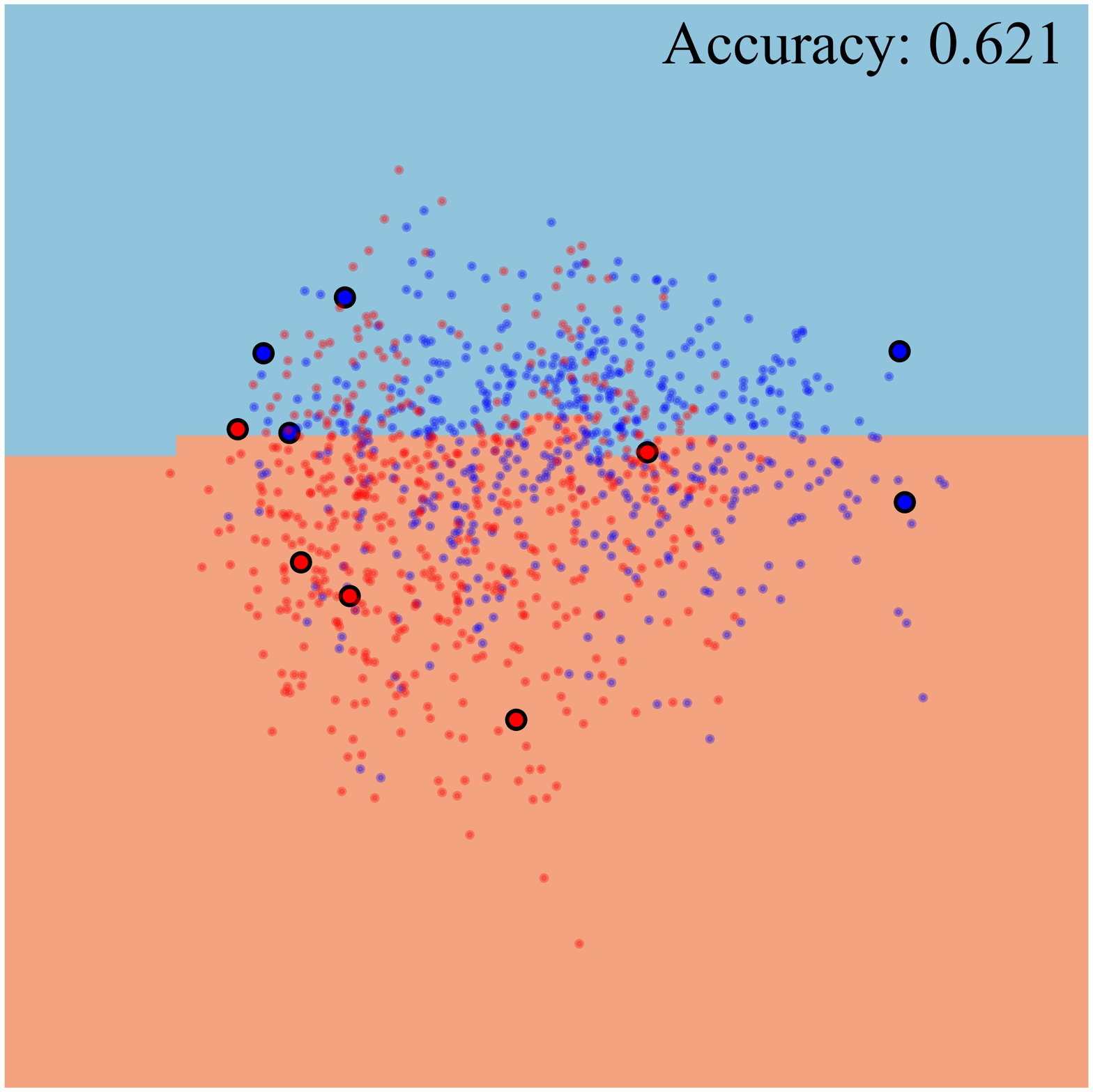}
\caption{SVD + $k$-NN}
\end{subfigure}
\begin{subfigure}[h]{0.242\linewidth}
\includegraphics[width=\linewidth]{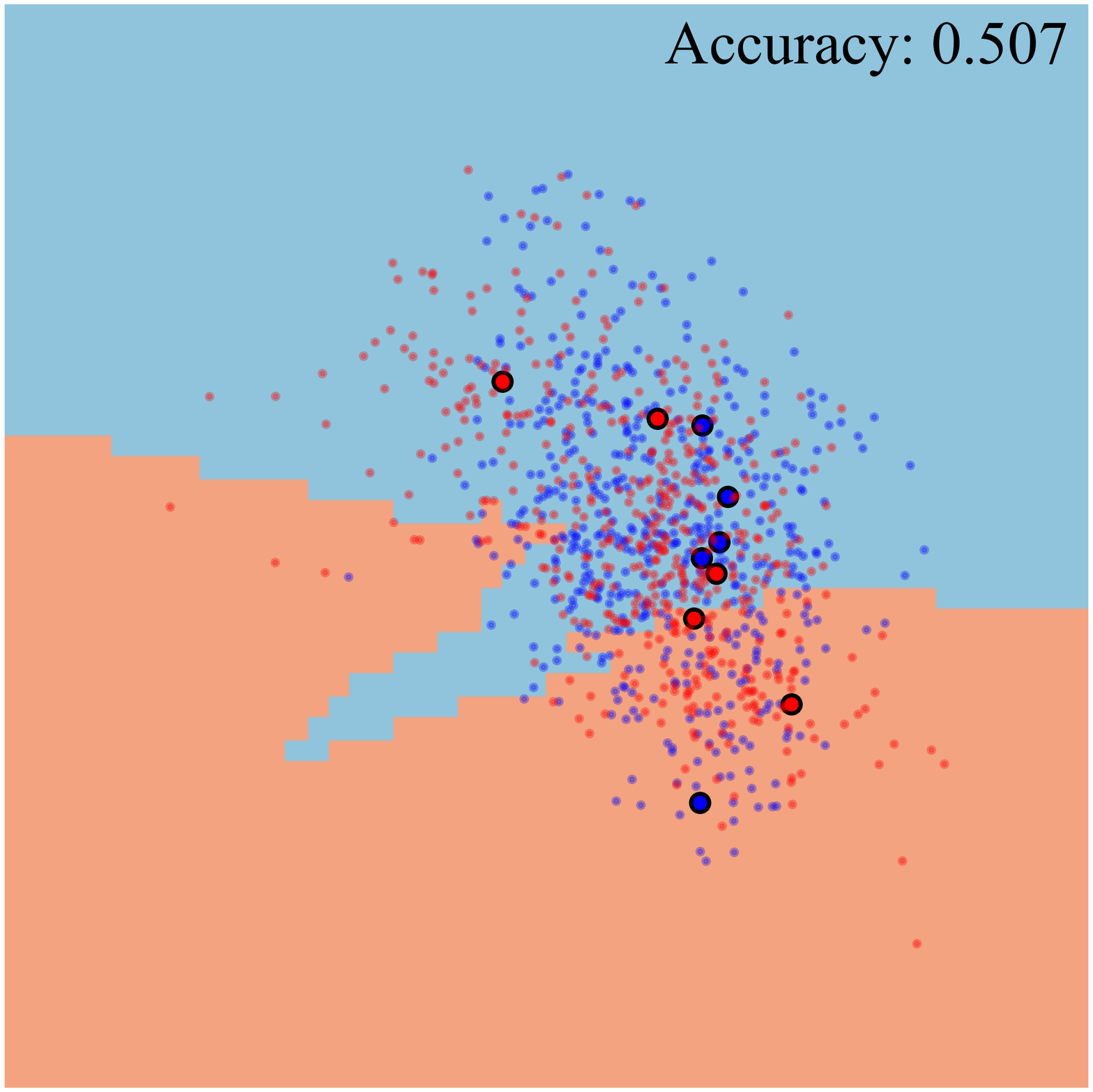}
\caption{NCA.}
\end{subfigure}
\caption{
{A comparison of the learned feature spaces and the corresponding decision boundaries. There are 10 training samples from two categories of MNIST identified as large dots and 1,000 query samples identified as small dots. The color of dots shows their true categories. The color of the region shows the decisions made by corresponding methods.}}
\label{fig:2d-visualization}
\end{figure*}

\paragraph{Experiment set-up.} 
We compare our method including \texttt{Dr.k-NN} and its truncated version with the following baselines: 
(1) $k$-NN based methods with different dimension reduction techniques, including Principal Component Analysis (\texttt{PCA+$k$-NN}), Singular Value Decomposition (\texttt{SVD+$k$-NN}), Neighbourhood Components Analysis (\texttt{NCA+$k$-NN}) \cite{Goldberger2005}, and feature embeddings generated by Dr.$k$-NN (Feature embedding + $k$-NN) as a sanity check;
(2) matching networks \cite{Vinyals2016}; 
(3) prototypical networks \cite{Snell2017}; 
(4) MetaOptNet \cite{lee2019meta}.
To make these methods comparable, we adopt the same naive neural network with a single CNN layer on matching network, prototypical network, and our model, respectively, where the kernel size is $3$, the stride is $1$ and the width of the output layer is $d=400$. 
% Here we also compare with an approach using  kernel-smoothing of the LFDs (in contrast to using $k$-NN) for performing classification. Consider a Gaussian kernel $\kappa(x) = |H_h|^{-1/2} \kappa(H_h^{-1/2} x)$, where $H_h = h I_p$ is the isotropical kernel with bandwidth $h$. Then we replace $\widetilde p_m(\xi)$ in Step 2 of \texttt{Dr.k-NN} with the following
% \begin{equation}
% 	\widetilde p_m(\xi) \coloneqq \sum_{i=1}^n P_m^*(\xi^i) \kappa(\xi - \xi^i),~m = 1,\dots, M, ~\forall \xi \in \xi.
% 	\label{eq:kernel-smoothing}
% \end{equation}
% \subsection{Experiment set-up} 

In our experiments, we focus on an $M$-class $K$-sample ($K$ training samples for each class) learning task.
To generate the training data set, we randomly select $M$ classes and for each class we take $K$ random samples.
So our training data set contains $MK$ samples overall.
%where each method is trained using a data set with $M$ classes and each class contains $K$ labeled samples. 
We then aim to classify a disjoint batch of unseen samples into one of these $M$ classes. Thus random performance on this task stands at $1/M$. 
We test the average performance of different methods using $1,000$ unseen samples from the same $M$ classes. 
To obtain reliable results, we repeat each test $10$ times and calculate the average accuracy. 
%
%We test the above learning task for each benchmark method on the data sets as aforementioned. 
%For each experiment, we first randomly select $M$ classes and corresponding $K$ random samples of each class to train each method, and 
%

Other experimental configurations are described as follows: The Adam optimizer \cite{kingma2014adam} is adopted for all experiments conducted in this paper, where learning rate is $10^{-2}$. The mini-batch size is 32. 
The hyper-parameter $\vartheta_m$ is chosen by cross-validation, which varies from application to application. 
% The radius $\vartheta_m$ is a trainable parameter and will be jointly updated by gradient descent. The initial value of $\vartheta_m$ is set to be $10^{-2}$.
The differentiable convex optimization layer we adopt is from \cite{Agrawal2019}. 
To make all approaches comparable, we use the same network structure in matching network, prototypical network, and MetaOptNet as we described above.
We use the Euclidean distance $c(\xi, \xi')= \|\xi- \xi'\|_2$ throughout our experiment. 
All experiments are performed on Google Colaboratory (Pro version) with 12GB RAM and dual-core Intel processors, which speed up to 2.3 GHz (without GPU).

% Please add the following required packages to your document preamble:
% \usepackage{multirow}
% \usepackage{graphicx}

% \begin{table}[!h]
% %\vspace{-.1in}
% \centering
% \caption{Comparison of few-sample classification accuracy between \texttt{Dr.k-NN} and other baselines.}
% \label{tab:acc-results}
% \resizebox{\textwidth}{!}{%
% \begin{tabular}{ccccccccc}
% \toprule[1pt]\midrule[0.3pt]
% \multirow{3}{*}{Methods} & \multicolumn{4}{c}{MNIST} & \multicolumn{4}{c}{CIFAR-10} \\
%  & \multicolumn{2}{c}{two-class} & \multicolumn{2}{c}{Five-class} & \multicolumn{2}{c}{two-class} & \multicolumn{2}{c}{Five-class} \\
%  & 5-shot & 10-shot & 5-shot & 10-shot & 5-shot & 10-shot & 5-shot & 10-shot \\
% \midrule
% PCA+$k$-NN & 0.801 & 0.872 & 0.614 & 0.678 & 0.687 & 0.711 &0.262  & 0.270  \\
% SVD+$k$-NN & 0.749 & 0.790 & 0.524 & 0.567 & 0.680 & 0.701 &0.259  &  0.266\\
% NCA+$k$-NN & 0.602 & 0.640 & 0.340 & 0.355 & 0.597 & 0.616 & 0.232 & 0.236 \\
% Matching Net& 0.732 & 0.830 & 0.625 & 0.732 & 0.632 & 0.641   & 0.241 &0.247\\
% Prototypical Net & 0.742 & 0.842 & 0.671 & 0.759 & 0.651 & 0.664 & 0.254 & 0.259 \\
% \bf{\texttt{Dr.k-NN}} & \textbf{0.838} & \textbf{0.959} & \textbf{0.746} & \textbf{0.831} & \textbf{0.707} & \textbf{0.728} &\textbf{0.309}  & \textbf{0.311} \\ 
% \bf{truncated \texttt{Dr.k-NN}} & \textbf{0.815} & \textbf{0.926} & \textbf{0.742} & \textbf{0.825} & \textbf{0.703} & \textbf{0.719} &\textbf{0.297}  & \textbf{0.305} \\ 
% \midrule[0.3pt]\bottomrule[1pt]
% \end{tabular}%
% }
% %\vspace{-.1in}
% \end{table}

\paragraph{Results.} 

We present the average test accuracy in Table~\ref{tab:acc-results} for the unseen samples with different $M = 2, 5$ and $K = 5, 10$ on small subsets of MNIST, \emph{mini} ImageNet, CIFAR-10, Omniglot, and with $M=3$ and $K = 5, 8$ on lung cancer data.
% \footnote{There are only three categories and 33 samples in the lung cancer data.}, and with $M=2$ and $K = 5, 10$ on COVID-19 CT data
% \footnote{There are 543 images with two categories (we only use 5 or 10 of each for training), ``positive'' and ``negative''.}. 
Note that random performance for two-class and five-class classifications are $0.5$ and $0.2$, respectively.
The figures in Table~\ref{tab:acc-results} show that \texttt{Dr.k-NN} ($k = 5$) outperforms other baselines in terms of the average test accuracy on all data sets. 
We note that 95\% confidence interval of our method's performance on all the data sets is smaller than 0.08. 
The truncated \texttt{Dr.k-NN} also yields competitive results using only 20\% training samples ($\tau=0.9$), compared to standard \texttt{Dr.k-NN}. 

To confirm that the proposed learning framework will affect the distribution of hidden representation of data points, we show the training and query samples in a 2D feature space and the corresponding decision boundary %. We adjust the width of the output layer to 2 (the dimension of the feature) and plot them on a 2D space as shown 
in Figure~\ref{fig:2d-visualization}. 
It turns out our framework finds a better feature representation in the 2D space with a smooth decision boundary and a reasonable decision confidence map (indicated by the color depth in Figure~\ref{fig:2d-visualization} (a)).

\paragraph{Comparison to kernel smoothing.}

% The performance of \texttt{Dr.k-NN} is insensitive to the choice of the number of selected neighbors $k$ as shown in Figure~\ref{fig:dr-knn-with-k}. 
% In contrast, the performance using kernel smoothing method defined in \eqref{eq:kernel-smoothing} is heavily depended on selecting an appropriate kernel bandwidth $h$ as illustrated in Figure~\ref{fig:kernel-smoothing-with-h}.
We also compare with an approach using  kernel-smoothing of the LFDs (in contrast to using $k$-NN) for performing classification. Consider a Gaussian kernel $\kappa(x) = |H_h|^{-1/2} \kappa(H_h^{-1/2} x)$, where $H_h = h I_p$ is the isotropical kernel with bandwidth $h$. Then we replace $\widetilde p_m(\xi)$ in Step 2 of \texttt{Dr.k-NN} with the following
\begin{equation}
	\widetilde p_m(\xi) \coloneqq \sum_{i=1}^n P_m^*(\xi^i) \kappa(\xi - \xi^i),~\forall \xi,\ m=1,\ldots,M.
	\label{eq:kernel-smoothing}
\end{equation}
We evaluate both methods on a subset of MNIST, which contains 1,000 testing samples (small dots) and 20 training samples (large dots) from two categories (indexed by blue and red, respectively). 

As shown in Figure~\ref{fig:dr-knn-with-k} and Figure~\ref{fig:kernel-smoothing-with-h} (Appendix~\ref{append:comparison-kernel-smoothing}), our experimental results have shown the importance of using $k$-NN in our proposed algorithm, where the \texttt{Dr.kNN} significantly outperforms the parallel version using kernel smoothing (even after the kernel bandwidth being optimized). We find that the performance when using kernel smoothing  \eqref{eq:kernel-smoothing} heavily depends on selecting an appropriate kernel bandwidth $h$ as illustrated by Figure~\ref{fig:kernel-smoothing-with-h}. 

Moreover, the best kernel bandwidth may vary from one dataset to another. Therefore, the cross-validation is required to be carried out to find the best kernel bandwidth in practice, which is quite time-consuming. In contrast, choosing the hyper-parameter $k$ is an easy task, since we only have limited choices of $k$ in few-training-sample scenario and the performance of \texttt{Dr.k-NN} is insensitive to the choices of $k$ (see Figure~\ref{fig:dr-knn-with-k}).

\section{Conclusion}
\label{sec:conclusion}

We propose a distributionally robust $k$-NN classifier (\texttt{Dr.k-NN}) for tackling the multi-class classification problem with few training samples. To make a decision, each neighboring sample is weighted according to least favorable distributions resulting from a distributionally robust problem. 
As shown in the theoretical results and demonstrated by experiments, our methods achieve outstanding performance in classification accuracy compared with other baselines using minimal resources.
The robust classifier layer \eqref{eq:LFD-problem} serves an alternative to the usual softmax layer in a neural network for classification, and we believe it is promising for other machine learning tasks.

% \begin{ack}
% Use unnumbered first level headings for the acknowledgments. All acknowledgments
% go at the end of the paper before the list of references. Moreover, you are required to declare
% funding (financial activities supporting the submitted work) and competing interests (related financial activities outside the submitted work).
% More information about this disclosure can be found at: \url{https://neurips.cc/Conferences/2021/PaperInformation/FundingDisclosure}.

% Do {\bf not} include this section in the anonymized submission, only in the final paper. You can use the \texttt{ack} environment provided in the style file to autmoatically hide this section in the anonymized submission.
% \end{ack}

\bibliographystyle{plain}
\bibliography{refs}

%%%%%%%%%%%%%%%%%%%%%%%%%%%%%%%%%%%%%%%%%%%%%%%%%%%%%%%%%%%%%%%%%%%%%%%%%%%%%%%
%%%%%%%%%%%%%%%%%%%%%%%%%%%%%%%%%%%%%%%%%%%%%%%%%%%%%%%%%%%%%%%%%%%%%%%%%%%%%%%
% APPENDIX
%%%%%%%%%%%%%%%%%%%%%%%%%%%%%%%%%%%%%%%%%%%%%%%%%%%%%%%%%%%%%%%%%%%%%%%%%%%%%%%
%%%%%%%%%%%%%%%%%%%%%%%%%%%%%%%%%%%%%%%%%%%%%%%%%%%%%%%%%%%%%%%%%%%%%%%%%%%%%%%

\newpage
\appendix
\onecolumn
% \section{You \emph{can} have an appendix here.}

% You can have as much text here as you want. The main body must be at most $8$ pages long.
% For the final version, one more page can be added.
% If you want, you can use an appendix like this one, even using the one-column format.

\section{Proofs for Section~\ref{sec:LFD}}
\label{append:proof-strong-duality}

\subsection{Proof of Theorem \ref{thm:duality}}

The proof of Theorem~\ref{thm:duality} is based on the following two lemmas.
\begin{lemma}\label{lemma:simple test}
Fix probability distributions $P_1, \ldots,P_M\in\mathscr{P}(\hXi)$, where $\hXi = \{\xi^1,\ldots,\xi^n\}$. Then
\begin{align*}
    \psi(P_1, \dots, P_M) := &~\min_{\pi:\hXi\to\Delta_M} \Psi(\pi;P_1,\ldots,P_M)  = ~M - \sum_{i=1}^n \max_{1\leq m \leq M} P_m(\xi^i).
    \label{eq:minimal-risk}
\end{align*}
Furthermore, the optimal classifier $\pi^*$ satisfies that for any $\xi^i\in\hXi$,
\[
\left\{m: \pi_{m}^*(\xi^i) > 0, 1\leq m \leq M\right\}  \subset \argmax_{1\leq m\leq M}  {\frac{P_m(\xi^i)}{\sum_{m=1}^M P_m(\xi^i)}}.
\]
% satisfies
% $\pi_{m}^*(\widehat\xi) \geq 0$ if and only if $m$ belongs to the index set 
% \[ 
%     \argmax_{1\leq m\leq M}  {\frac{P_m(\widehat\xi)}{\sum_{m=1}^M P_m(\widehat\xi)}}.
% \]
\end{lemma}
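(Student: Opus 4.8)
The plan is to reduce the minimization over randomized classifiers to a pointwise (per-sample) linear program over the simplex, which admits a closed-form solution.

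First I would expand the risk using its definition \eqref{eq:risk}. Since each $P_m$ is supported on $\hXi$, we have
\[
\Psi(\pi; P_1,\ldots,P_M) = \sum_{m=1}^M \sum_{i=1}^n P_m(\xi^i)\bigl(1 - \pi_m(\xi^i)\bigr).
\]
Because each $P_m$ is a probability distribution, $\sum_{i=1}^n P_m(\xi^i) = 1$, so the constant term contributes exactly $M$, giving
\[
\Psi(\pi; P_1,\ldots,P_M) = M - \sum_{i=1}^n \sum_{m=1}^M P_m(\xi^i)\,\pi_m(\xi^i).
\]
Minimizing $\Psi$ over $\pi$ is therefore equivalent to maximizing the bilinear term $\sum_{i=1}^n \sum_{m=1}^M P_m(\xi^i)\,\pi_m(\xi^i)$.

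The key observation is that this objective decouples across the points $\xi^i$: the vector $\pi(\xi^i) = (\pi_1(\xi^i),\ldots,\pi_M(\xi^i))$ ranges freely over $\Delta_M$, independently for each $i$, since $\pi$ is an arbitrary function on the finite set $\hXi$ with no constraint coupling distinct points. Thus I would solve, for each fixed $i$, the inner problem $\max_{\pi(\xi^i)\in\Delta_M} \sum_{m=1}^M P_m(\xi^i)\,\pi_m(\xi^i)$. This is the maximization of a linear functional over the probability simplex, whose optimal value equals the largest coefficient $\max_{1\leq m\leq M} P_m(\xi^i)$ and is attained precisely by the distributions supported on $\argmax_{1\leq m\leq M} P_m(\xi^i)$. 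Summing over $i$ yields $\psi(P_1,\ldots,P_M) = M - \sum_{i=1}^n \max_{1\leq m\leq M} P_m(\xi^i)$, establishing the claimed value.

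For the support characterization of $\pi^*$, the optimizer of each per-sample problem places positive mass only on indices achieving $\max_{1\leq m\leq M} P_m(\xi^i)$. Finally I would note that the normalizing denominator $\sum_{m'=1}^M P_{m'}(\xi^i)$ does not depend on $m$, so $\argmax_{1\leq m\leq M} P_m(\xi^i) = \argmax_{1\leq m\leq M} \frac{P_m(\xi^i)}{\sum_{m'=1}^M P_{m'}(\xi^i)}$, which gives the stated inclusion. I do not anticipate a substantial obstacle; the only point requiring care is confirming that the minimization genuinely decouples across samples, which holds because $\pi$ is unconstrained across distinct points of $\hXi$.
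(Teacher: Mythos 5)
Your proposal is correct. Both your proof and the paper's rest on the same core observation---the minimization reduces to a pointwise linear program over the simplex $\Delta_M$, whose optimum picks out the largest coefficient---but the routes differ in machinery and generality. The paper proves the statement for an arbitrary sample space: it writes the risk as an integral against $P_1+\cdots+P_M$, introduces the Radon--Nikodym derivatives $\frac{dP_m}{d(P_1+\cdots+P_M)}$, and invokes the interchangeability principle to swap the minimization with the integration, only afterwards specializing to the empirical support $\hXi$. You instead exploit the finiteness of $\hXi$, which the lemma itself assumes: the risk is a finite sum that visibly decouples across the points $\xi^i$ because $\pi$ is an unconstrained function on a finite set, so no interchange theorem is needed and each per-point problem is an elementary maximization of a linear functional over $\Delta_M$. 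Your argument is shorter and entirely self-contained; the paper's buys a statement valid for general (non-discrete) distributions, which it does not actually need for this lemma but which mirrors the style of the rest of its analysis. One pedantic caveat, common to both proofs: at a point $\xi^i$ with $\sum_{m=1}^M P_m(\xi^i)=0$, the normalized ratio in the lemma's second claim is formally $0/0$; there the inclusion is vacuous, since every index attains the (zero) maximum and the choice of $\pi^*(\xi^i)$ does not affect the objective, so neither proof is harmed.
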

This lemma gives a closed-form expression for the risk of the optimal classifier if $P_1,\ldots,P_M$ are known, and shows that the optimal decision $\pi^*$ accepts the class with the maximum likelihood. Moreover, when there is a tie (i.e., the set $\argmax_{1\leq m\leq M}  {P_m(\xi)}$ is not singleton), the optimal decision $\pi^*$ can break the tie arbitrarily. 

\begin{proof}[Proof of Lemma\,\ref{lemma:simple test}]
  We here prove a more general result for an arbitrary sample space $\Xi$.
  Note that each $P_m$, $1\leq m \leq M$, is absolutely continuous with respect to $P_1+\cdots+P_M$, hence the Radon-Nikodym derivative $\frac{dP_m}{d(P_1+\cdots+P_M)}$ exists.
  Using the interchangeability principle \cite{shapiro2014lectures} that enables us to exchange the minimization and integration, we have
  \begin{equation}\label{eq:int_inf}
    \begin{aligned}
     \min_{\pi:\Xi\rightarrow \Delta_M} \Psi(\pi;P_1,\ldots,P_M) 
    =& \min_{\pi:\Xi\rightarrow \Delta_M} \int_{\Xi} \Big[\sum_{m=1}^M (1-\pi_m(\xi))\textstyle {\frac{dP_m}{d(P_1+\cdots+P_M)}(\xi)}\Big]d(P_1+\cdots+P_M) \\
    =& \int_{\Xi} \min_{\pi\in \Delta_M} \Big[\sum_{m=1}^M (1-\pi_m)\textstyle {\frac{dP_m}{d(P_1+\cdots+P_M)}(\xi)}\Big] d(P_1+\cdots+P_M) \\
    = & \int_{\Xi} \Big[1-\max_{1\leq m\leq M} \textstyle {\frac{dP_m}{d(P_1+\cdots+P_M)}}(\xi)\Big] d(P_1+\cdots+P_M),
      \end{aligned}
  \end{equation}
  where the first equality is obtained by plugging in the definition of $\Psi$ in \eqref{eq:risk}; the second equality is due the interchangeability principle; and the last equality holds because for any $\xi$, the inner minimization attains its minimum at one of the vertices of $\Delta_M$.
  More specifically, note that for each $\xi$, the objective of the inner minimization problem equals to $1- \sum_{m=1}^M \pi_m\textstyle {\frac{dP_m}{d(P_1+\cdots+P_M)}(\xi)}$. Under the constaint that $\pi\in\Delta_M$, i.e., $\sum_{m=1}^M\pi_m=1$, we have: $$\sum_{m=1}^M \pi_m {\frac{dP_m}{d(P_1+\cdots+P_M)}(\xi)}\leq \max\limits_{1\leq m\leq M}  {\frac{dP_m}{d(P_1+\cdots+P_M)}(\xi)},$$
  and the equality holds when $\pi$ are chosen such that
  $$
  \left\{m: \pi_{m} > 0, 1\leq m \leq M\right\}  \subset \argmax_{1\leq m\leq M}  {\frac{dP_m(\xi)}{\sum_{m=1}^M dP_m(\xi)}}.
  $$
  If there is a single maximum in $\{\textstyle {\frac{dP_m}{d(P_1+\cdots+P_M)}(\xi)}, 1\leq m\leq M\}$, say at index $m^*$, then this simply implies that the optimal $\pi$ is chosen as $\pi_{m^*}=1$ and $\pi_m=0$ for $m\neq m^*$.

  If we substitute $\Xi$ with the empirical support $\hXi$, the above formulation in Equation \eqref{eq:int_inf} translates into 
  $$
  \min_{\pi:\hXi\rightarrow \Delta_M} \Psi(\pi;P_1,\ldots,P_M) =M - \sum_{i=1}^n \max_{1\leq m \leq M} P_m(\xi^i),
  $$
  therefore the lemma is proved.
\end{proof}

\begin{lemma}\label{lemma:supinf}
  For the uncertainty sets defined in \eqref{eq:was_set}, the problem $\max_{P_m\in\cP_m,1\leq m\leq M}\;\psi(P_1, \dots, P_M)$ is equivalent to \eqref{eq:LFD-problem}.
\end{lemma}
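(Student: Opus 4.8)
The plan is to substitute the closed-form risk from Lemma~\ref{lemma:simple test} and then linearize the Wasserstein-ball constraint by promoting the transport couplings to explicit decision variables. First I would apply Lemma~\ref{lemma:simple test} to rewrite the inner objective, obtaining
$$\max_{P_m\in\cP_m,\,1\le m\le M}\psi(P_1,\dots,P_M) \;=\; M - \min_{P_m\in\cP_m,\,1\le m\le M}\ \sum_{i=1}^n\max_{1\le m\le M}P_m(\xi^i),$$
where I have used that $M$ is a constant and that maximizing $-f$ amounts to minimizing $f$. Writing $p_m^i := P_m(\xi^i)$, the objective $\sum_i \max_m p_m^i$ already matches the objective of \eqref{eq:LFD-problem} exactly, so it remains to show that the feasible set $\{(p_1,\dots,p_M):P_m\in\cP_m,\,1\le m\le M\}$ coincides with the projection onto the $p$-variables of the feasible region of \eqref{eq:LFD-problem}. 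In particular, ``equivalent'' here means the two optimal values differ only by the additive constant $M$ and a sign, while the maximizers $P_m^\ast$ are recovered as the $p$-components of the minimizers of \eqref{eq:LFD-problem}.

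The key step is to unfold the definition of the order-1 Wasserstein distance on the finite support $\hXi$. By definition, $\mathcal W(P_m,\widehat P_m)\le\vartheta_m$ holds if and only if there exists a coupling $\gamma_m\in\R_+^{n\times n}$ with marginals $P_m$ and $\widehat P_m$ — that is, $\sum_j \gamma_m^{i,j}=p_m^i$ and $\sum_i \gamma_m^{i,j}=\widehat P_m(\xi^j)$ — satisfying $\sum_{i,j}\gamma_m^{i,j}c(\xi^i,\xi^j)\le\vartheta_m$. Since $\mathcal W$ is itself a minimum over couplings, the constraint $\mathcal W(P_m,\widehat P_m)\le\vartheta_m$ is precisely the assertion that at least one admissible coupling of transport cost at most $\vartheta_m$ exists. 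Hence $P_m\in\cP_m$ is equivalent to the existence of such a $\gamma_m$, and the membership constraints across $m=1,\dots,M$ decouple because each $\cP_m$ is defined independently, so each $\gamma_m$ may be introduced separately.

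Because the objective $\sum_i\max_m p_m^i$ depends only on the marginals $p_m$ and not on the couplings $\gamma_m$, replacing the existential quantifier over $\gamma_m$ by joint minimization over $(p_1,\dots,p_M,\gamma_1,\dots,\gamma_M)$ leaves the optimal value unchanged: for any feasible $(p_m)_m$ the freedom in $\gamma_m$ is needed only to certify feasibility, never to improve the objective. This converts the constrained minimization over $P_m\in\cP_m$ into exactly the minimization \eqref{eq:LFD-problem}. I would also record the side remark that the simplex normalization $\sum_i p_m^i=1$ need not be imposed separately: summing the marginal constraint $\sum_j\gamma_m^{i,j}=p_m^i$ over $i$ yields $\sum_i p_m^i=\sum_j\widehat P_m(\xi^j)=1$, since $\widehat P_m$ is a probability vector.

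The main obstacle — indeed essentially the only nontrivial point — is the reformulation of the Wasserstein ball in the second paragraph: one must verify that the constraint $\mathcal W(P_m,\widehat P_m)\le\vartheta_m$, defined through an \emph{infimum} over couplings, is correctly replaced by the \emph{existence} of a single cost-feasible coupling, and that pulling this coupling into the outer minimization creates no spurious feasible points (which is guaranteed because the objective is blind to $\gamma_m$). Everything after that is substitution and bookkeeping.
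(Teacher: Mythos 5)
Your proof is correct and takes essentially the same route as the paper's: both arguments combine the closed form of $\psi$ from Lemma~\ref{lemma:simple test} with an unfolding of the order-1 Wasserstein ball on the finite support $\hXi$ into explicit coupling variables $\gamma_m$ subject to the marginal and transport-cost constraints of \eqref{eq:LFD-problem}. If anything, your write-up is slightly more careful than the paper's, since you make explicit both the sign/additive-constant-$M$ bookkeeping hidden in the word ``equivalent'' and the fact that replacing the constraint $\mathcal{W}(P_m,\widehat{P}_m)\le\vartheta_m$ by the existence of a single cost-feasible coupling is harmless because the objective is blind to $\gamma_m$.
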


\begin{proof}[Proof of Lemma\,\ref{lemma:supinf}]
Recall that the Wasserstein metric of order 1 is defined as
$$
 \mathcal{W}(P, P') \coloneqq \min_\gamma\;  \mathbb{E}_{(\xi, \xi') \sim \gamma} \left[ c(\xi, \xi') \right]$$ for any two distributions $P$ and $P'$ on $\Xi$,
where the minimization of $\gamma$ is taken over the set of all probability distributions on $\Xi \times \Xi$ with marginals $P$ and $P'$, i.e., the set
\[
\left\{\gamma \in \mathscr P(\Xi \times \Xi): \int_{\Xi} \gamma(\xi,\xi')d\xi' = P(\xi), \int_{\Xi} \gamma(\xi,\xi')d\xi = P'(\xi'), \forall \xi,\xi'\in \Xi  \right\},
\]
where $\mathscr{P}(\Xi \times \Xi)$ denotes the joint probability distributions on $\Xi \times \Xi$. Therefore, the Wasserstein metric $\mathcal{W}(P, P')$ can be rewritten as
\[
\min_\gamma\left\{  \int_{\Xi\times\Xi}c(\xi, \xi')\gamma(\xi,\xi') d\xi d\xi': \int_{\Xi} \gamma(\xi,\xi')d\xi' = P(\xi) ,\int_{\Xi} \gamma(\xi,\xi')d\xi = P'(\xi'), \forall \xi,\xi'\in \Xi\right\} 
\]

By the definition of uncertainty sets in \eqref{eq:was_set} which contains discrete distributions supported on $\hXi$, we can introduce additional variables $\gamma_m \in \mathbb{R}_{+}^{n\times n}$ which represents the distribution on $\hXi \times \hXi$, with marginals $P_m \in \mathcal{P}_m$ and $\widehat{P}_m$, for $1\leq m \leq M$. 
For any $\xi^i,\xi^j \in \hXi$, let $\gamma_m^{i,j}$ denotes $\gamma_m(\xi^i,\xi^j)$ for simplicity. Thus the objective function in the above reformualtion of $\mathcal{W}(P_m,\widehat{P}_m)$ is $\sum_{i=1}^n\sum_{j=1}^n\gamma_m^{i,j}c(\xi^i,\xi^j)$.
The constraints $\mathcal{W}(P_m, \widehat{P}_m) \le \vartheta_m$ in \eqref{eq:was_set} can be rewritten using $\gamma_m$ as
\[
\sum_{i=1}^n \sum_{j=1}^n \gamma_m^{i,j} c(\xi^i, \xi^j) \le \vartheta_m,\quad m=1,\ldots,M.
\]
Furthermore, the marginal distribution constraint of $\gamma_m$ reads:
\[
\sum_{i=1}^n \gamma_m^{i,j} = \widehat{P}_m(\xi^j), \quad  \sum_{j=1}^n \gamma_m^{i,j} = P_m(\xi^i),\quad ~m=1,\dots,M.
\]
Thereby the problem $\max_{P_m\in\cP_m,1\leq m\leq M}\;\psi(P_1, \dots, P_M)$ is equivalent to the convex optimization formulation in \eqref{eq:LFD-problem}.
\end{proof}

%With Lemmas \ref{lemma:simple test} and \ref{lemma:supinf}, to prove Theorem \ref{thm:duality} it remains to verify the validity of exchanging $\max$ and $\min$. 

\begin{proof}[Proof to Theorem~\ref{thm:duality}]
By Lemmas~\ref{lemma:simple test} and \ref{lemma:supinf}, we have 
\begin{equation}
\max_{P_m\in\mathcal{P}_m,1\leq m \leq M}  \min_{\pi:\hXi\to\Delta_M}   \Psi(\pi;P_1,\ldots,P_M) \nonumber = \max_{P_m\in\mathcal{P}_m,1\leq m \leq M}  \psi(P_1,\ldots,P_M) = \eqref{eq:LFD-problem}.
\end{equation}
To prove Theorem \ref{thm:duality}, it remains to verify the validity of exchanging $\max$ and $\min$. We identify $\pi$
% note that the value of 
% \[
% \min_{\pi} \max_{P_m\in\mathcal{P}_m,1\leq m \leq M}   \Psi(\pi;P_1,\ldots,P_M) 
% \]
% only depends on the value of $\pi$ on empirical support $\widehat{\xi}$. Therefore, we can rewrite the variable $\pi$ 
as $(\pi^{1},\ldots,\pi^{n})$, where $\pi^{i} \in \mathbb{R}_{+}^{M}$ satisfies $\sum_{m=1}^M\pi^{i}_m = 1$. 
Similar to the proof of Lemma~\ref{lemma:supinf}, $P_m$, $1\leq m \leq M$, can also be identified as a vector in $\R^n$.
Note that the objective function $\Psi(\pi;P_1,\ldots,P_M)$ is linear in $(\pi^{1},\ldots,\pi^{n})$ and concave in $(P_1,\ldots,P_M)$, and the Slater condition holds. 
Hence applying convex programming duality we can exchange $\max$ and $\min$ and thus the result follows. It is worth mentioning that the optimal solution $\pi^*$ and corresponding LFDs $P_1^*,\ldots,P_M^*$ always exist since they are solutions to a saddle point problem.
\end{proof}  

\subsection{Proof of Theorem~\ref{thm:robust-knn}}
    
\begin{proof}[Proof of Theorem~\ref{thm:robust-knn}]
On the one hand, since $\pi^{\knn}(\cdot;k,w)$ can be regarded as a special case of the general classifier $\pi: \Xi\rightarrow \Delta_M$, it holds that
\[
\begin{aligned}
  & \min_{\substack{w_m:\Xi\times\Xi\to\R_{+},\,1\leq m\leq M\\
  1\leq k\leq n}} \max_{P_m\in\mathcal{P}_m,1\leq m\leq M} \sum_{m=1}^M \mathbb{E}_{\xi_m \sim P_m} [ 1 - \pi^{\knn}_m(\xi_m;k,w) ] \\
  \geq & \underset{\pi:\hXi\to \Delta_M}{\min}~\max_{P_m\in\mathcal{P}_m,1\leq m\leq M} \sum_{m=1}^M \mathbb{E}_{\xi_m \sim P_m} [ 1 - \pi_m(\xi_m) ].
\end{aligned}
\]

On the other hand, by Lemma\,\ref{lemma:supinf}, there exists an optimal solution to the minimax problem \eqref{eq:minimax-problem}, denoted as $(P_1^*,\ldots,P_M^*)$, and the optimal classifier $\pi^*$ as given in Lemma\,\ref{lemma:simple test}.
% By the universal approximation theorem on the feature extractor $\phi$, there exists a weight vector $ \tilde p_1',\ldots,\tilde p_M' $ and another feature extractor $\tilde\phi$ such that for all empirical samples $x$, we have
% \[
% \pi^{\knn}_m(\tilde\phi(x)|\tilde p_1',\ldots,\tilde p_M' ) = \pi^{\phi}(\phi(x))
% \]
Note that there exists $1\leq k^* \leq n$ and weight functions $w^*_1,\ldots,w^*_M$ such that 
\begin{equation}\label{eq:pi_knn_def}
\pi^{\knn}_m(\xi;k^*,w^*) = \pi_m^*(\xi), \quad \forall \xi\in\hXi,
\end{equation}
for example, by taking $k^*=1$ and $w_m^* = P_m^*$, $1\leq m\leq M$.
%, although there might exist other choices of $k^*$ and $Q^*$ as well. 
This implies that 
\begingroup
\allowdisplaybreaks
\[
\begin{aligned}
& \min_{\substack{w_m:\Xi\times\Xi\to\R_{+},\,1\leq m\leq M\\
  1\leq k\leq K}} \max_{P_m\in\mathcal{P}_m,1\leq m\leq M} \sum_{m=1}^M \mathbb{E}_{\xi_m \sim P_m} [ 1 - \pi^{\knn}_m(\xi_m;k,w) ] \\
\leq & \max_{P_m\in\mathcal{P}_m,1\leq m\leq M}\sum_{m=1}^M \mathbb{E}_{\xi_m \sim  P_m} [ 1 - \pi^{\knn}_m(\xi_m;k^*,w^*) ] \\
= &  \max_{P_m\in\mathcal{P}_m,1\leq m\leq M}\sum_{m=1}^M \mathbb{E}_{\xi_m \sim  P_m} [ 1 - \pi_m^*(\xi_m) ] \\
= &  \underset{\pi:\hXi\to \Delta_M}{\min}~\max_{P_m\in\mathcal{P}_m,1\leq m\leq M} \sum_{m=1}^M \mathbb{E}_{\xi_m \sim P_m} [ 1 - \pi_m(\xi_m) ] .
\end{aligned}
\]
\endgroup

Thereby we have shown that formulations \eqref{eq:robust-knn} and \eqref{eq:minimax-problem} have identical optimal values. Moreover, by the strong duality results in Theorem~\ref{thm:duality}, we know $(\pi^*;P_1^*,\ldots,P_M^*)$ is the saddle point for the formulation \eqref{eq:minimax-problem}, and  by the above arguments, we see that $(\pi^{\knn};P_1^*,\ldots,P_M^*)$ leads to the same optimal value for formulation \eqref{eq:robust-knn} as $(\pi^*;P_1^*,\ldots,P_M^*)$ for formulation \eqref{eq:minimax-problem}. Therefore, we show that $\pi^{\knn}$ is indeed the optimal solution to \eqref{eq:robust-knn}.
% It remains to verify that for the LFDs solution $P_1^*,\ldots,P_M^*$ to \eqref{eq:minimax-problem}, we have the corresponding weighted $1$-NN classifier is the solution to the $\min$ problem. To show this, note that 
\end{proof}

\subsection{Proof of Theorem~\ref{thm:lip}}

\begin{proof}[Proof of Theorem~\ref{thm:lip}]
% The Lipschitz norm of the function $\pi_m$, $m=1,2,\ldots,M$, is defined as: 
% \[
%   \norm{\pi_m}_{\Lip} := \sup_{\xi,\txi\in\hXi,\,\xi\ne\txi} \frac{\pi_m(\txi)-\pi_m(\xi)}{c(\txi,\xi)}.
% \]
% % \newcommand{\hP}{\widehat{P}}
% \begin{lemma}\label{lemma:lip}
% The optimal value of the optimization problem \eqref{eq:minimax-problem} equals to the following Lipschitz norm regularization problem:
%   \begin{equation}\label{problem:lip}
%     \min_{\pi:\,\hXi\to\Delta_M} \left\{ \sum_{m=1}^M  \E_{\hP_m}\left[  1-\pi_m(\xi)  \right] + \vartheta_m \norm{\pi}_{\Lip,\hP_m} \right\}.
%   \end{equation}
% \end{lemma}

We first show the equivalence between the Lipschitz regularized problem \eqref{problem:lip} and the minimax problem \eqref{eq:minimax-problem}. Denote by $v_{\Lip}$ the optimal value of \eqref{problem:lip} and $v_{\dual}$ the optimal value of \eqref{eq:v_dual}.

Observe that if $ \norm{\pi_m}_{\Lip} \le \lambda_m$, then
\[
\max_{\xi\in\hXi} \left\{ 1-\pi_m(\xi) - \lambda_m c(\xi,\widehat{\xi}) \right\} = 1 - \pi_m(\hxi), \ \forall \hxi\in\hXi.
\]
Therefore, we have
\[
\begin{aligned}
v_{\dual} \leq & \min_{\pi:\,\hXi\to\Delta_M,\;(\lambda_m)_{1\le m\le M}\ge0,\; \norm{\pi_m}_{\Lip} \le \lambda_m} \left\{ \sum_{m=1}^M \lambda_m \vartheta_m + \E_{\hxi\sim\hP_m}\left[ \max_{\xi\in\hXi} \left\{ 1-\pi_m(\xi) - \lambda_m c(\xi,\hxi) \right\} \right] \right\} \\
= & \min_{\pi:\,\hXi\to\Delta_M}\left\{ \sum_{m=1}^M \norm{\pi_m}_{\Lip} \vartheta_m + \E_{\hxi\sim\hP_m}\left[  1-\pi_m(\hxi) \right]\right\}\\
= & v_{\Lip}.
\end{aligned}
\]
If we can show $v_{\dual} \ge v_{\Lip}$, then we prove the equivalence between \eqref{problem:lip} and \eqref{eq:v_dual}, thus the equivalence between \eqref{problem:lip} and \eqref{eq:minimax-problem}.
%   and
%   \begin{equation}\label{eq:lip}
%     v_{dual}= \min_{(\lambda_m)_{1\le m\le M}\ge0} \min_{\pi:\,\hXi\to\Delta_M,\norm{\pi_m}_{\Lip,\hP_m} \ge \lambda_m } \left\{ \sum_{m=1}^M \lambda_m \vartheta_m + \E_{\hxi\sim\hP_m}\left[ \max_{\xi\in\hXi} \left\{ 1-\pi_m(\xi) - \lambda_m c(\xi,\hxi) \right\} \right] \right\}.
%   \end{equation}

Let $(\pi^\ast;\lambda_1^\ast,\ldots,\lambda_M^\ast)$ 
be a dual minimizer of problem \eqref{eq:v_dual}, whose existence is ensured by \cite{Gao2016}.

Define 
\[
\phi_m(\xi) := \max_{\txi\in\hXi} \left\{ 1-\pi_m^\ast(\txi) -\lambda_m^\ast c(\txi,\xi)\right\}, \quad m=1,\ldots,M.
\]
Then it follows that
\[
\pi_m^\ast(\txi) \ge 1-\lambda_m^\ast c(\txi,\xi) - \phi_m(\xi) ,\ \ \forall \xi,\txi\in\hXi,\ m=1,\ldots,M.
\]
Define
\[
\tpi_m(\txi) := \max_{\xi\in\hXi}\{ 1 -\lambda_m^\ast c(\xi,\txi)-\phi_m(\xi)  \},\quad m=1,\ldots,M.
\]
Then by definition, $\norm{\tpi_m}_{\Lip} \le \lambda_m^\ast$. Indeed, for any $\xi,\txi\in\hXi$, there exists a $\xi_0\in\argmax_{\hxi\in\hXi}\{ 1 -\lambda_m^\ast c(\hxi,\xi)-\phi_m(\hxi)\}$ such that:
\[
\begin{aligned}
\tpi_m(\xi)-\tpi_m(\txi) = & 1 -\lambda_m^\ast c(\xi_0,\xi)-\phi_m(\xi_0) - \tpi_m(\txi) \\
\leq &  
[1 -\lambda_m^\ast c(\xi_0,\xi)-\phi_m(\xi_0) ] -[
1 -\lambda_m^\ast c(\xi_0,\txi)-\phi_m(\xi_0) ]\\
= & \lambda_m^\ast c(\xi_0,\txi) - \lambda_m^\ast c(\xi_0,\xi) \\
\leq & \lambda_m^\ast c(\xi,\txi).
\end{aligned}
\]
Furthermore, since $\tpi_m(\txi) \ge 1 -\lambda_m^\ast c(\xi,\txi)-\phi_m(\xi)$, $\forall \xi,\txi$, we have $\phi_m(\xi) \ge 1-\tpi_m(\txi)-\lambda_m^\ast c(\xi,\txi),\ \forall \txi \in \hXi$. 
Hence, we have $\phi_m(\xi) \ge \max_{\txi\in\hXi} \left\{1-\tpi_m(\txi)-\lambda_m^\ast c(\xi,\txi)\right\}$.
Recall that $(\pi^\ast;\lambda_1^\ast,\ldots,\lambda_M^\ast)$ is a dual minimizer of problem \eqref{eq:v_dual}:
\[
v_{\dual}:=  \sum_{m=1}^M \lambda_m^\ast \vartheta_m + \E_{\hxi\sim\hP_m}\left[ \max_{\xi\in\hXi} \left\{ 1-\pi_m^\ast(\xi) - \lambda_m^\ast c(\xi,\hxi) \right\} \right] = \sum_{m=1}^M \lambda_m^\ast \vartheta_m + \E_{\hxi\sim\hP_m}\left[ \phi_m(\hxi) \right],
\]
thus
\[
v_{\dual} \geq \sum_{m=1}^M \lambda_m^\ast \vartheta_m + \E_{\hxi\sim\hP_m}\left[ \max_{\xi\in\hXi} \left\{1-\tpi_m(\xi)-\lambda_m^\ast c(\xi,\hxi)\right\}\right] = \sum_{m=1}^M \lambda_m^\ast \vartheta_m + \E_{\hxi\sim\hP_m}\left[1-\tpi_m(\hxi)\right].
\]
Since $v_{\dual}$ is the minimum value, this means that if $\tpi$ is a feasible solution, then it is also an optimal solution to \eqref{eq:v_dual}. 

Next we verify $\tpi$ is a feasible classifier, i.e., it satisfies $0\leq\tpi(\xi)\leq 1$ and $\sum_{m=1}^M\tpi_m(\xi)= 1,\forall \xi$. First, by definition, $\pi^\ast_m(\txi) \ge \tpi_m(\txi),\ \forall \txi \in \hXi$, therefore $\tpi_m(\txi)\leq 1$ and $\sum_{m=1}^M \tpi_m(\txi) \le 1$.
If we are able to show that
\begin{equation}\label{eq:lip:>=1}
\sum_{m=1}^M \tpi_m(\txi) \ge 1,\ \ \forall \txi \in \hXi,
\end{equation}
then we can show $\tpi$ is indeed a feasible classifier.

To show \eqref{eq:lip:>=1}, first note that if $\pi_m^\ast(\txi)=0$, then we have by definition $\tpi_m(\txi)=0$. Moreover, for any $\txi\in\hXi$, there is a set $\cM_0\subset\{1,\ldots,M\}$ such that for all $m\in\cM_0$, $\pi_m^\ast(\txi)>0$, and the worst-case distribution $P_m^\ast$ transports probability mass from $\supp\hP_m$ to $\txi$, which suggests that there exits $\hxi_m\in\supp\hP_m$ such that
\[
\txi \in \argmax_{\xi\in\hXi}\{  1 - \pi_m^\ast(\xi) - \lambda_m^\ast c(\xi,\hxi_m)\}.
\]
It follows from the definition of $\phi_m$ that 
\[
\sum_{m\in\cM_0} \phi_m(\hxi_m) = \sum_{m\in\cM_0} \left(1- \pi_m^\ast(\txi)-\lambda_m^\ast c(\txi,\hxi_m) \right).
\]
Meanwhile, by definition of $\tpi_m$,
\[
\sum_{m\in\cM_0} \tpi_m(\txi) \ge \sum_{m\in\cM_0} \left(1-\lambda_m^\ast c(\txi,\hxi_m) - \phi_m (\hxi_m)\right)=\sum_{m\in\cM_0} \pi_m^\ast(\txi)=1.
\]
Thereby we have shown \eqref{eq:lip:>=1}. The proof is completed by noting that the optimal solution $\tpi$ satisfies $\norm{\tpi_m}_{\Lip} \le \lambda_m^\ast$ and thus $v_{\dual}\ge v_{\Lip}$. Combine with the previous result that $v_{\dual} \le v_{\Lip}$, we have shown $v_{\dual} = v_{\Lip}$ and the proof is completed.

\end{proof}

\subsection{Proof of Corollary~\ref{cor}}
\begin{proof}
  Using the proof of Theorem \ref{thm:lip}, there exists a classifier $\tpi$ that satisfies $\norm{\tpi_m}_{\Lip} \leq \lambda_m^\ast$ which is the optimal solution to problem \eqref{problem:lip} and \eqref{eq:v_dual}, and thus is an optimal robust classifier to problem \eqref{eq:minimax-problem}. Moreover, based on the proof of Theorem \ref{thm:robust-knn}, we know that the set of weighted $k$-NN classifiers is exhaustive and there exist 
  weight functions $w^*=\{w^*_1,\ldots,w^*_M\}$ such that $\pi^{\knn}(\cdot;1,w^*)$ is equivalent to $\tpi$. Therefore, $\pi^{\knn}(\cdot;1,w^*)$ is an optimal solution satisfying
  \[
    \norm{\pi^{\knn}_m(\cdot;1,w^*)}_{\Lip} \le \lambda_m^\ast,\quad m=1,\ldots,M.
  \]
  
  It was shown in \cite{von2004distance} that the generalization gap of Lipschitz classifiers is bounded by the corresponding Rademacher complexity. In our setting, a direct consequence of 
  \cite{von2004distance}
   shows that the generalization gap of $\pi^{\knn}(\cdot;1,w^*)$ is controlled by $\max_{1\le m\le M} \lambda_m^\ast \cdot \Rad_n(\Lip(\Xi))$, where $\max_{1\le m\le M} \lambda_m^\ast$ denotes the maximum Lipschitz norm of the classifier $\{\pi^{\knn}_m\}_m$ and $\Rad_n(\Lip(\Xi))$ denotes the Rademacher complexity of the class of $1$-Lipshitz functions on the sample space $\Xi$.
  Moreover, note that the optimal dual minimizer $\lambda_m^*$ satisfies $\lambda_m^*\leq 1/\vartheta_m$, $\forall m$. Indeed, if $\lambda_m^*> 1/\vartheta_m$, then the objective value in the $m$-th term in \eqref{eq:v_dual} is larger than $1$, which is clearly not optimal.
  %Indeed, $\vartheta_m$ is the total budget of moving probability masses from the empirical distribution $\hP_m$ to the least favorable distribution $P_m^\ast$. If there is no tie on the resulting LFDs, meaning that the optimal classifier $\tpi_m$ is only supported on the support of $\hP_m$, thus by the constraint $\mathcal{W}(P_m, \widehat{P}_m) \le \vartheta_m$, we have  
  Thereby we have $\max_{1\le m\le M} \lambda_m^\ast = \frac{1}{ \min_{1\le m\le M}\vartheta_m}$ and we complete the proof.
  
%   In comparison, the generalization bound of the vanilla 1-NN is controlled by $\frac{2}{d(X_+,X_-)}\Rad_n(\Lip([0,1]^d))$ in the binary class setting, where $d(X_{+},X_{-})$ is the minimum distance between the training samples within class/label $y=+1$ and the samples within class/label $y=-1$. Here in the multi-class setting, this generalizes to $\frac{2}{\min_{1\leq m<m'\leq M}d(X_{m},X_{m'})}\Rad_n(\Lip([0,1]^d))$, where $X_{m}:=\{\xi_i: y_i = m\}$ denotes the features within the $m$-th class.
  
%   Therefore, the generalization gap of the distributionally robust $1$-NN classifier will be smaller than that of the vanilla $1$-NN classifier if we have 
%   \[
%   \max_{1\le m\le M} \lambda_m^\ast \leq \frac{2}{\min_{1\leq m<m'\leq M}d(X_{m},X_{m'})}.
%   \]
  
\end{proof}

\newpage
\section{Memory-efficient implementation of \texttt{Dr.k-NN} in data-intensive scenario}
\label{sec:extend}

For the sake of completeness, we extend our algorithm to non-few-training-sample setting.
%, which we call the truncated \texttt{\texttt{Dr.k-NN}}. 
This can be particularly useful for the general classification problem with an arbitrary size of training set. In fact, $k$-NN methods notoriously suffer from computational inefficiency if the number of labeled samples $n$ is large, since it has to store and search through the entire training set \cite{Goldberger2005}.

\begin{figure}[!h]
\centering
\begin{subfigure}[h]{0.325\linewidth}
\includegraphics[width=\linewidth]{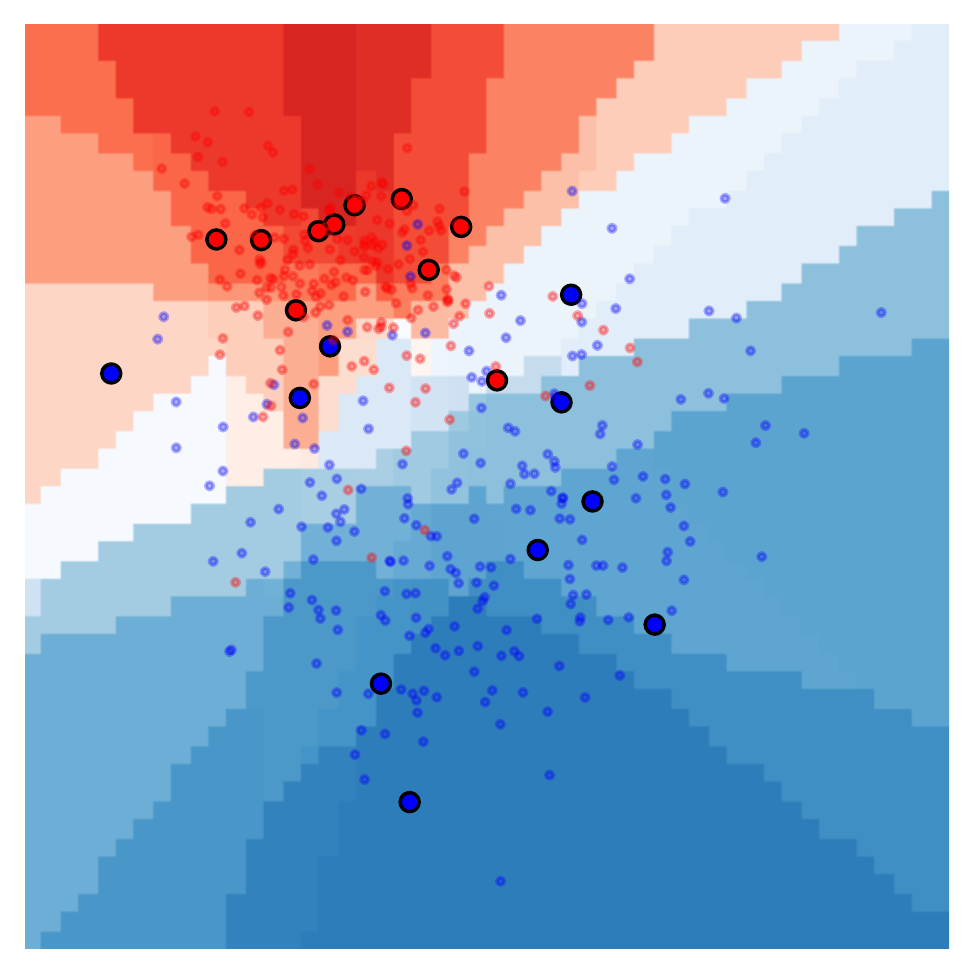}
\caption{\texttt{Dr.k-NN}}
\end{subfigure}
\begin{subfigure}[h]{0.325\linewidth}
\includegraphics[width=\linewidth]{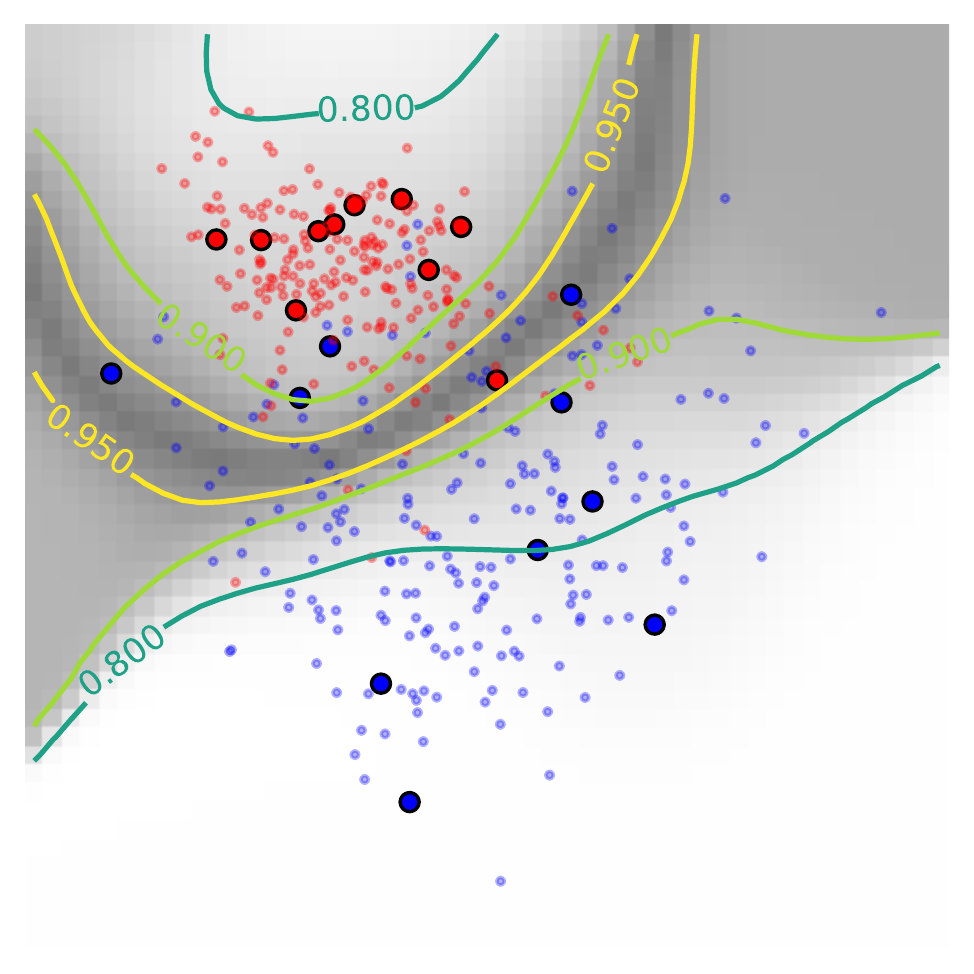}
\caption{$\tau$-truncated set}
\end{subfigure}
\begin{subfigure}[h]{0.325\linewidth}
\includegraphics[width=\linewidth]{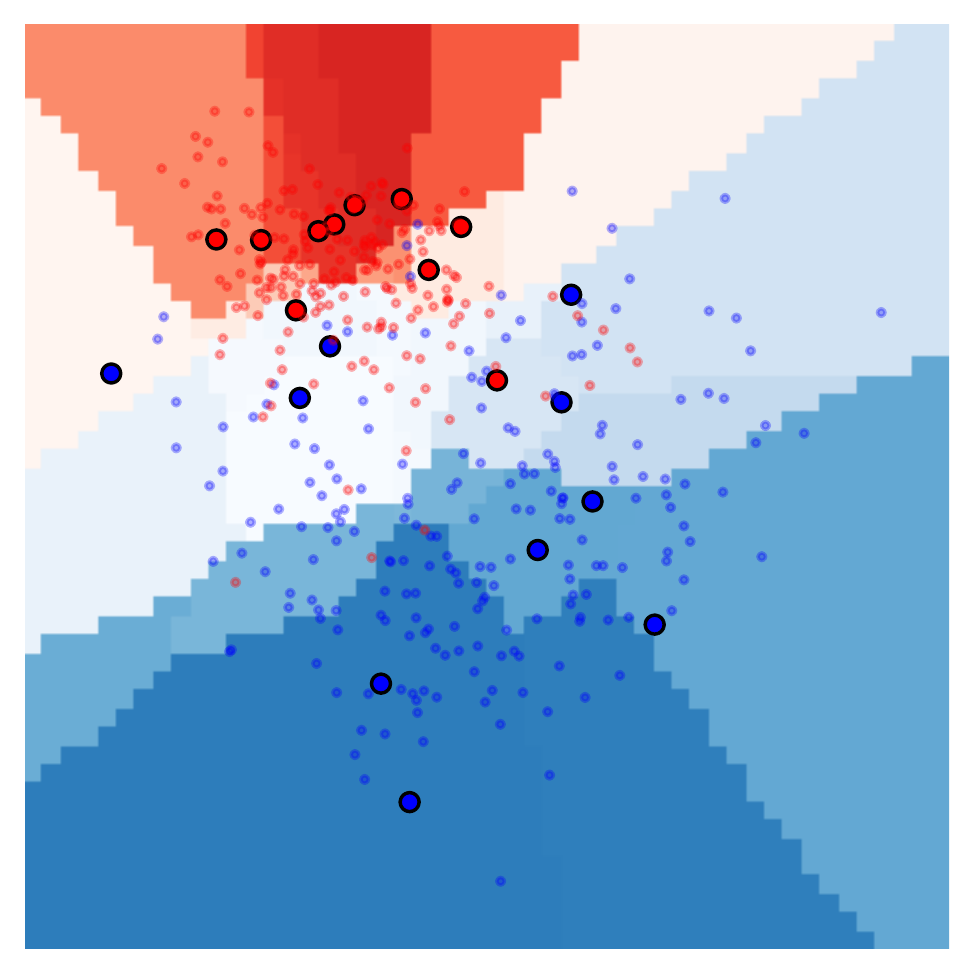}
\caption{Truncated}
\end{subfigure}
\caption{
An example of the truncated \texttt{Dr.k-NN} using MNIST (digit 4 (red) and 9 (blue)). Big dots represent training samples and small dots represent query samples. (a) shows the decision made by \texttt{Dr.k-NN}; (c) shows the decision made by the truncated \texttt{Dr.k-NN} with truncation level $\tau = 0.9$; (b) shows $\tau$-truncated regions with $\tau=0.95, 0.9, 0.8$. Big dots between the lines are selected training samples under different $\tau$. The depth of the shaded area shows the level of samples entropy.}
\label{fig:truncated-dr-knn}
\end{figure}

The main idea is to only keep the training samples that are important in deciding the decision boundary based on the maximum entropy principle \cite{Cover2006}.
As a measure of importance, we choose the samples with the largest entropy across all categories, based on the intuition that the samples with higher entropy has larger uncertainty and will be more useful for classification purposes since they tend to lie on the decision boundary. The entropy of a sample is defined as follows. Consider a random variable which takes value $m$ with probability $\pi_m$, $\sum_{i=1}^M \pi_m = 1$; 
then the entropy of this random variable is define as 
\[
H(\pi_1, \ldots, \pi_{M}) = - \sum_{m=1}^M  \pi_m \log \pi_m.
\]
As a simple example, for Bernoulli random variable (which can represent, e.g., the outcome for flipping a coin with bias $p$), the entropy function is $H(p) = -p \log p - (1-p) \log (1-p)$, and it is a concave function achieving the maximum at $p^* = 1/2$, which means that the fair-coin has the maximum entropy; this is intuitive as indeed the outcome of a fair coin toss is the most difficult to predict.  
Now we use this entropy to define the ``uncertainty'' associated with each training points. With a little abuse of notation, define 
\[
H(\widehat \xi):= H(\pi_1(\widehat \xi), \ldots, \pi_{M}(\widehat \xi)).
\]
Denote the minimal and maximal entropy of all the training points as \[
H_\text{min} = \min\{H(\widehat\xi),\widehat\xi \in \widehat\Xi\}, \quad H_\text{max} = \max \{H(\widehat\xi),\widehat\xi \in \widehat\Xi\}.
\] 
Define the $\tau$-truncated training set as 
$$
    \widetilde{\Xi} = \{\widehat{\xi} \in \hXi: (H( \widehat\xi) - H_\text{min}) / (H_\text{max} - H_\text{min}) \ge \tau \}, \forall \tau \in [0, 1].$$
The truncated \texttt{Dr.k-NN} is obtained similarly as Step 2 of \texttt{Dr.k-NN} by restricting the training set $\widehat\Xi$ only to the samples in $\widetilde{\Xi}$ (samples with larger entropy). 
Figure~\ref{fig:truncated-dr-knn} reveals that the most informative samples usually lie in between categories. We can see that a truncated \texttt{Dr.k-NN} classifier with $\tau=0.9$ only uses $20 \%$ samples with little performance loss. More experimental details is presented in Section~\ref{sec:exp}.

\newpage

\section{Comparison to kernel smoothing}
\label{append:comparison-kernel-smoothing}

Figure~\ref{fig:dr-knn-with-k} and Figure~\ref{fig:kernel-smoothing-with-h} present a comparison of the results using \texttt{Dr.k-NN} and the kernel smoothing defined in \eqref{eq:kernel-smoothing}. The results suggest that the performance of \texttt{Dr.k-NN} is insensitive to the choice of $k$, while the performance of the kernel smoothing is heavily depended on the choice of $h$.

\begin{figure*}[!h]
\centering
\begin{subfigure}{0.2\linewidth}
\includegraphics[width=\linewidth]{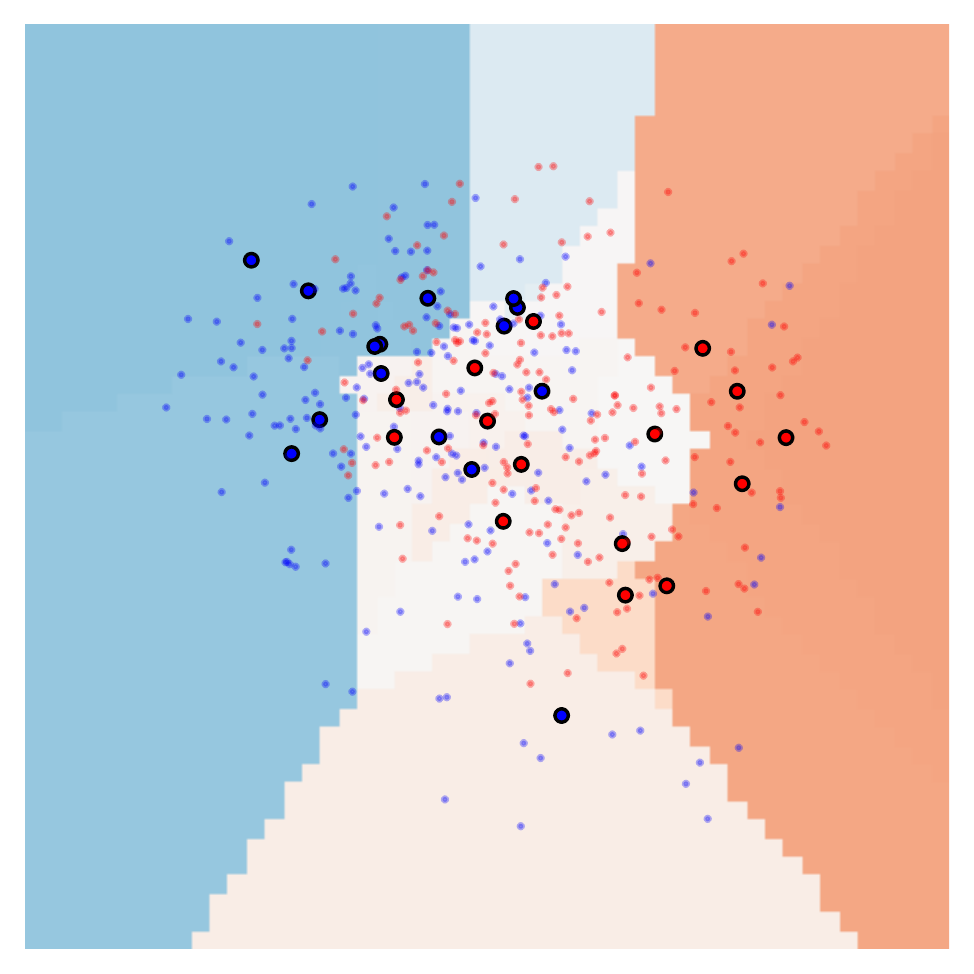}
\caption{$k=1$}
\end{subfigure}
\begin{subfigure}{0.2\linewidth}
\includegraphics[width=\linewidth]{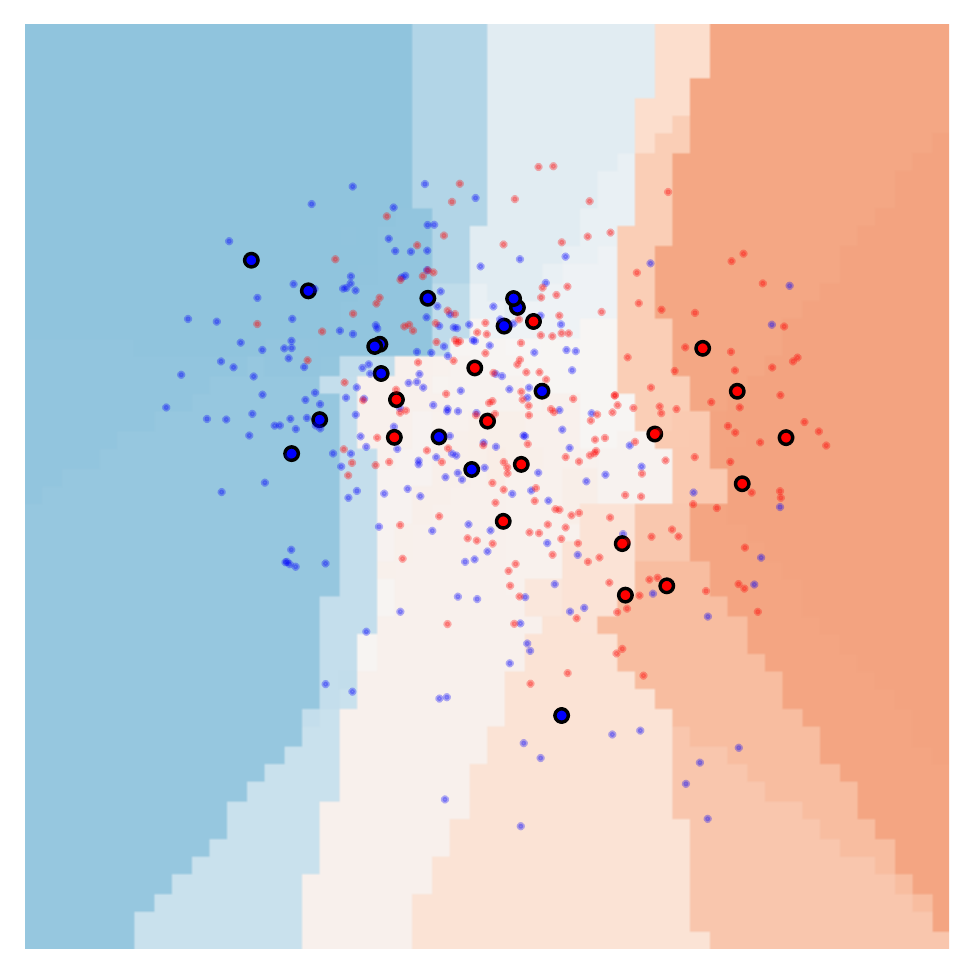}
\caption{$k=2$}
\end{subfigure}
\begin{subfigure}{0.2\linewidth}
\includegraphics[width=\linewidth]{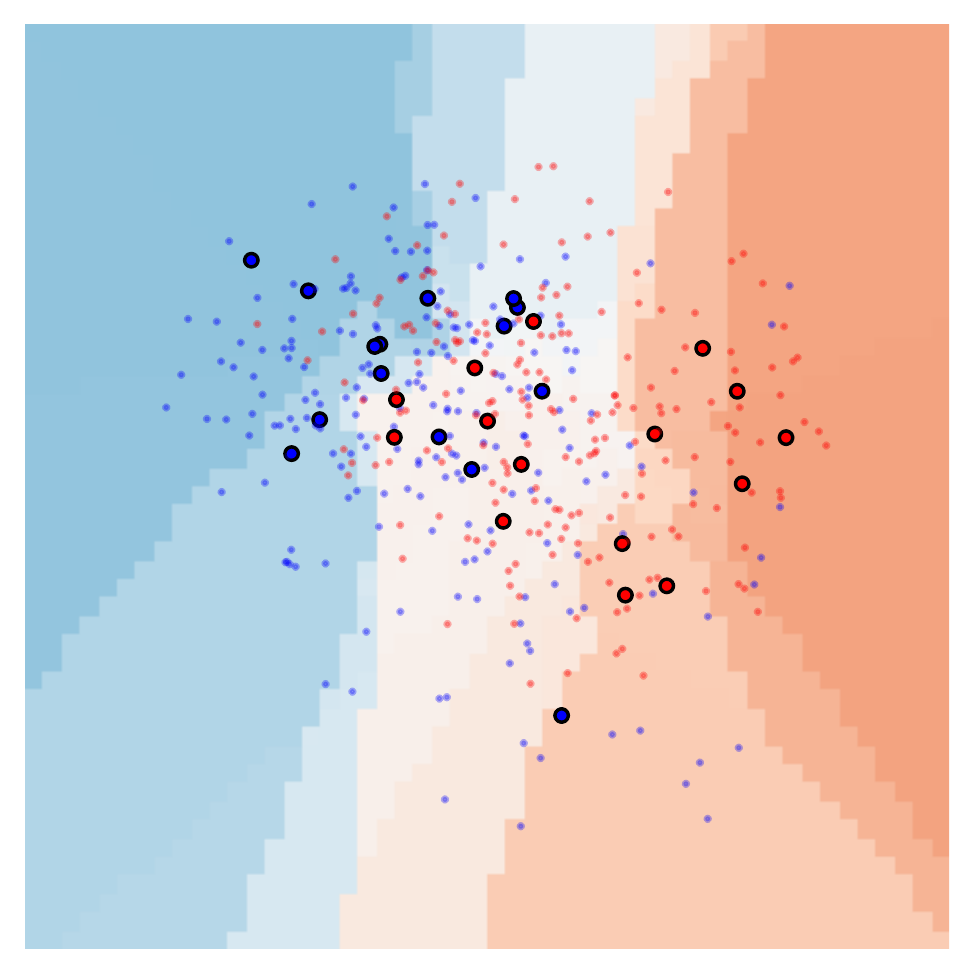}
\caption{$k=3$}
\end{subfigure}
\begin{subfigure}{0.2\linewidth}
\includegraphics[width=\linewidth]{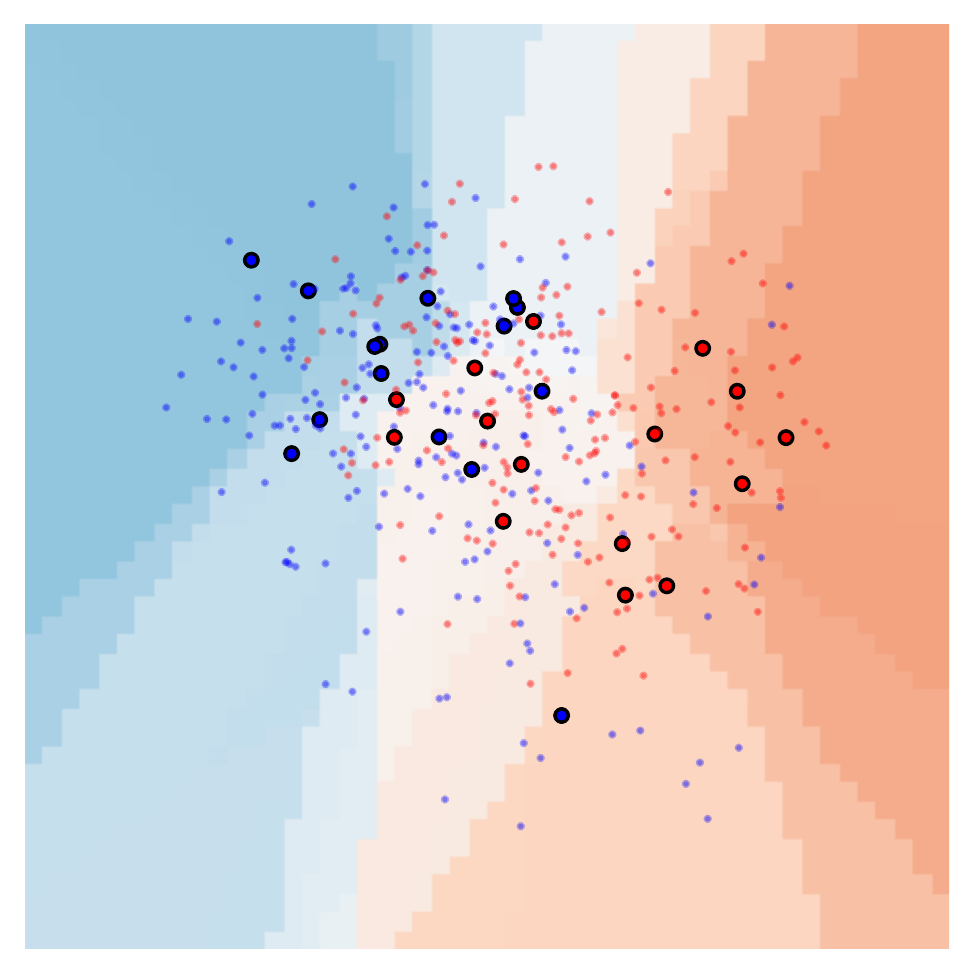}
\caption{$k=4$}
\end{subfigure}
\vfill
\begin{subfigure}{0.2\linewidth}
\includegraphics[width=\linewidth]{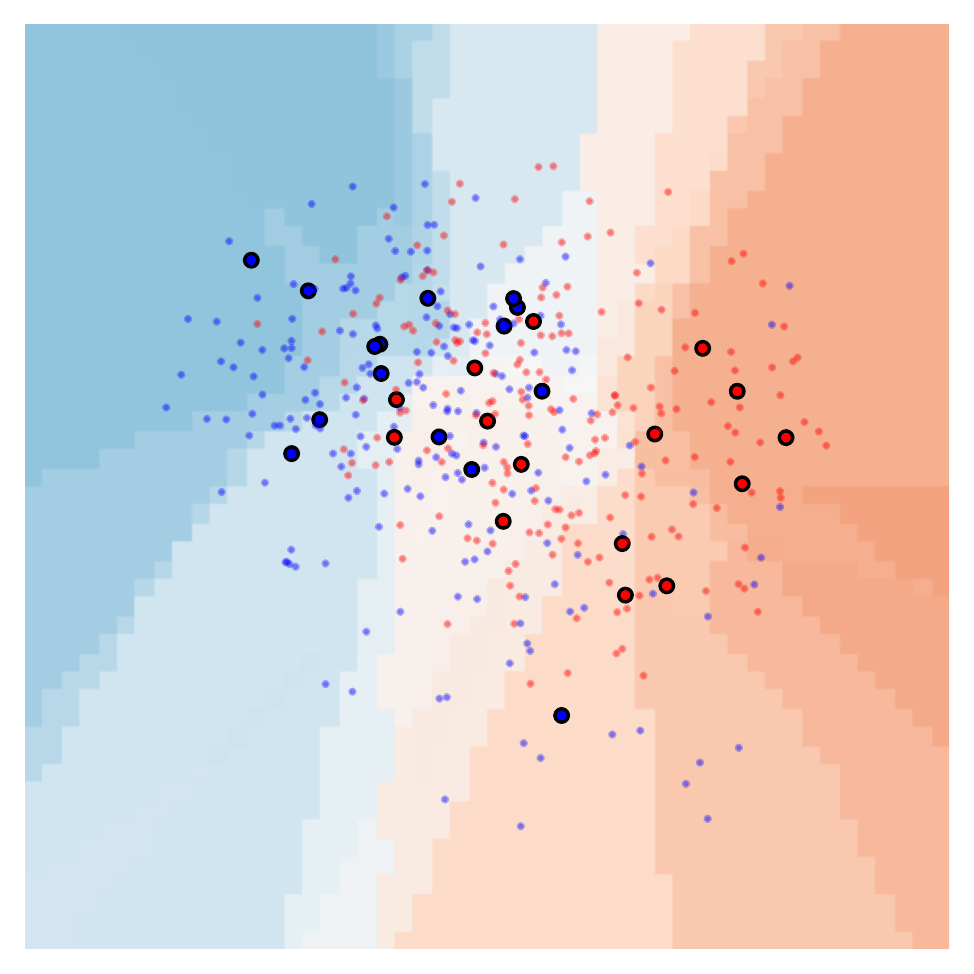}
\caption{$k=5$}
\end{subfigure}
\begin{subfigure}{0.2\linewidth}
\includegraphics[width=\linewidth]{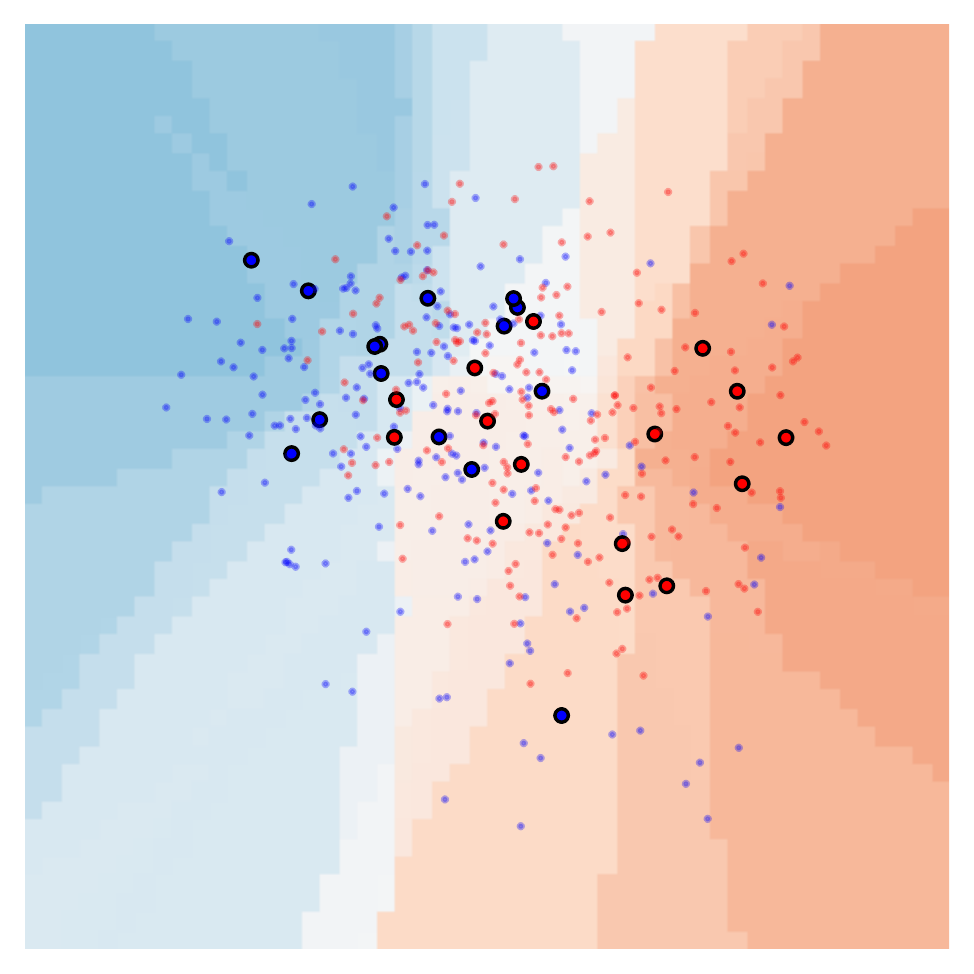}
\caption{$k=6$}
\end{subfigure}
\begin{subfigure}{0.2\linewidth}
\includegraphics[width=\linewidth]{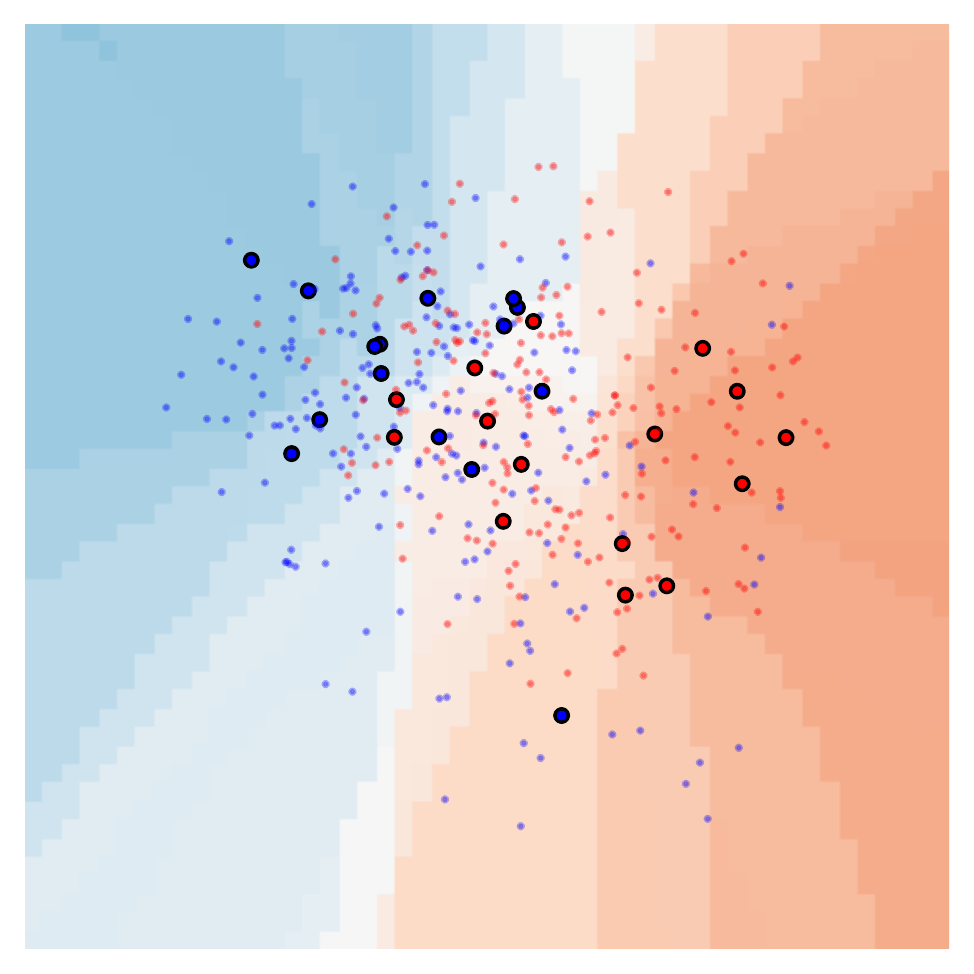}
\caption{$k=7$}
\end{subfigure}
\begin{subfigure}{0.2\linewidth}
\includegraphics[width=\linewidth]{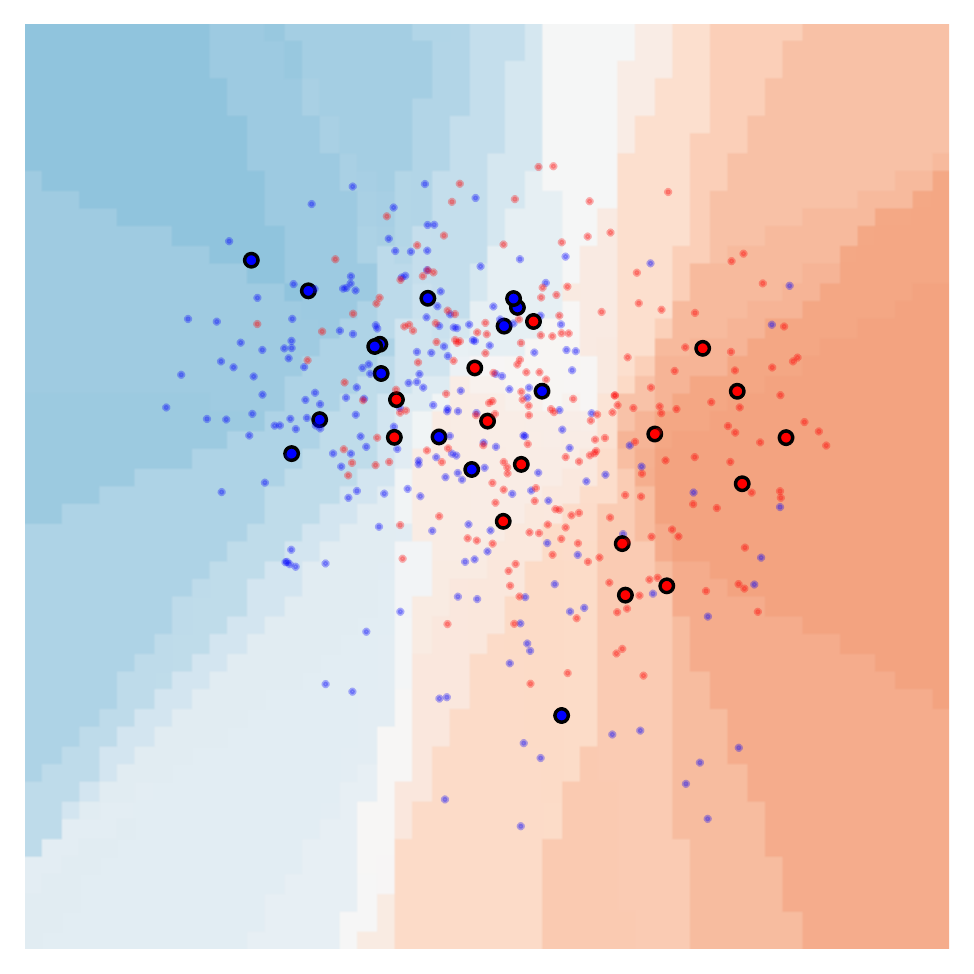}
\caption{$k=8$}
\end{subfigure}
\caption{\texttt{Dr.k-NN} with different $k$.}
\label{fig:dr-knn-with-k}
\end{figure*}

\begin{figure*}[!h]
\centering
\begin{subfigure}{0.2\linewidth}
\includegraphics[width=\linewidth]{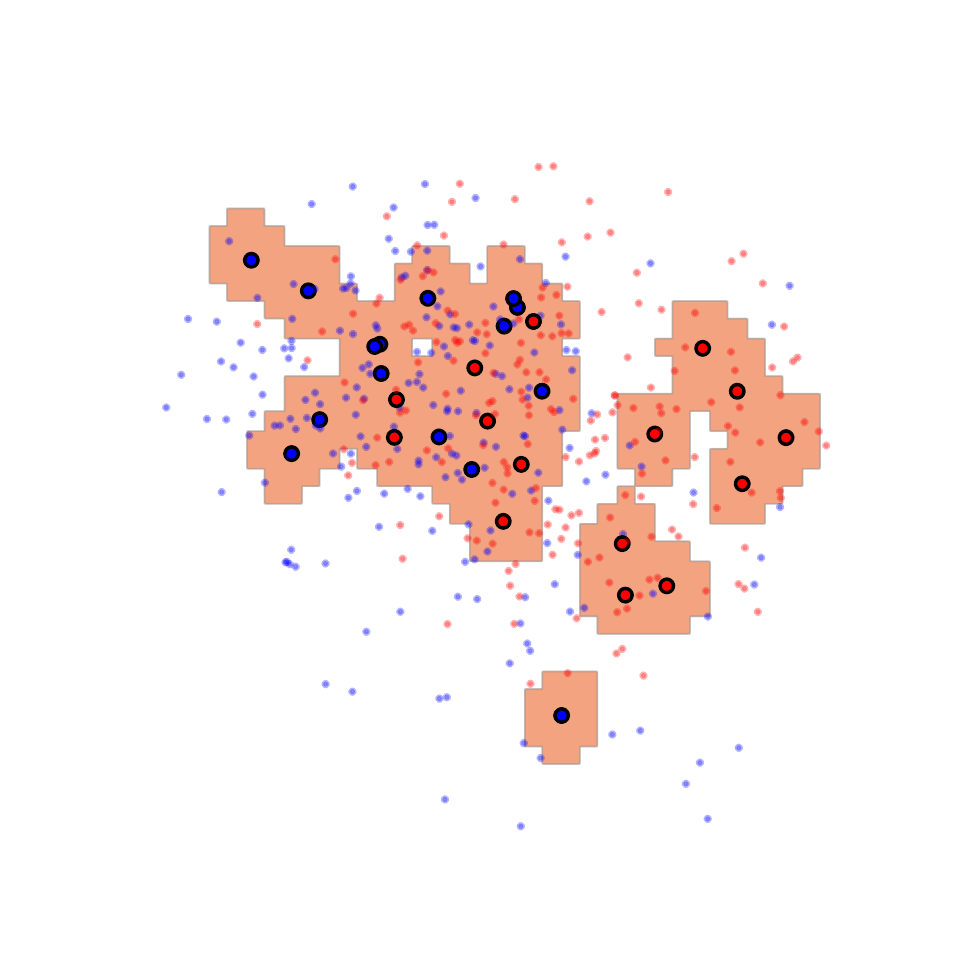}
\caption{$h=10^{-4}$}
\end{subfigure}
\begin{subfigure}{0.2\linewidth}
\includegraphics[width=\linewidth]{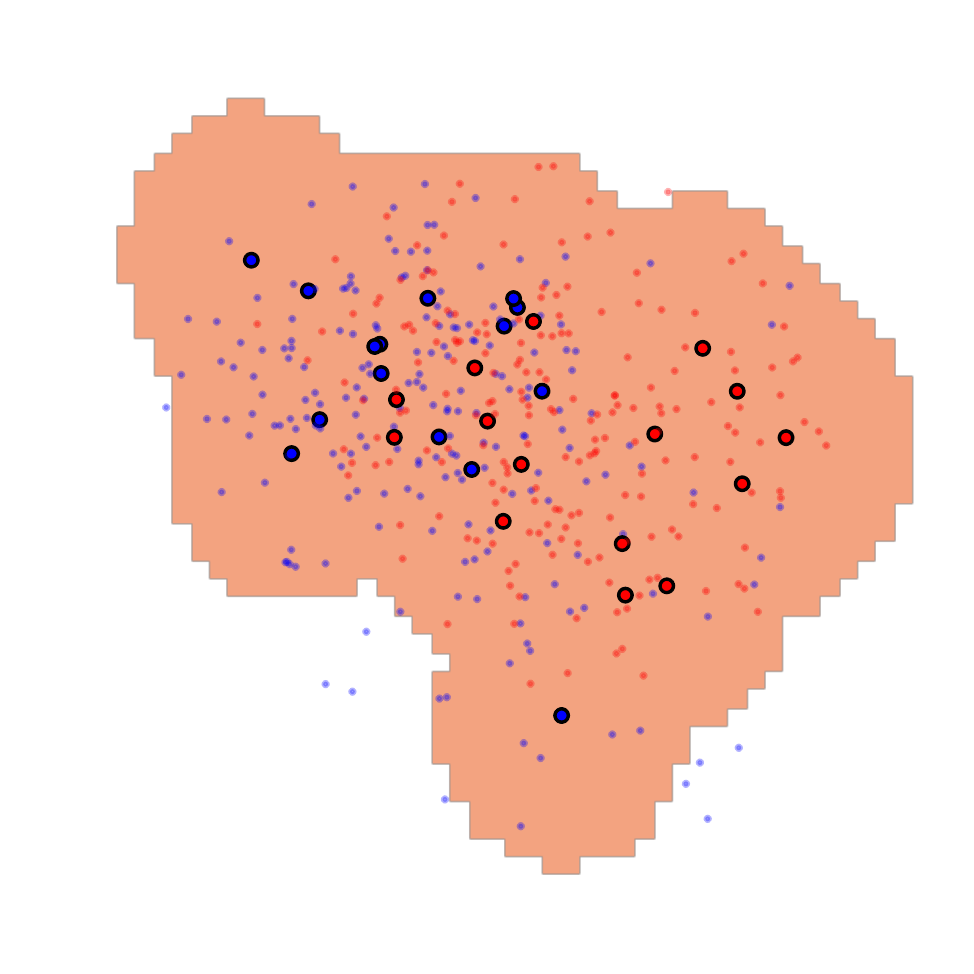}
\caption{$h=10^{-3}$}
\end{subfigure}
\begin{subfigure}{0.2\linewidth}
\includegraphics[width=\linewidth]{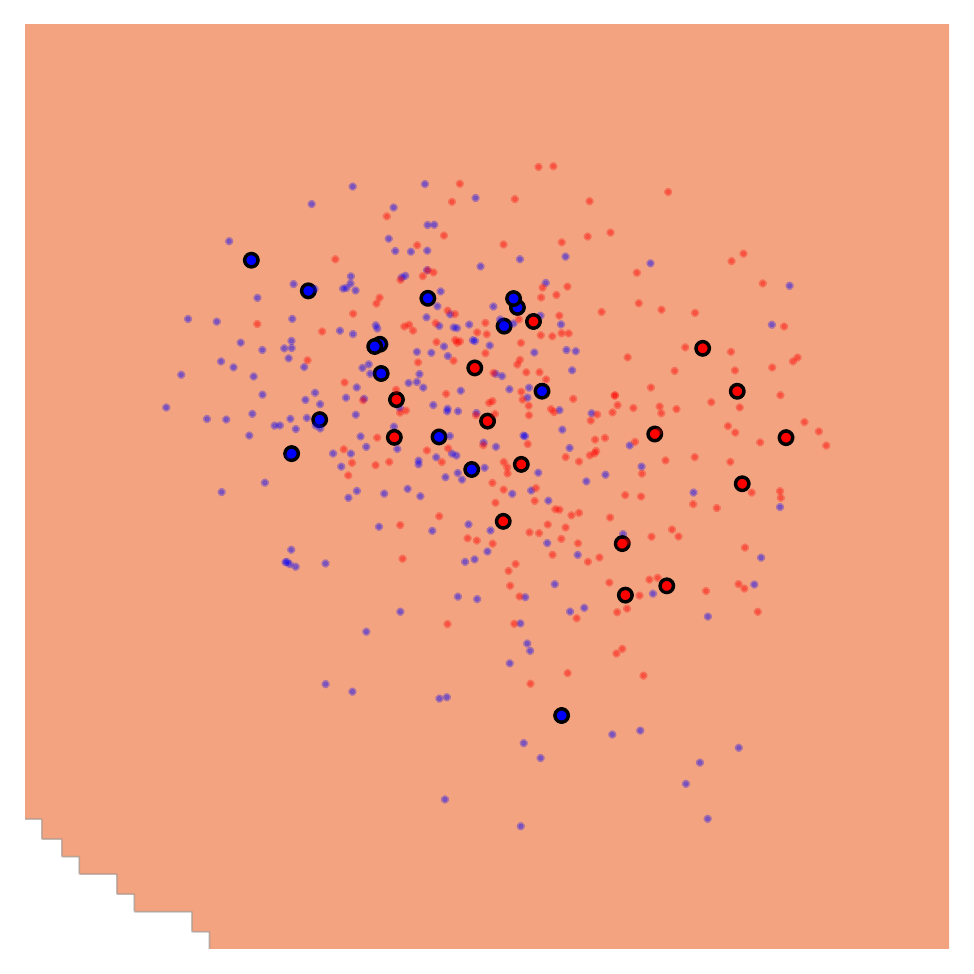}
\caption{$h=10^{-2}$}
\end{subfigure}
\begin{subfigure}{0.2\linewidth}
\includegraphics[width=\linewidth]{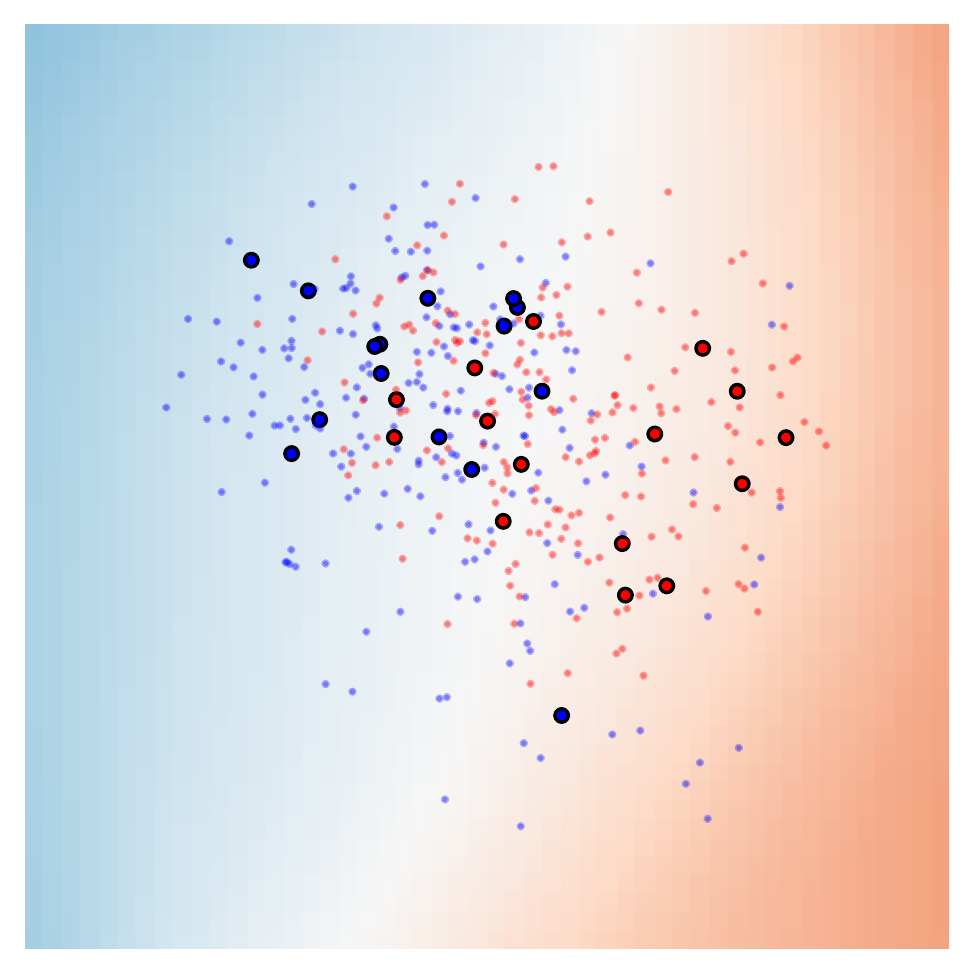}
\caption{$h=10^{-1}$}
\end{subfigure}
\vfill
\begin{subfigure}{0.2\linewidth}
\includegraphics[width=\linewidth]{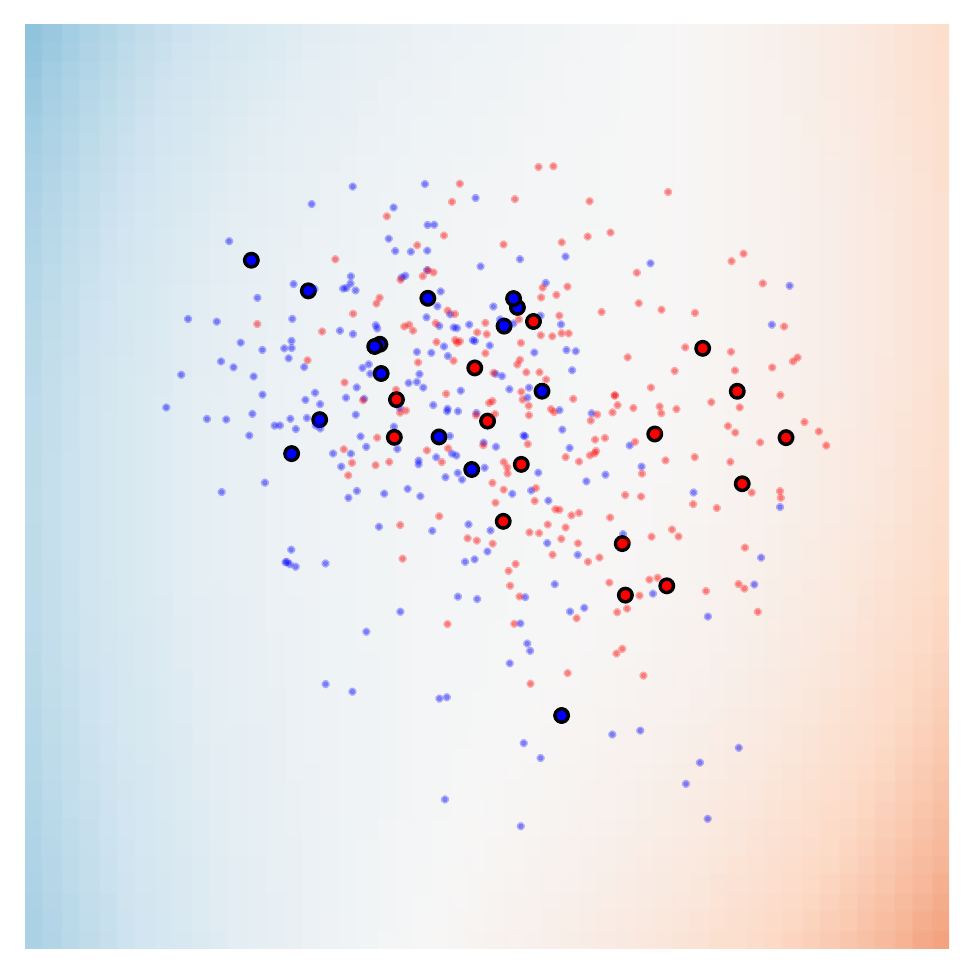}
\caption{$h=1$}
\end{subfigure}
\begin{subfigure}{0.2\linewidth}
\includegraphics[width=\linewidth]{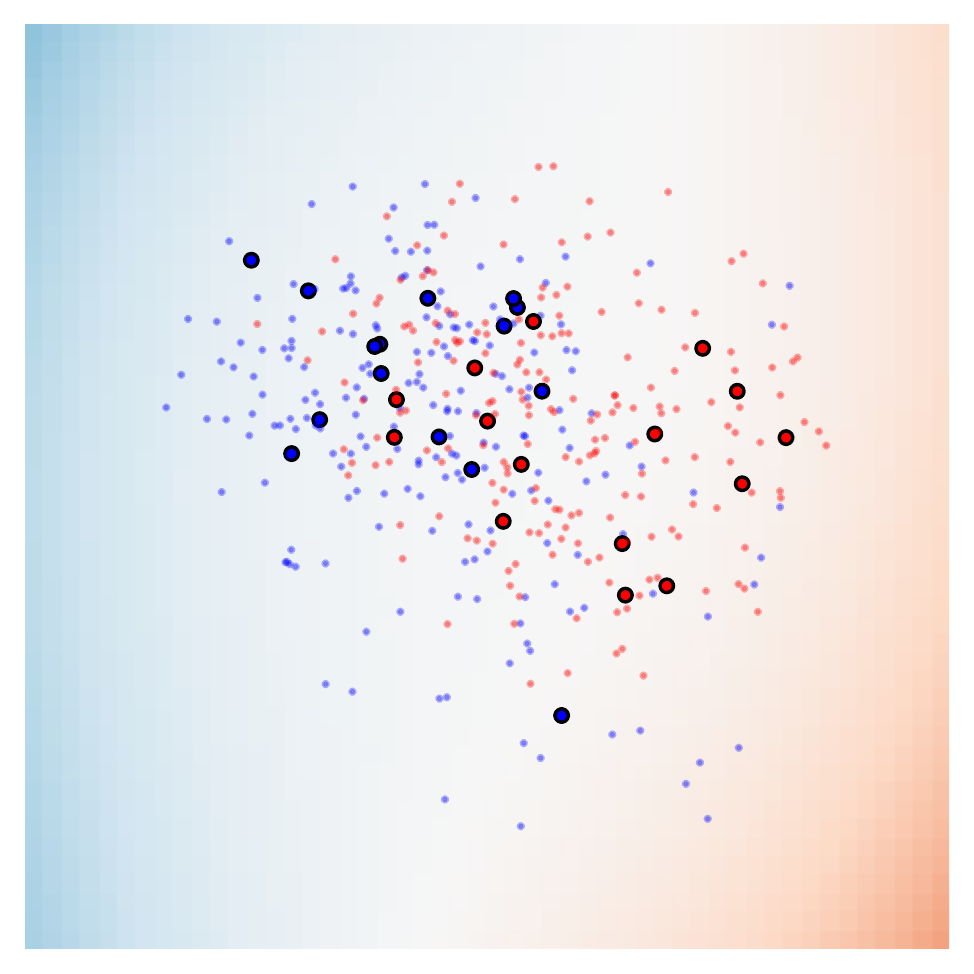}
\caption{$h=10$}
\end{subfigure}
\begin{subfigure}{0.2\linewidth}
\includegraphics[width=\linewidth]{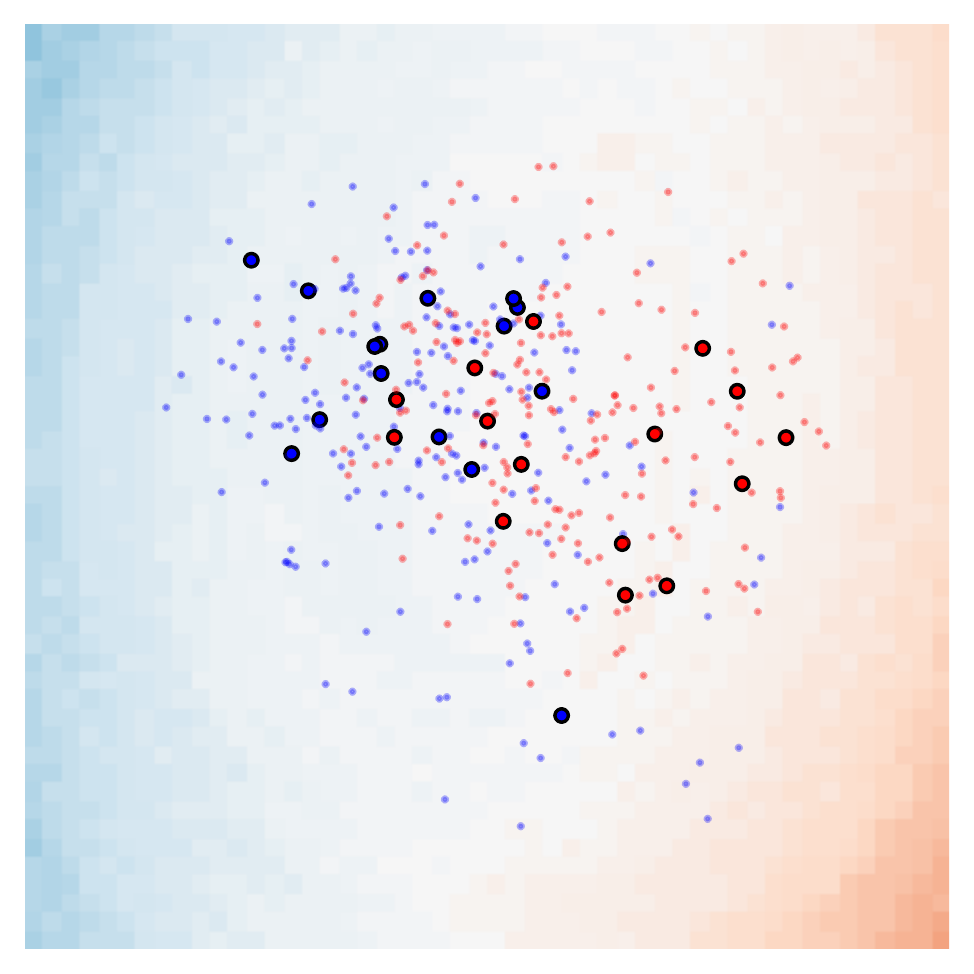}
\caption{$h=10^{2}$}
\end{subfigure}
\begin{subfigure}{0.2\linewidth}
\includegraphics[width=\linewidth]{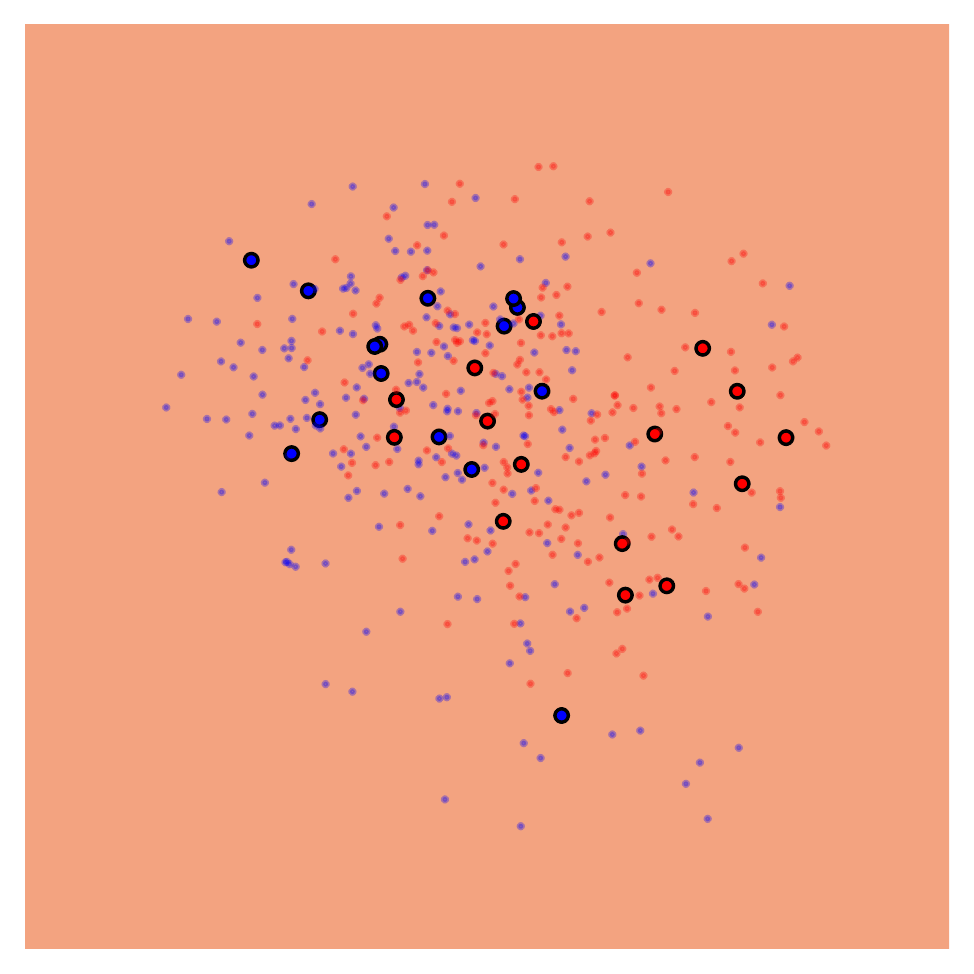}
\caption{$h=10^{3}$}
\end{subfigure}
\caption{Kernel smoothing with different bandwidth $h$.}
\label{fig:kernel-smoothing-with-h}
\end{figure*}

%%%%%%%%%%%%%%%%%%%%%%%%%%%%%%%%%%%%%%%%%%%%%%%%%%%%%%%%%%%%%%%%%%%%%%%%%%%%%%%
%%%%%%%%%%%%%%%%%%%%%%%%%%%%%%%%%%%%%%%%%%%%%%%%%%%%%%%%%%%%%%%%%%%%%%%%%%%%%%%

\newpage

\section{Real data examples for COVID-19 CT}
\label{sec:covid19-ct}

Figure~\ref{fig:covid10-ct} and Figure~\ref{fig:non-covid10-ct} show 16 real CT images collected from patients who have been diagnosed with COVID-19 and other diseases (non-COVID-19), respectively.

\begin{figure*}[!h]
\centering
\begin{subfigure}{0.24\linewidth}
\includegraphics[width=\linewidth]{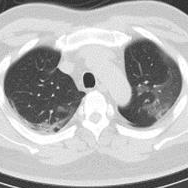}
% \caption{$h=10^{-4}$}
\end{subfigure}
\begin{subfigure}{0.24\linewidth}
\includegraphics[width=\linewidth]{imgs/Covid_data/COVID-10/Covid1.png}
% \caption{$h=10^{-3}$}
\end{subfigure}
\begin{subfigure}{0.24\linewidth}
\includegraphics[width=\linewidth]{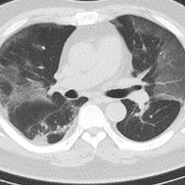}
% \caption{$h=10^{-2}$}
\end{subfigure}
\begin{subfigure}{0.24\linewidth}
\includegraphics[width=\linewidth]{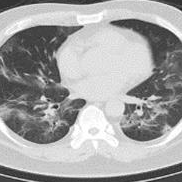}
% \caption{$h=10^{-1}$}
\end{subfigure}
\vfill
\begin{subfigure}{0.24\linewidth}
\includegraphics[width=\linewidth]{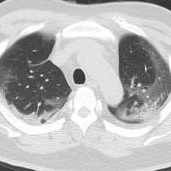}
% \caption{$h=1$}
\end{subfigure}
\begin{subfigure}{0.24\linewidth}
\includegraphics[width=\linewidth]{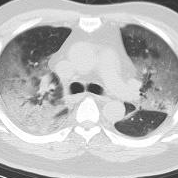}
% \caption{$h=10$}
\end{subfigure}
\begin{subfigure}{0.24\linewidth}
\includegraphics[width=\linewidth]{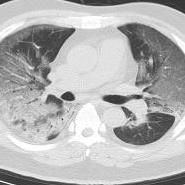}
% \caption{$h=10^{2}$}
\end{subfigure}
\begin{subfigure}{0.24\linewidth}
\includegraphics[width=\linewidth]{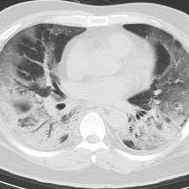}
% \caption{$h=10^{3}$}
\end{subfigure}
\caption{COVID-19 CT images.}
\label{fig:covid10-ct}
\end{figure*}

\begin{figure*}[!h]
\centering
\begin{subfigure}{0.24\linewidth}
\includegraphics[width=\linewidth]{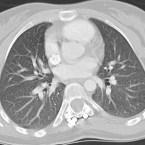}
% \caption{$h=10^{-4}$}
\end{subfigure}
\begin{subfigure}{0.24\linewidth}
\includegraphics[width=\linewidth]{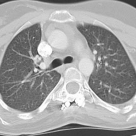}
% \caption{$h=10^{-3}$}
\end{subfigure}
\begin{subfigure}{0.24\linewidth}
\includegraphics[width=\linewidth]{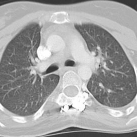}
% \caption{$h=10^{-2}$}
\end{subfigure}
\begin{subfigure}{0.24\linewidth}
\includegraphics[width=\linewidth]{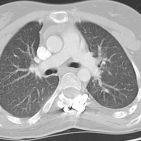}
% \caption{$h=10^{-1}$}
\end{subfigure}
\vfill
\begin{subfigure}{0.24\linewidth}
\includegraphics[width=\linewidth]{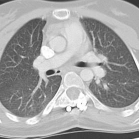}
% \caption{$h=1$}
\end{subfigure}
\begin{subfigure}{0.24\linewidth}
\includegraphics[width=\linewidth]{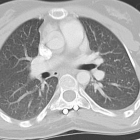}
% \caption{$h=10$}
\end{subfigure}
\begin{subfigure}{0.24\linewidth}
\includegraphics[width=\linewidth]{imgs/Covid_data/NonCOVID-10/NonCovid6.png}
% \caption{$h=10^{2}$}
\end{subfigure}
\begin{subfigure}{0.24\linewidth}
\includegraphics[width=\linewidth]{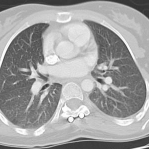}
% \caption{$h=10^{3}$}
\end{subfigure}
\caption{Non-COVID-19 CT images.}
\label{fig:non-covid10-ct}
\end{figure*}

%%%%%%%%%%%%%%%%%%%%%%%%%%%%%%%%%%%%%%%%%%%%%%%%%%%%%%%%%%%%%%%%%%%%%%%%%%%%%%%
%%%%%%%%%%%%%%%%%%%%%%%%%%%%%%%%%%%%%%%%%%%%%%%%%%%%%%%%%%%%%%%%%%%%%%%%%%%%%%%

\end{document}